\newcommand{\alg}{Dyn-D$^2$P}
\newcommand{\algbase}{Const-D$^2$P}
\newcommand{\algS}{Dyn[$C$]-D$^2$P}
\newcommand{\algmu}{Dyn[$\mu$]-D$^2$P}
\definecolor{mydarkgreen}{RGB}{20, 201, 0}
\definecolor{purple}{RGB}{128, 0, 128}
\newtheorem{Thm}{Theorem}
\newtheorem{Def}{Definition}
\newtheorem{Lem}{Lemma}
\newtheorem{Ass}{Assumption}
\newtheorem{Rem}{Remark}
\newtheorem{Cor}{Corollary}
\newtheorem{Pro}{Proposition}
\title{Dyn-D$^2$P: Dynamic Differentially Private Decentralized Learning \\with Provable Utility Guarantee}
\author{
Zehan Zhu$^1$
\and
Yan Huang$^1$
\and
Xin Wang$^2$
\and
Shouling Ji$^1$
\And
Jinming Xu$^{1}$\thanks{Corresponding author.}
\affiliations
$^1$Zhejiang University, Hangzhou, China\\
$^2$Qilu University of Technology, Jinan, China
\emails
\{12032045, huangyan5616\}@zju.edu.cn,
xinwang@qlu.edu.cn,
\{sji, jimmyxu\}@zju.edu.cn
}
\begin{document}
\maketitle

\begin{abstract}
Most existing decentralized learning methods with differential privacy (DP) guarantee rely on constant gradient clipping bounds and fixed-level DP Gaussian noises for each node throughout the training process, leading to a significant accuracy degradation compared to non-private counterparts. In this paper, we propose a new \underline{Dyn}amic \underline{D}ifferentially \underline{P}rivate \underline{D}ecentralized learning approach (termed {\alg}) tailored for general time-varying directed networks. Leveraging the Gaussian DP (GDP) framework for privacy accounting, {\alg} dynamically adjusts gradient clipping bounds and noise levels based on gradient convergence. This proposed dynamic noise strategy enables us to enhance model accuracy while preserving the total privacy budget. Extensive experiments on benchmark datasets demonstrate the superiority of {\alg} over its counterparts employing fixed-level noises, especially under strong privacy guarantees. Furthermore, we provide a provable utility bound for {\alg} that establishes an explicit dependency on network-related parameters, with a scaling factor of $1/\sqrt{n}$ in terms of the number of nodes $n$ up to a bias error term induced by gradient clipping. To our knowledge, this is the first model utility analysis for differentially private decentralized non-convex optimization with dynamic gradient clipping bounds and noise levels.
\end{abstract}

\section{Introduction}
\label{Intro}
Distributed learning has recently attracted significant attention due to its great potential in enhancing computing efficiency and has thus been widely adopted in various application domains~\cite{langer2020distributed}.
In particular, it can be typically modeled as a non-convex finite-sum optimization problem solved by a group of $n$ nodes as follows:
\begin{equation}
\label{global_loss_function}
\underset{x\in \mathbb{R}^d}{\min}f\left( x \right) \triangleq \frac{1}{n}\sum_{i=1}^n{f_i\left( x \right)}
\end{equation}
with $f_i\left( x \right) = \frac{1}{J}\sum_{j=1}^J{f_i\left( x;j \right)}$, where $J$ denotes the local dataset size of each node, $f_i(x; j)$ denotes the loss function of the $j$-th data sample at node $i$ with respect to the model parameter $x\in \mathbb{R}^d$, and $f_i\left( x \right)$ and $f\left( x \right)$ denote the local objective function at node $i$ and the global objective function, respectively. All nodes collaborate to seek the optimal model parameter to minimize the global loss $f\left( x \right)$, and each node $i$ can only evaluate local stochastic gradient $\nabla f_i\left( x;\xi_i \right)$ where $\xi_i \in \{1,2,...,J\}$ is a randomly chosen sample.

Bottlenecks such as high communication overhead and the vulnerability of central nodes in parameter server-based methods~\cite{zinkevich2010parallelized,mcmahan2017communication} motivate researchers to investigate fully decentralized methods~\cite{lian2017can,tang2018d,koloskova2019decentralized} to solve Problem~\eqref{global_loss_function}, where a central node is not required and each node only communicates with its neighbors. 
We thus consider such a fully decentralized setting in this paper, with a particular focus on general and practical time-varying directed communication networks for communication among nodes. Decentralized learning involves each node performing local stochastic gradient descent to update its model parameters, followed by communication with neighboring nodes to share and mix model parameters before proceeding to the next iteration~\cite{zhu2024r}. However, the frequent information exchange among nodes poses significant privacy concerns, as the exposure of model parameters could potentially be exploited to compromise the privacy of original data samples~\cite{wang2019beyond}. To protect each node from these potential attacks, differential privacy (DP), as a theoretical tool to provide rigorous privacy guarantees and quantify privacy loss, can be integrated into each node within decentralized learning to enhance privacy protection~\cite{cheng2018leasgd,yu2021decentralized}.

Existing decentralized learning algorithms with differential privacy guarantee for non-convex problems tend to employ a constant/fixed gradient clipping bound $\bar{C}$~\cite{yu2021decentralized,xu2021dp,li2025convergence} to estimate the $l_2$ sensitivity of gradient update and uniformly distribute privacy budgets across all iterations. As a result, each node injects fixed-level DP Gaussian noises with a variance proportional to the estimated sensitivity (i.e., constant clipping bound $\bar{C}$) before performing local SGD at each iteration. However, our empirical observations indicate that the norm of gradient typically decays as training progresses and ultimately converges to a small value (c.f., Figure~\ref{demo_gradient_convergence}). This observation suggests that using a constant clipping bound to estimate $l_2$ sensitivity throughout the training process may be overly conservative, as gradient norms are often smaller than the constant $\bar{C}$, especially in the later stages of training. Therefore, the added fixed-level Gaussian noise becomes unnecessary and instead degrades the model accuracy without providing additional privacy benefits. The following question thus arises naturally: 
\textit{Can we design a decentralized learning method that dynamically adjusts the level of DP noise during the training process to minimize  accuracy loss while maintaining privacy guarantee?}

\begin{figure}[t]
\centering
\includegraphics[width=0.6\linewidth]{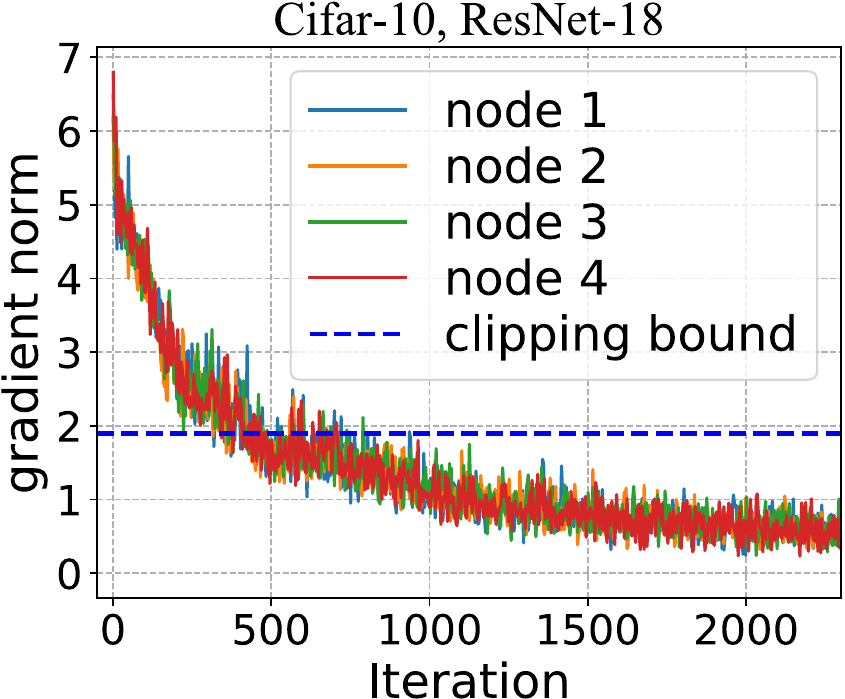}
\caption{The evolution of the gradient norm when training ResNet-18 on Cifar-10 dataset in a fully decentralized setting with 4 nodes. It can be observed that the stochastic gradient norm at each node typically decays as training progresses, and eventually falls below the constant clipping bound (indicated by the blue dotted horizontal line) after certain number of iterations, which makes the clipping operation ineffective in the later stages of training.}
\label{demo_gradient_convergence}
\end{figure}

To this end, we develop a new dynamic differentially private learning method for solving Problem~\eqref{global_loss_function} in fully decentralized settings, which enhances model accuracy while adhering to a total privacy budget constraint. The main contributions of this work are threefold: 
\begin{itemize}
    \item We propose a differentially private decentralized learning method with a dynamic DP Gaussian noise strategy (termed {\alg}), tailored for general time-varying directed networks. In particular, each node adds noise with a variance calibrated by a dynamically decaying gradient clipping bound and an increasing per-step privacy budget appropriately allocated across iterations. This mechanism enables each node to apply dynamically decreasing noise, thereby enhancing model accuracy without compromising the total privacy budget.
    
    \item Theoretically, we investigate the impact of dynamic noise strategy on model utility for a general form of {\alg} where clipping bounds and noise levels can be arbitrary sequences (c.f., Theorem~\ref{main_Theorem}), revealing the advantages of using dynamically decaying clipping bounds (c.f., Remark~\ref{main_remark}). By employing exponentially decaying sequences, we prove the utility bound of Dyn-D$^2$P with explicit dependency on the network-related parameter, exhibiting a scaling factor of $1/\sqrt{n}$ in terms of the number of nodes up to a bias error term induced by gradient clipping (c.f., Corollary~\ref{Corollary_after_theorem}). To our knowledge, this is the first provable utility guarantee in the realm of dynamic differentially private decentralized learning.
    
    \item Extensive experiments are conducted to verify the performance of the proposed {\alg}. The results show that, under the same total privacy budget, {\alg} achieves superior accuracy compared to its counterpart using fixed-level DP Gaussian noise, especially under strong privacy guarantees. Moreover, we validate the robustness of {\alg} against certain hyper-parameters related to the varying rates of gradient clipping bound and per-step privacy budget, and verify the  performance of {\alg} over different graphs and node numbers, which aligns with our theoretical findings.
\end{itemize}

\section{Preliminary and Related Work}
\textbf{Differential privacy.}
Differential privacy was originally introduced in the seminal work by Dwork \textit{et al.}~\shortcite{dwork2006our} as a foundational concept for quantifying the privacy-preserving capabilities of randomized algorithms, and has now found widespread applications in a variety of domains that necessitate safeguarding against unintended information leakage~\cite{li2019differentially,shin2018privacy,wei2021low}.
We recall the standard definition of DP as follows.
\begin{Def}[$(\epsilon,\delta)$-DP~\cite{dwork2014algorithmic}]
A randomized mechanism $\mathcal{M}$ with domain $\mathcal{D}$ and range $\mathcal{R}$ satisfies $(\epsilon,\delta)$-differential privacy (or $(\epsilon,\delta)$-DP), if for any two adjacent inputs $\mathrm{x},\mathrm{x}^{\prime}\in \mathcal{D}$ differing on a single entry and for any subset of outputs $O\subseteq \mathcal{R}$, it holds that
\begin{equation}
Pr\left[ \mathcal{M}\left( \mathrm{x} \right) \in O \right] \leqslant e^{\epsilon}Pr\left[ \mathcal{M}\left( \mathrm{x}^{\prime} \right) \in O \right] +\delta ,
\end{equation}
where the privacy budget $\epsilon$ denotes the privacy lower bound to
measure a randomized query and $\delta$ is the probability of breaking this bound. Note that a smaller value of $\epsilon$ implies a stronger privacy guarantee.
\end{Def}
The following proposition provides the Gaussian DP (GDP) mechanism to ensure privacy guarantee.
\begin{Pro}[$\mu$-GDP~\cite{dong2019gaussian}]
\label{formal_def_of_mu}
Let $f:\mathcal{D} \rightarrow \mathbb{R}^d$ be a function and $S$ be its $l_2$ sensitivity. Then, drawing a random noise $N$ from Gaussian distribution with $N\thicksim \mathcal{N}\left( 0,\sigma ^2\mathbb{I}_d \right) $ and adding it to $f$ such that $\mathcal{M}(\mathrm{x})=f(\mathrm{x})+N$ satisfies $\mu$-GDP if $\sigma$ is set as
\begin{equation}
\label{def_of_mu}
\sigma =S/\mu,
\end{equation}
where $\mu$ is the privacy budget in the GDP framework, and a smaller value of $\mu$ implies a stronger privacy guarantee.
\end{Pro}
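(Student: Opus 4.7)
My plan is to derive the statement directly from the hypothesis-testing formulation of $\mu$-GDP due to Dong \textit{et al.}, which says that a mechanism is $\mu$-GDP if and only if, for every pair of adjacent inputs $\mathrm{x},\mathrm{x}'$, the trade-off function $T(\mathcal{M}(\mathrm{x}),\mathcal{M}(\mathrm{x}'))$ dominates the reference curve $G_\mu \triangleq T(\mathcal{N}(0,1),\mathcal{N}(\mu,1))$ pointwise. Hence it suffices to compute the trade-off function between two isotropic Gaussians whose means differ by at most $S$ and to compare it with $G_\mu$.

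First, I would reduce the $d$-dimensional problem to a one-dimensional one. For adjacent $\mathrm{x},\mathrm{x}'$ the outputs of $\mathcal{M}$ are $\mathcal{N}(f(\mathrm{x}),\sigma^2\mathbb{I}_d)$ and $\mathcal{N}(f(\mathrm{x}'),\sigma^2\mathbb{I}_d)$. Choosing an orthogonal transformation that aligns $f(\mathrm{x}')-f(\mathrm{x})$ with the first coordinate axis preserves the isotropic Gaussian and, being a bijection, leaves the trade-off function unchanged; projecting onto this axis (a sufficient statistic in the Neyman--Pearson sense, and a valid post-processing) then yields
$$T\bigl(\mathcal{N}(f(\mathrm{x}),\sigma^2\mathbb{I}_d),\,\mathcal{N}(f(\mathrm{x}'),\sigma^2\mathbb{I}_d)\bigr)=T\bigl(\mathcal{N}(0,\sigma^2),\,\mathcal{N}(\Delta,\sigma^2)\bigr),$$
where $\Delta=\|f(\mathrm{x})-f(\mathrm{x}')\|_2\le S$ by the definition of $l_2$ sensitivity.

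Second, I would use the scale-translation invariance of the Gaussian location family to rewrite the right-hand side as $G_{\Delta/\sigma}$, and then invoke the fact that $G_\mu$ is monotone non-increasing in $\mu$ (a larger mean gap makes the two hypotheses easier to distinguish). Combining these observations gives
$$T(\mathcal{M}(\mathrm{x}),\mathcal{M}(\mathrm{x}'))=G_{\Delta/\sigma}\ge G_{S/\sigma}.$$
Setting $\sigma = S/\mu$ forces $S/\sigma=\mu$, and hence $T(\mathcal{M}(\mathrm{x}),\mathcal{M}(\mathrm{x}'))\ge G_\mu$ pointwise, which is exactly the condition for $\mu$-GDP.

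The main obstacle I anticipate is not the algebra but rigorously justifying the reduction to one dimension: one must invoke (i) the post-processing property of trade-off functions to argue that an orthogonal rotation plus projection onto $f(\mathrm{x}')-f(\mathrm{x})$ cannot strengthen the bound in the wrong direction, and (ii) the Neyman--Pearson tightness ensuring that this projection realises the worst-case attacker. Both facts are already established in Dong \textit{et al.}'s foundational treatment, so once the reduction is in place the remaining verification is a one-line substitution of $\sigma = S/\mu$.
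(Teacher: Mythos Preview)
Your argument is correct and is essentially the standard derivation from Dong \textit{et al.}'s hypothesis-testing framework: reduce to one dimension by rotation and sufficiency, identify the trade-off function as $G_{\Delta/\sigma}$, and use monotonicity in $\mu$ together with $\Delta\le S$. There is nothing to compare against, however, because the paper does not supply its own proof of this proposition---it is stated as a background result imported directly from \cite{dong2019gaussian}, so your write-up simply fills in what the paper leaves as a citation.
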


The above proposition shows that the variance $\sigma$ of the added noise required to ensure $\mu$-GDP is dependent on both privacy budget $\mu$ and sensitivity $S$. It should be noted that the above privacy guarantee in the sense of GDP can be transformed into the standard DP by the following proposition which shows that there is a one-to-one correspondence between $\epsilon$ and $\mu$ values when fixing $\delta$. 
\begin{Pro}[From $\mu$-GDP to $\left( \epsilon, \delta \right)$-DP~\cite{bu2020deep}]
\label{pro_of_privacy_tranfer}
A random mechanism is $\mu$-GDP if and only if it is $\left( \epsilon ,\delta \right) $-DP for all $\epsilon \geqslant 0$, where
\begin{equation}
\label{privacy_tranfer}
\delta  =\Phi \left( -\frac{\epsilon}{\mu}+\frac{\mu}{2} \right) -e^{\epsilon}\Phi \left( -\frac{\epsilon}{\mu}-\frac{\mu}{2} \right) ,
\end{equation}
and $\Phi \left( \cdot \right) $ is the Gaussian cumulative distribution function.
\end{Pro}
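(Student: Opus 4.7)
The plan is to prove Proposition~\ref{pro_of_privacy_tranfer} via the trade-off function ($f$-DP) framework of~\cite{dong2019gaussian}. In that language, $\mu$-GDP means the trade-off function between any two adjacent outputs is no easier than distinguishing $\mathcal{N}(0,1)$ from $\mathcal{N}(\mu,1)$, i.e., dominates $G_\mu(\alpha):=\Phi(\Phi^{-1}(1-\alpha)-\mu)$, while $(\epsilon,\delta)$-DP admits the dual characterization of dominating the piecewise linear function $f_{\epsilon,\delta}(\alpha):=\max\{0,\,1-\delta-e^{\epsilon}\alpha,\,e^{-\epsilon}(1-\delta-\alpha)\}$. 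The task thus reduces to identifying, for each $\epsilon\ge0$, the smallest $\delta$ such that $f_{\epsilon,\delta}\le G_\mu$ pointwise on $[0,1]$, and then verifying that simultaneous satisfaction of these bounds for all $\epsilon\ge 0$ is equivalent to $\mu$-GDP.

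The first step is to compute the slope of $G_\mu$. Writing $u=\Phi^{-1}(1-\alpha)$ and using the Gaussian identity $\phi(u-\mu)/\phi(u)=\exp(\mu u-\mu^2/2)$ together with the chain rule yields $G_\mu'(\alpha)=-\exp(\mu u-\mu^2/2)$. The slope-$(-e^{\epsilon})$ piece of $f_{\epsilon,\delta}$ is tangent to $G_\mu$ precisely where $\mu u-\mu^2/2=\epsilon$, i.e., at $\alpha^\star=\Phi(-\epsilon/\mu-\mu/2)$, with $G_\mu(\alpha^\star)=\Phi(\epsilon/\mu-\mu/2)$. Matching the $y$-intercept $1-\delta=G_\mu(\alpha^\star)+e^{\epsilon}\alpha^\star$ of this tangent line and applying $1-\Phi(z)=\Phi(-z)$ gives
\begin{equation*}
\delta=\Phi(-\epsilon/\mu+\mu/2)-e^{\epsilon}\Phi(-\epsilon/\mu-\mu/2),
\end{equation*}
which is exactly~\eqref{privacy_tranfer}.

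For the forward direction ($\mu$-GDP implies $(\epsilon,\delta)$-DP with this $\delta$), the tangent line lies below $G_\mu$ by convexity of $G_\mu$ (trade-off functions are always convex), and a symmetry argument based on $G_\mu$ being its own inverse handles the slope-$(-e^{-\epsilon})$ piece. For the converse, I would use that any convex decreasing function equals the pointwise supremum of its affine minorants: since the tangent lines at $\{\alpha^\star(\epsilon):\epsilon\ge0\}$, together with their symmetric counterparts, trace out the whole graph of $G_\mu$, simultaneous $(\epsilon,\delta(\epsilon))$-DP for every $\epsilon\ge0$ forces the trade-off function of $\mathcal{M}$ to dominate $G_\mu$, recovering $\mu$-GDP.

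The main obstacle is the explicit Gaussian computation linking $\alpha^\star$ to the closed-form expression for $\delta$; this hinges on careful use of the chain rule for $\Phi^{-1}$ together with the Gaussian likelihood-ratio identity. A secondary subtlety is verifying that the binding constraint among the three pieces of $f_{\epsilon,\delta}$ is indeed the slope-$(-e^{\epsilon})$ piece and that the horizontal piece remains slack, which follows from $G_\mu\ge 0$ and the location of $\alpha^\star$ in $(0,1/2)$ for $\epsilon>0$.
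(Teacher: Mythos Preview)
Your proof sketch is correct and follows the standard argument from~\cite{dong2019gaussian,bu2020deep}. However, note that the paper does \emph{not} provide its own proof of Proposition~\ref{pro_of_privacy_tranfer}: the proposition is stated as a cited preliminary result, and no proof appears anywhere in the main text or the appendix. The paper simply imports this correspondence from the GDP literature and uses it as a black box to convert between $\mu_{tot}$ and $(\epsilon,\delta)$. So there is nothing in the paper to compare your approach against; your derivation via the trade-off function framework, the tangent-line computation at slope $-e^{\epsilon}$, and the convexity/supremum-of-minorants argument for the two directions is exactly the route taken in the original sources you would be reproducing.
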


In what follows, we will review existing works related to achieving DP guarantees in machine learning and highlight the limitations in decentralized scenarios.

\textbf{Decentralized learning methods with DP guarantee.}
DP guarantee is initially integrated into a centralized (single-node) setting for designing differentially private stochastic learning algorithms~\cite{abadi2016deep,wang2017differentially,iyengar2019towards,chen2020understanding,wang2020differentially}.
Further, DP guarantee is considered in distributed learning with server-client structures and the representative works include~\cite{mcmahan2017learning,li2019asynchronous,wang2019efficient,wu2020value,wei2020federated,zeng2021differentially,wei2021user,li2022soteriafl,liu2022loss,zhou2023optimizing,wei2023securing}.
Recently, there have been efforts to achieve DP guarantees for fully decentralized learning algorithms. 
For example,
Cheng \textit{et al.}~\shortcite{cheng2018leasgd,cheng2019towards} achieve DP in fully decentralized learning for only strongly convex problems. 
Wang and Nedic~\shortcite{wang2022tailoring} achieve DP in fully decentralized architectures by tailoring gradient methods for deterministic optimization problems. 
For non-convex stochastic optimization problems as we consider in this work, Yu \textit{et al.}~\shortcite{yu2021decentralized} present a differentially private decentralized learning method (DP$^2$-SGD) based on D-PSGD~\cite{lian2017can}, which relies on a fixed communication topology and uses the basic composition theorem to bound the overall privacy loss.
To have a tight privacy guarantee, Xu \textit{et al.}~\shortcite{xu2021dp} propose a differentially private asynchronous decentralized learning method (A(DP)$^{2}$SGD) based on AD-PSGD~\cite{lian2018asynchronous}, which provides privacy guarantee in the sense of R\'enyi differential privacy (RDP)~\cite{mironov2017renyi}. 
However, it should be noted that the aforementioned two algorithms~\cite{yu2021decentralized,xu2021dp} work only for undirected communication graphs which is often not satisfied in practical scenarios.
Most recently, Li and Chi~\shortcite{li2025convergence} achieve DP guarantee as well as communication compression in decentralized learning for non-convex problems with the total privacy cost calculated via the moments accountant technique~\cite{abadi2016deep}, while their methods are only applicable to time-invariant graphs.

\textbf{Learning with dynamic DP Gaussian noise levels.} 
For the aforementioned differentially private decentralized methods designed for non-convex stochastic optimization problems~\cite{yu2021decentralized,xu2021dp,li2025convergence}, the injected fixed-level noise may exceed what is actually needed for privacy requirements as training progresses, especially during the later stages of training, since their estimated sensitivity based on constant/fixed gradient clipping bound $\bar{C}$ may not reflect the actual value of sensitivity (c.f., Figure~\ref{demo_gradient_convergence}). The overestimate of sensitivity may, indeed, lead to a waste of unnecessary privacy budget during the training process~\cite{wei2023securing}.
Therefore, a tighter sensitivity estimate is useful for improving model accuracy without sacrificing privacy.
There have been few works dedicated to tightly estimate the sensitivity in a dynamic manner. For instance, a scheme of decaying gradient clipping bounds has been employed to estimate the sensitivity in differentially private centralized learning~\cite{du2021dynamic,wei2021gradient}, resulting in a decreasing amount of noise injection. In the realm of distributed learning, a similar strategy of dynamic clipping bounds is utilized in~\cite{andrew2021differentially} to estimate sensitivity.
Most recently, Wei \textit{et al.}~\shortcite{wei2023securing} use the minimum of decaying clipping bounds and current gradient norms to estimate the $l_2$ sensitivity, leading to a less amount of noise injection.
However, these distributed methods~\cite{andrew2021differentially,fu2022adap,wei2023securing} only focus on server-client architecture and, most importantly, they do not provide any theoretical guarantee on model utility. In this paper, we aim to design a differentially private decentralized learning method that incorporates dynamic noise strategies in fully decentralized settings, and provide rigorous theoretical guarantee on model utility, as well as its utility-privacy trade-off.

\section{Algorithm Development}
\label{algorithm_develop}
In this section, we develop our differentially private decentralized learning methods using the Gaussian DP (GDP) framework as depicted in Proposition~\ref{formal_def_of_mu}, which measures the privacy profile $\left( \epsilon,\delta \right)$ in terms of $\mu$ according to Proposition~\ref{pro_of_privacy_tranfer}. We consider solving Problem~\eqref{global_loss_function} over the following general peer-to-peer network model.

\textbf{Network model.} The communication topology is modeled as a sequence of time-varying directed graph $\mathcal{G}^k=\left( \mathcal{V},\mathcal{E}^k \right) $, where $\mathcal{V}=\{1,2,...,n\}$ denotes the set of nodes and $\mathcal{E}^k \subset \mathcal{V} \times \mathcal{V}$ denotes the set of directed edges/links at iteration $k$. We associate each graph $\mathcal{G}^k$ with a non-negative mixing matrix $P^k \in \mathbb{R}^{n \times n}$ such that $(i,j) \in \mathcal{E}^k$ if $P_{i,j}^k > 0$, i.e., node $i$ receiving a message from node $j$ at iteration $k$. We assume that each node is an in-neighbor of itself.

\begin{algorithm}[t!]
\caption{{\alg}} 
\label{Adpt_PrivSGP_combine} 
\begin{algorithmic}[1]
\STATE \textbf{Initialization:}  DP budget $(\epsilon,\delta)$, $x_{i}^{0}=z_{i}^{0}\in \mathbb{R}^d$, $w_i^0=1$, step size $\gamma > 0$, total number of iterations $K$, initial clipping bound $C_0$ and hyper-parameters $\rho_c >1$ and $\rho_\mu >1$.

\FOR{$k=0,1,...,K-1$, at node $i$,}
\STATE Randomly samples a local training data $\xi_i^k$ with the sampling probability $\frac{1}{J}$;
\STATE Computes stochastic gradient at $z_i^k$: $\nabla f_i(z_i^k;\xi_i^k)$;
\STATE Calculates the clipping bound by: $C_k=C_0\cdot \left( \rho _c \right) ^{-\frac{k}{K}}$; 
\STATE Clips the stochastic gradient: 
\begin{equation}
\label{clipped_g_i_k}
\begin{aligned}
g_{i}^{k}&=\mathrm{Clip}\left( \nabla f_i\left( z_{i}^{k};\xi _{i}^{k} \right) ;C_k \right) 
\\
&=\nabla f_i\left( z_{i}^{k};\xi _{i}^{k} \right) \cdot \min \left\{ 1,\frac{C_k}{\left\| \nabla f_i\left( z_{i}^{k};\xi _{i}^{k} \right) \right\|} \right\} ;
\end{aligned}
\end{equation}
\STATE Calculates the per-step privacy budget by: $\mu _k=\mu _0\cdot \left( \rho _{\mu} \right) ^{\frac{k}{K}}$ with $\mu_0$ the solution of~\eqref{solution_of_mu_0};
\STATE Calculates the DP noise variance by: 
\begin{equation*}
\sigma _k=\frac{C_k}{\mu _k}=\frac{C_0\cdot \left( \rho _c \right) ^{-\frac{k}{K}}}{\mu _0\cdot \left( \rho _{\mu} \right) ^{\frac{k}{K}}}=\frac{C_0}{\mu _0}\cdot \left( \rho _c\cdot \rho _{\mu} \right) ^{-\frac{k}{K}};
\end{equation*}
\STATE Draws randomized noise $N_i^k$ from the Gaussian distribution: $N_{i}^{k}\sim \mathcal{N}\left( 0,\sigma_k^2\mathbb{I}_d \right);$
\STATE Differentially private local SGD: 
\begin{equation*}
x_{i}^{k+\frac{1}{2}}=x_{i}^{k}-\gamma (g_{i}^{k}+N_{i}^{k}) ;
\end{equation*}
\STATE Sends $\left( x_i^{k+\frac{1}{2}}, w_i^k \right)$ to all out-neighbors and receives $\left( x_j^{k+\frac{1}{2}}, w_j^k \right)$ from all in-neighbors ;
\STATE Updates $x_{i}^{k+1}$ by: $x_{i}^{k+1}=\sum_{j=1}^n{P_{i,j}^{k}x_{j}^{k+\frac{1}{2}}}$ ;
\STATE Updates $w_{i}^{k+1}$ by:
$w_{i}^{k+1}=\sum_{j=1}^n{P_{i,j}^{k}w_{j}^{k}}$;
\STATE Updates $z_{i}^{k+1}$ by:
$z_{i}^{k+1}=x_{i}^{k+1}/w_i^{k+1}$.
\ENDFOR
\end{algorithmic}
\end{algorithm}

The following assumptions are made on the mixing matrix and graph for the above network model to facilitate the subsequent utility analysis for our proposed algorithm.
\begin{Ass}[Mixing matrix]
\label{assumption_mixing_matrix}
The non-negative mixing matrix $P^k, \forall k$ is column-stochastic, i.e., $\mathbf{1}^\top P^k=\mathbf{1}^\top$, where $\mathbf{1}$ is an all-one vector.
\end{Ass}

\begin{Ass}[$B$-strongly connected]
\label{assumption_graph}
There exist positive integers $B$ and $\bigtriangleup$ such that the graph formed by the edge set $\bigcup\nolimits_{k=lB}^{\left( l+1 \right) B-1}{\mathcal{E}^k}$ is strongly connected and has a diameter of at most $\bigtriangleup$ for $\forall l \geqslant 0$.
\end{Ass}

Now, we present our differentially private decentralized learning algorithm (termed {\alg}) with a dynamic noise strategy, which works over the above general network model. The complete pseudocode is summarized in Algorithm~\ref{Adpt_PrivSGP_combine}.
At a high level, {\alg} is comprised of local SGD and the averaging of neighboring information, following a framework similar to SGP~\cite{assran2019stochastic} which employs the Push-Sum protocol~\cite{kempe2003gossip} to tackle the unblanceness of directed graphs. However, the key distinction lies in the gradient clipping operation and the injection of DP Gaussian noise before performing local SGD.
In particular, each node $i$ maintains three variables during the learning process: i) the model parameter $x_i^k$; ii) the scalar Push-Sum weight $w_i^k$ and iii) the de-biased parameter $z_i^k=x_i^k/w_i^k$, with the initialization of $x_i^0=z_i^0 \in \mathbb{R}^d$ and $w_i^0=1$ for all nodes $i \in \mathcal{V}$.
At each iteration $k$, each node $i$ updates as follows:
\begin{equation*}
\begin{aligned}
&\texttt{Private local SGD:} \,\, \, x_{i}^{k+\frac{1}{2}}=x_{i}^{k}-\gamma \left( g_i^k+N_{i}^{k} \right) , 
\\
&\texttt{Averaging:} \,\, \, x_{i}^{k+1}=\sum_{j=1}^n{P_{i,j}^{k}x_{j}^{k+\frac{1}{2}}},w_{i}^{k+1}=\sum_{j=1}^n{P_{i,j}^{k}w_{j}^{k}},
\\
&\texttt{De-bias:}\,\, \,  z_{i}^{k+1}=x_{i}^{k+1}/w_{i}^{k+1},
\end{aligned}
\end{equation*}
where $\gamma > 0$ is the step size and $g_i^k$ is the clipped gradient (c.f.,~\eqref{clipped_g_i_k}). $N_i^k$ denotes the injected random noise to ensure $\mu_k$-GDP guarantee for node $i$ at iteration $k$. This noise is drawn from a Gaussian distribution with variance $\sigma_k^2$, calibrated according to the dynamic clipping bound $C_k$ and per-step privacy budget $\mu_k$.
We note that the two key mechanisms in the proposed {\alg} to achieve dynamic noise levels and enhance model accuracy include: i) dynamically decreasing clipping bounds (c.f., line 5 in Algorithm~\ref{Adpt_PrivSGP_combine}); ii) dynamically increasing per-step privacy budget (c.f., line 7 in Algorithm~\ref{Adpt_PrivSGP_combine}). The detailed design and motivation for these mechanisms will be explained as follows.

\textbf{Dynamic decreasing clipping bounds}. For differentially private decentralized learning algorithms, the gradient clipping operation is necessary for each node to bound the $l_2$ sensitivity of local SGD update and inject noise accordingly calibrated with $l_2$ sensitivity (i.e., clipping bound) and privacy budget.
According to the previous discussion in Section~\ref{Intro} that the norm of stochastic gradient of each node typically decreases as training proceeds, we know that the stochastic gradient would not be clipped after some iteration $k$ if we use the constant clipping bound $\bar{C}$ as Yu \textit{et al.}~\shortcite{yu2021decentralized}; Xu \textit{et al.}~\shortcite{xu2021dp}; Li and Chi~\shortcite{li2025convergence} did, thus resulting in adding unnecessary excessive noise calibrated with $\bar{C}$ in the later stage of training. To address this issue, we employ a dynamic gradient clipping scheme for each node to reduce the clipping bounds $C_k$ across the updates. Compared to the fixed clipping bound $\bar{C}$ used in~\cite{yu2021decentralized,xu2021dp,li2025convergence}, it can reduce the noise level after a particular $k$ when $C_k < \bar{C}$, which is beneficial for stabilizing the updates. In particular, we set the clipping bound $C_k$ as $C_k=C_0\cdot \left( \rho _c \right) ^{-\frac{k}{K}}$, where $\rho _c > 1$ is the hyper-parameter to control the decay rate of the clipping bound, $C_0$ is the initial clipping bound, and $K$ is the total number of iterations.

\textbf{Dynamic increasing per-step privacy budget}. Given a total privacy budget, the existing differentially private decentralized learning methods~\cite{yu2021decentralized,xu2021dp,li2025convergence} uniformly distribute privacy budgets across all training steps. However, recent works~\cite{zhu2019deep,wei2021gradient,wei2023securing} point out that it is relatively easier to leak privacy at the initial stage of the training process, and it becomes increasingly difficult as the training progresses.
To this end, we allocate a small privacy budget in the early stages and gradually increase privacy budgets, i.e., setting $\mu_{k+1} \geqslant \mu_{k}$ for all $k \in \{0,1,2,...,K-1\}$. In addition, according to~\eqref{def_of_mu}, we observe that small (resp. large) $\mu_k$ means adding large (resp. small) noise. Therefore, setting $\mu_{k+1} \geqslant \mu_{k}$ implies adding large (resp., small) noise in the early (resp., later) stages of decentralized training, which helps improve model accuracy. 
The intuition is that at the beginning of training, the model is far away from the optimum, and the gradient magnitudes are usually large (c.f., Figure~\ref{demo_gradient_convergence}); larger noise is thus allowed and even helps to quickly escape the saddle point~\cite{ge2015escaping}. As training proceeds, the model approaches the optimum and the gradient magnitude converge, smaller noise is then desired to stabilize the update for convergence.

The following proposition provides a way for privacy accounting of dynamic non-uniform $\mu_k$-GDP costs throughout the whole training process.

\begin{Pro}[Composition theorem for GDP with varying $\mu_k$~\cite{du2021dynamic}]
\label{composition}
Consider a series of random mechanisms $M_k$ for $k\in \left\{ 0,1,2,...,K-1 \right\}$, where $M_k$ is $\mu_k$-GDP, and each mechanism works only on a subsampled dataset by independent Bernoulli trial with probability $p$. 
After $K$ steps by composition of Gaussian mechanism, $M\triangleq M_{K-1}\otimes \cdot \cdot \cdot \otimes M_1\otimes M_0$ satisfies $\mu_{tot}$-GDP where
\begin{equation}
\label{mu_total}
\mu _{tot}=p\cdot \sqrt{\sum_{k=0}^{K-1}{\left( e^{\mu _{k}^{2}}-1 \right)}} .
\end{equation}
\end{Pro}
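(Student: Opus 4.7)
My plan is to work inside the $f$-DP framework of Dong, Roth and Su, identify the per-step trade-off function for each subsampled Gaussian mechanism, and then invoke both the tensor-product composition rule and the central limit theorem for $f$-DP, following the strategy that Bu et al. used for the uniform-budget case and extending it to non-uniform $\mu_k$. Recall that $\mu$-GDP is defined as domination by the Gaussian trade-off function $G_\mu(\alpha)=\Phi\!\left(\Phi^{-1}(1-\alpha)-\mu\right)$, so each $M_k$ is characterized by $G_{\mu_k}$ on the full dataset.

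First I would analyze a single step $k$. Because $M_k$ is a Gaussian mechanism whose noise scale is calibrated to be $\mu_k$-GDP, the Bernoulli-subsampled mechanism at rate $p$ has a trade-off function that is a convex combination of the identity and $G_{\mu_k}$, with the two hypothesis distributions on neighboring databases given explicitly as mixtures of two Gaussians. The quantity that governs its contribution to composition is the symmetrized KL (or, equivalently, Hellinger-type) divergence between these hypothesis distributions. A Taylor expansion in the rare-event parameter $p$ yields
\[
H_k \;\approx\; p^{2}\bigl(e^{\mu_k^{2}}-1\bigr),
\]
which matches the per-step functional that appears in Bu et al.'s subsampled-Gaussian CLT for uniform $\mu$.

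Next I would compose the $K$ per-step trade-off functions using the tensor-product composition theorem for $f$-DP, and then invoke the Gaussian CLT for $f$-DP. The CLT shows that the composed trade-off function converges to $G_{\mu_{tot}}$ with $\mu_{tot}^{2}=\sum_{k=0}^{K-1} H_k$, and substituting the per-step expression above gives exactly
\[
\mu_{tot}\;=\;p\sqrt{\sum_{k=0}^{K-1}\bigl(e^{\mu_k^{2}}-1\bigr)},
\]
as claimed.

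The main obstacle is controlling the CLT when the budgets vary across iterations: one must verify a Lindeberg-type condition ensuring that no single $H_k$ dominates $\sum_k H_k$, and bound the third- and higher-order remainders in the Hellinger expansion so that the Gaussian approximation to the composed trade-off function is tight. For the exponentially varying schedule $\mu_k=\mu_0\,\rho_\mu^{k/K}$ actually used inside {\alg}, all budgets stay within a constant factor of one another, so the Lindeberg condition follows from the same moment bounds that Bu et al. use in the uniform case; however this is the step where the extension to non-uniform $\mu_k$ genuinely requires care, and it is where I would spend most of the technical effort.
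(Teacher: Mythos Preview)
The paper does not actually supply a proof of this proposition: it is stated with a citation to~\cite{du2021dynamic} and used as a black box thereafter. So there is no in-paper argument to compare your proposal against.

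That said, your plan is essentially the standard derivation. The result originates from the $f$-DP/GDP framework of Dong, Roth and Su, and the subsampled-Gaussian CLT of Bu et al.~\cite{bu2020deep}; the extension to non-uniform per-step budgets in~\cite{du2021dynamic} proceeds exactly as you outline---identify the per-step trade-off function of the Poisson-subsampled Gaussian mechanism, compute the relevant functional $p^{2}(e^{\mu_k^{2}}-1)$, and sum across steps via the composition CLT. Your identification of the Lindeberg-type condition as the place where non-uniform $\mu_k$ requires care is accurate, and your observation that the exponential schedule keeps all $\mu_k$ within a bounded ratio is the right way to discharge it.

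One point worth flagging: the formula~\eqref{mu_total} is, strictly speaking, an \emph{asymptotic} statement (a CLT), not an exact finite-$K$ guarantee. Both~\cite{bu2020deep} and~\cite{du2021dynamic} present it that way, and the paper here inherits that caveat implicitly. Your write-up should make the asymptotic nature explicit rather than presenting the identity as exact.
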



To this end, we set the per-step privacy budget $\mu_k$ as
\begin{equation}
\label{growing_mu}
\mu _k=\mu _0\cdot \left( \rho _{\mu} \right) ^{\frac{k}{K}} ,
\end{equation}
where $\rho _{\mu}>1$ is the hyper-parameter controlling the growth rate of $\mu_k$, and $\mu_0$ is the initial privacy budget.
Given the target total privacy budget $\left( \epsilon, \delta \right)$, the corresponding privacy budget in the GDP framework $\mu_{tot}$ can be obtained by~\eqref{privacy_tranfer}.
Substituting~\eqref{growing_mu} into~\eqref{mu_total} with $p=\frac{1}{J}$, we have
\begin{equation}
\label{solution_of_mu_0}
J^2\mu _{tot}^{2}=\sum_{k=0}^{K-1}{\left( e^{\left( \mu _0\cdot \left( \rho _{\mu} \right) ^{\frac{k}{K}} \right) ^2}-1 \right)},
\end{equation}
and $\mu_0$ can be computed using a numerical method such as binary search. 
With a specific value of $\mu_0$ and $\rho_\mu$, the value of $\mu_k$ at each iteration $k$ can be calculated by~\eqref{growing_mu}.

In addition, we provide two by-product differentially private decentralized learning algorithms (termed {\algS} and {\algmu}), which employ only dynamic clipping bound reduction method and dynamic per-step privacy budget growth method, respectively:
\begin{itemize}
    \item {\algS}: we set the clipping bound $C_k$ as $C_k=C_0\cdot \left( \rho _c \right) ^{-\frac{k}{K}}$ (c.f., Algorithm~\ref{Adpt_PrivSGP_combine}), while maintaining the per-step privacy budget $\mu_k$ the same for all iterations, i.e., fixing $\mu_k=\bar{\mu}$, $\forall k$. By substituting $\mu_k=\bar{\mu}$ into~\eqref{mu_total} in Proposition~\ref{composition} with $p=\frac{1}{J}$, we obtain the closed-form solution of $\bar{\mu}$:
    \begin{equation}
    \label{bar_mu}
    \bar{\mu}=\sqrt{\log \left( \frac{J^2\mu _{tot}^{2}}{K}+1 \right)} .
    \end{equation}
    With specific values of $C_k$ and $\bar{\mu}$, we can calibrate the DP noise variance at each iteration by $\sigma _k=\frac{C_k}{\bar{\mu}}=\frac{C_0}{\bar{\mu}}\cdot \left( \rho _c \right) ^{-\frac{k}{K}}$. The complete pseudocode can be found in Algorithm~\ref{Adpt_PrivSGP} in the Appendix~\ref{missed_pseudocode}.
    \item {\algmu}: we set the per-step privacy budget $\mu_k$ as $\mu _k=\mu _0\cdot \left( \rho _{\mu} \right) ^{\frac{k}{K}}$ (c.f., Algorithm~\ref{Adpt_PrivSGP_combine}), where $\mu_0$ is the solution of~\eqref{solution_of_mu_0}, while employing the fixed clipping bound, i.e., $C_k=\bar{C}$, $\forall k$. Therefore, the DP noise variance $\sigma_k$ at each iteration can be calculated by $\sigma _k=\frac{\bar{C}}{\mu _k}=\frac{\bar{C}}{\mu _0}\cdot \left( \rho _{\mu} \right) ^{-\frac{k}{K}}$. The complete pseudocode can be found in Algorithm~\ref{Adpt_PrivSGP_mu} in the Appendix~\ref{missed_pseudocode}.
\end{itemize}

It can be observed that, given $\rho_c$ and $\rho_\mu$, the noise decay rate of {\alg} is faster than that of {\algS} and {\algmu}.
We also present an algorithm termed {\algbase} as our baseline, which employs constant clipping bound (i.e., fixing $C_k=\bar{C}$, $\forall k$) and uniformly distributes privacy budgets across updates (i.e., $\mu_k=\bar{\mu}$, $\forall k$) as usually did in most of existing DP-based decentralized methods~\cite{yu2021decentralized,xu2021dp,li2025convergence}.
According to the values of $\bar{C}$ and $\bar{\mu}$ (c.f.,~\eqref{bar_mu}), 
the constant DP noise variance is $\bar{\sigma}=\frac{\bar{C}}{\bar{\mu}}$.
The pseudocode of {\algbase} is summarized in Algorithm~\ref{Const_PrivSGP} in the Appendix~\ref{missed_pseudocode}.

\begin{algorithm}[t!]
\caption{General Form of {\alg}}
\label{general_Adpt_PrivSGP_Combine} 
\begin{algorithmic}[1]
\STATE \textbf{Initialization:}  DP budget $(\epsilon,\delta)$, $x_{i}^{0}=z_{i}^{0}\in \mathbb{R}^d$, $w_i^0=1$, step size $\gamma > 0$, total number of iterations $K$, clipping bounds $C_0,...,C_{K-1}$, and noise levels $\tilde{\sigma}\cdot \tilde{\sigma}_0,....,\tilde{\sigma}\cdot \tilde{\sigma}_{K-1}$.

\FOR{$k=0,1,...,K-1$, at node $i$,}
\STATE Follows from line 3 and 4 in Algorithm~\ref{Adpt_PrivSGP_combine};
\STATE Gradient clipping: $g_{i}^{k}=\mathrm{Clip}\left( \nabla f_i\left( z_{i}^{k};\xi _{i}^{k} \right) ;C_k \right) $;
\STATE Draws randomized noise $N_i^k$ from the Gaussian distribution: $N_{i}^{k}\sim \mathcal{N}\left( 0,\tilde{\sigma}^2\mathbb{I}_d \right)$;
\STATE Differentially private local SGD:
\begin{equation*}
x_{i}^{k+\frac{1}{2}}=x_{i}^{k}-\gamma (g_{i}^{k}+\tilde{\sigma}_kN_{i}^{k});
\end{equation*}
\STATE Follows from line 11-14 in Algorithm~\ref{Adpt_PrivSGP_combine}.
\ENDFOR
\end{algorithmic} 
\end{algorithm}

\section{Theoretical Analysis}
In this section, we theoretically investigate the impact of our dynamic noise strategy on model utility guarantee and provide the utility bound for proposed {\alg} given a certain privacy budget. We first present a general form of {\alg} where $C_k$ and $\sigma_k$ can be arbitrarily predefined sequences (c.f., Algorithm~\ref{general_Adpt_PrivSGP_Combine}). 
Different from Algorithm~\ref{Adpt_PrivSGP_combine}, we denote $\sigma_k=\tilde{\sigma}\cdot \tilde{\sigma}_k$ for convenience in subsequent analysis, and we term $\tilde{\sigma}$ as noise scale without loss of generality.

Then, we show that the DP guarantee for each node in Algorithm~\ref{general_Adpt_PrivSGP_Combine} can be achieved by setting the noise scale $\tilde{\sigma}$ properly according to the given total privacy budget $(\epsilon,\delta)$ as well as the equivalent privacy parameter $\mu_{tot}$, which is summarized in the following proposition.

\begin{Pro}[Privacy guarantee]
\label{lemma_of_noise_scale}
Let $K$ be the total number of iterations. Algorithm~\ref{general_Adpt_PrivSGP_Combine} achieves $(\epsilon,\delta)$-DP guarantee for each node if we set the noise scale as
\begin{equation}
\label{value_of_noise_scale}
\tilde{\sigma} = \frac{1}{J\mu _{tot}}\sqrt{2\sum_{k=0}^{K-1}{\frac{C_{k}^{2}}{\tilde{\sigma}_{k}^{2}}}},
\end{equation}
where $\mu_{tot}$ is the solution of \eqref{privacy_tranfer} with $\mu=\mu_{tot}$.
\end{Pro}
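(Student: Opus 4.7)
The plan is to chain together the three privacy building blocks already stated in the paper: the per-step Gaussian mechanism guarantee (Proposition~\ref{formal_def_of_mu}), the sub-sampled GDP composition theorem (Proposition~\ref{composition}), and the GDP-to-$(\epsilon,\delta)$-DP conversion (Proposition~\ref{pro_of_privacy_tranfer}). First, I would look at a single iteration $k$ at a fixed node $i$: exactly one of the $J$ local samples is drawn (a Bernoulli trial of rate $p=1/J$), the clipped stochastic gradient $g_i^k$ is produced with the guarantee $\|g_i^k\|_2\leq C_k$, and the privatized quantity $g_i^k+\tilde{\sigma}_k N_i^k$ with $N_i^k\sim\mathcal{N}(0,\tilde{\sigma}^2\mathbb{I}_d)$ is released. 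Since the $\ell_2$ sensitivity of the clipped gradient is exactly $C_k$ and the effective noise standard deviation is $\tilde{\sigma}\tilde{\sigma}_k$, Proposition~\ref{formal_def_of_mu} immediately yields that this per-step release is $\mu_k$-GDP with $\mu_k=C_k/(\tilde{\sigma}\tilde{\sigma}_k)$.

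Second, I would apply Proposition~\ref{composition} with $p=1/J$ and the non-uniform sequence $\{\mu_k\}$ to obtain that the $K$-fold composition is $\mu_{\mathrm{tot}}^{\star}$-GDP with $\mu_{\mathrm{tot}}^{\star}=(1/J)\sqrt{\sum_{k=0}^{K-1}(e^{\mu_k^2}-1)}$. The decisive algebraic step is the elementary inequality $e^{x}-1\leq 2x$, valid on an interval $[0,x_0]$ with $x_0\approx 1.25$, applied with $x=\mu_k^2$; this is harmless in any practical DP regime where per-step privacy budgets remain moderate. Substituting $\mu_k^2=C_k^2/(\tilde{\sigma}^2\tilde{\sigma}_k^2)$ and factoring $\tilde{\sigma}$ out of the square root then gives
\[ \mu_{\mathrm{tot}}^{\star}\;\leq\;\frac{1}{J}\sqrt{2\sum_{k=0}^{K-1}\mu_k^2}\;=\;\frac{1}{J\tilde{\sigma}}\sqrt{2\sum_{k=0}^{K-1}\frac{C_k^2}{\tilde{\sigma}_k^2}}. \]
Choosing $\tilde{\sigma}$ as in~\eqref{value_of_noise_scale} forces the right-hand side to equal the target $\mu_{\mathrm{tot}}$, i.e.\ the unique value associated with $(\epsilon,\delta)$ via~\eqref{privacy_tranfer}, so that $\mu_{\mathrm{tot}}^{\star}\leq\mu_{\mathrm{tot}}$. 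Invoking Proposition~\ref{pro_of_privacy_tranfer} then converts the $\mu_{\mathrm{tot}}$-GDP guarantee into the claimed $(\epsilon,\delta)$-DP guarantee for node $i$.

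The one spot that requires care, rather than any deep obstacle, is arguing that nothing in the subsequent decentralized step leaks extra information about node $i$'s local data. The parameter-exchange operation $x_i^{k+1}=\sum_{j} P_{i,j}^k x_j^{k+1/2}$, the Push-Sum weight update $w_i^{k+1}=\sum_{j} P_{i,j}^k w_j^k$, and the de-biasing $z_i^{k+1}=x_i^{k+1}/w_i^{k+1}$ all depend on node $i$'s local sample $\xi_i^k$ only through the already-privatized quantity $g_i^k+\tilde{\sigma}_k N_i^k$, so by the post-processing property of differential privacy no additional budget is consumed. Once this is in place, the analysis reduces to the per-node composition sketched above, and the factor $\sqrt{2}$ appearing in~\eqref{value_of_noise_scale} is seen to arise solely from the single relaxation $e^{\mu_k^2}-1\leq 2\mu_k^2$.
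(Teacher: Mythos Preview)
Your proposal is correct and follows essentially the same route as the paper: compute the per-step GDP parameter $\mu_k=C_k/(\tilde{\sigma}\tilde{\sigma}_k)$ from Proposition~\ref{formal_def_of_mu}, apply the subsampled composition of Proposition~\ref{composition} with $p=1/J$, relax $e^{\mu_k^2}-1\leq 2\mu_k^2$, and solve for $\tilde{\sigma}$ so that the resulting bound equals the target $\mu_{tot}$ determined by~\eqref{privacy_tranfer}. Your explicit invocation of post-processing for the gossip and Push-Sum steps is a welcome clarification that the paper leaves implicit, but otherwise the arguments coincide.
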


\begin{proof}
The complete proof can be found in the Appendix~\ref{proof_of_pro}.
\end{proof}

Next, we make the following commonly used assumption for the utility analysis of Algorithm~\ref{general_Adpt_PrivSGP_Combine}.
\begin{Ass}[$L$-smoothness]
\label{assumption_smooth} 
For each function $f_i,i\in\mathcal{V}$, there exists a constant $L>0$ such that $\left\| \nabla f_i\left( x \right) -\nabla f_i\left( y \right) \right\| \leqslant L\left\| x-y \right\| 
$. 
\end{Ass}



Suppose Assumptions~\ref{assumption_mixing_matrix}-\ref{assumption_smooth} hold, and assume that each per-sample gradient is upper-bounded, i.e., $\left\| \nabla f_i\left( z;\xi _i \right) \right\| \leqslant \varLambda$. Then, we are ready to provide the following theorem to characterize the utility guarantee of Algorithm~\ref{general_Adpt_PrivSGP_Combine}.

\begin{Thm} [Model utility]
\label{main_Theorem}
If we set the noise scale $\tilde{\sigma}$ as in~\eqref{value_of_noise_scale}, Algorithm~\ref{general_Adpt_PrivSGP_Combine} can achieve $(\epsilon,\delta)$-DP guarantee for each node and has the following utility guarantee
\begin{equation}
\label{long_eq}
\begin{aligned}
& \frac{1}{K}\sum_{k=0}^{K-1}{\mathbb{E}\left[ \left\| \nabla f\left( \bar{x}^k \right) \right\| ^2 \right]}  
\\
\leqslant &  \frac{2\left( f\left( \bar{x}^0 \right) -f^* \right)}{\gamma K} +\frac{3L^2 \varPsi^2\sum_{i=1}^n{\left\| x_{i}^{0} \right\| ^2}}{\left( 1-q \right) ^2nK} 
\\
&+\frac{3\gamma ^2L^2 \varPsi^2}{\left( 1-q \right) ^2}\cdot \frac{1}{K}\sum_{k=0}^{K-1}{\cdot \frac{1}{n}\sum_{i=1}^n{\mathbb{E}\left[ \left\| g_{i}^{k} \right\| ^2 \right]}}
\\
&+\gamma L\cdot \frac{1}{K}\sum_{k=0}^{K-1}{\mathbb{E}\left[ \left\| \frac{1}{n}\sum_{i=1}^n{g_{i}^{k}} \right\| ^2 \right]}
\\
&+\left( \frac{6\gamma ^2KL^2\varPsi ^2d}{\left( 1-q \right) ^2J^2\mu _{tot}^{2}}+\frac{2\gamma KLd}{nJ^2\mu _{tot}^{2}} \right) \underset{\text{$\triangleq T_1$: privacy noise term}}{\underbrace{\frac{1}{K}\sum_{k=0}^{K-1}{\frac{C_{k}^{2}}{\tilde{\sigma}_{k}^{2}}} \frac{1}{K}\sum_{k=0}^{K-1}{\tilde{\sigma}_{k}^{2}}}}
,
\\
&+\underset{\text{$\triangleq T_2$: bias term}}{\underbrace{2\mathbb{E}\left[ \frac{1}{K}\sum_{k=0}^{K-1}{\varLambda \left\| \nabla f\left( \bar{x}^k \right) \right\| \cdot \frac{1}{n}\sum_{i=1}^n{\mathcal{P}_{i}^{k}\left( C_k \right)}} \right] }}
\end{aligned}
\end{equation}
where $\bar{x}^k=\frac{1}{n}\sum_{i=1}^n{x_{i}^{k}}$, $f^*=\underset{x\in \mathbb{R}^d}{\min}f\left( x \right)$ while $\varPsi$ and $q \in [0,1)$ are positive constants\footnote{$q$ characterizes the speed of information propagation over the network. A smaller value of $q$ indicates faster propagation.} depending on the diameter of the network $\bigtriangleup $ and the sequence of mixing matrices $P^k$, whose definition can be found in Lemma~\ref{def_of_C_and_q} in the Appendix, $\mathcal{P}_i^k \left ( C_k \right )$ denotes the probability of a stochastic gradient being clipped with clipping bound $C_k$ for node $i$ at iteration $k$.
\end{Thm}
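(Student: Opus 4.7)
\textbf{Proof proposal for Theorem~\ref{main_Theorem}.}

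The plan is to follow the standard descent-lemma route for decentralized non-convex analysis with push-sum, carefully tracking two new ingredients relative to non-private SGP: the bias introduced by clipping, and the injected Gaussian noise whose variance ultimately gets substituted via the privacy calibration~\eqref{value_of_noise_scale}. First I would introduce the averaged iterate $\bar{x}^k = \frac{1}{n}\sum_i x_i^k$ and use Assumption~\ref{assumption_mixing_matrix} (column-stochasticity of $P^k$) to obtain the clean recursion
\begin{equation*}
\bar{x}^{k+1} = \bar{x}^k - \frac{\gamma}{n}\sum_{i=1}^n\bigl(g_i^k + \tilde{\sigma}_k N_i^k\bigr).
\end{equation*}
Applying the $L$-smooth descent inequality to $f(\bar{x}^{k+1}) - f(\bar{x}^k)$, taking conditional expectation over the fresh randomness at step $k$, and using independence of the $N_i^k$'s (each with covariance $\tilde{\sigma}^2\mathbb{I}_d$) together with $\mathbb{E}\|\frac{1}{n}\sum_i N_i^k\|^2 = \tilde{\sigma}^2 d/n$, yields a per-step inequality whose second-order term contributes $\gamma L \|\frac{1}{n}\sum_i g_i^k\|^2$ plus a noise term $\gamma^2 L \tilde{\sigma}_k^2 \tilde{\sigma}^2 d / n$.

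Second, I would handle the inner product $\langle \nabla f(\bar{x}^k), \frac{1}{n}\sum_i g_i^k\rangle$. The idea is to insert $\pm \frac{1}{n}\sum_i \nabla f_i(z_i^k;\xi_i^k)$ and split the cross-term into (i) a consensus-error piece involving $\|z_i^k - \bar{x}^k\|$, handled by $L$-smoothness and a descent-preserving Young's inequality, and (ii) a clipping-bias piece. For (ii), since $g_i^k = \nabla f_i(z_i^k;\xi_i^k) \cdot \min\{1, C_k/\|\nabla f_i(z_i^k;\xi_i^k)\|\}$, the bias vanishes on the event that no clipping occurs, while on the clipping event the residual is bounded in norm by $\|\nabla f_i(z_i^k;\xi_i^k)\| \le \Lambda$. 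Taking expectation, the bias is at most $\Lambda \cdot \mathcal{P}_i^k(C_k)$, which after multiplication by $\|\nabla f(\bar{x}^k)\|$ produces exactly the term $T_2$.

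Third, I would establish the consensus error bound. Unrolling $x_i^{k+1} = \sum_j P_{i,j}^k x_j^{k+1/2}$ and $w_i^{k+1} = \sum_j P_{i,j}^k w_j^k$, and invoking the standard push-sum mixing result under Assumptions~\ref{assumption_mixing_matrix}--\ref{assumption_graph} (the lemma referenced in the statement, characterizing $\Psi$ and $q \in [0,1)$), I would bound $\|z_i^k - \bar{x}^k\|^2$ by a geometrically weighted sum of past updates $\|g_i^t + \tilde{\sigma}_t N_i^t\|^2$ plus the initial deviation $\sum_i \|x_i^0\|^2$. Averaging, taking expectation over the noise (which gives $\tilde{\sigma}_t^2 \tilde{\sigma}^2 d$ for each noise term), and exchanging the order of summation produces the familiar $\frac{\Psi^2}{(1-q)^2}$ prefactor together with the clipped-gradient norm and noise-variance terms appearing in~\eqref{long_eq}.

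Finally, summing the per-step descent inequality from $k=0$ to $K-1$, dividing by $K$, telescoping $f(\bar{x}^0) - f(\bar{x}^K) \le f(\bar{x}^0) - f^*$, and substituting the privacy calibration $\tilde{\sigma}^2 = \frac{2}{J^2\mu_{tot}^2}\sum_{k=0}^{K-1} C_k^2/\tilde{\sigma}_k^2$ from Proposition~\ref{lemma_of_noise_scale}, every occurrence of $\tilde{\sigma}^2 \cdot \tilde{\sigma}_k^2$ (coming both from the second-order noise term and from the consensus noise term) aggregates into the product $\bigl(\frac{1}{K}\sum_k C_k^2/\tilde{\sigma}_k^2\bigr)\bigl(\frac{1}{K}\sum_k \tilde{\sigma}_k^2\bigr)$ that defines $T_1$, with the coefficients $\frac{6\gamma^2 K L^2 \Psi^2 d}{(1-q)^2 J^2\mu_{tot}^2}$ and $\frac{2\gamma K L d}{nJ^2 \mu_{tot}^2}$ coming respectively from the consensus and the direct descent contributions. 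I expect the main obstacle to be the bookkeeping around the clipping bias: controlling $\mathbb{E}\langle \nabla f(\bar{x}^k), g_i^k - \nabla f_i(z_i^k;\xi_i^k)\rangle$ so that it lands exactly in the form $\Lambda\|\nabla f(\bar{x}^k)\|\mathcal{P}_i^k(C_k)$ without absorbing an unwanted extra factor of the gradient norm squared, since the clipped gradient is neither unbiased nor bounded by $\nabla f_i(z_i^k)$ in a straightforward manner; the rest of the argument is essentially a directed-graph push-sum adaptation of well-known decentralized SGD utility proofs.
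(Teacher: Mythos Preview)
Your proposal is correct and follows essentially the same route as the paper: averaged-iterate descent lemma, push-sum consensus bound with the $\Psi,q$ constants, indicator-based decomposition of the clipping bias into $\nabla f_i(z_i^k)$ plus a residual bounded by $\Lambda$ on the clipping event, telescoping, and substitution of $\tilde\sigma^2$ from~\eqref{value_of_noise_scale}. The only cosmetic difference is that the paper computes $\mathbb{E}_k[g_i^k]$ directly (writing out the indicator split to get $\nabla f_i(z_i^k)$ plus the bias) and then applies the polarization identity $-\langle a,b\rangle=-\tfrac12\|a\|^2-\tfrac12\|b\|^2+\tfrac12\|a-b\|^2$ to the resulting $A_1$ term, whereas you insert the stochastic gradient first and take expectation afterward; after conditioning these are identical, and your worry about the bias bookkeeping is unfounded since $\|g_i^k-\nabla f_i(z_i^k;\xi_i^k)\|=(\|\nabla f_i\|-C_k)\mathbb{I}[\text{clip}]\le\Lambda\,\mathbb{I}[\text{clip}]$ gives exactly the $T_2$ term after Cauchy--Schwarz.
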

\begin{proof}
The complete proof can be found in the Appendix~\ref{proof_of_main_theorem}.
\end{proof}

\begin{Rem}[On clipping bounds $C_k$]
\label{main_remark}
The clipping of gradients will destroy the unbiased estimate of the local full gradient for each node, resulting in a constant bias error (reflected by $T_2$ in~\eqref{long_eq}). It is obvious that the smaller the clipping bound $C_k$, the greater the probability $\mathcal{P}_i^k \left( C_k \right)$ that clipping occurs at iteration $k$. Based on this fact, we know that in general, the bias term $T_2$ in~\eqref{long_eq} will be small if $C_k$'s are large and vise versa, given the same distribution of gradients. This implies that we can use large clipping bounds to reduce the bias term $T_2$. However, this may make the privacy noise error term $T_1$ in~\eqref{long_eq} become larger. It is thus essential to choose a proper sequence of $C_k$ that could effectively balance $T_1$ and  $T_2$. However, finding an optimal sequence of clipping bounds to achieve the best trade-off necessitates knowledge of the distribution of the stochastic gradient, which is impossible to obtain in practice. The existing literature~\cite{andrew2021differentially} points out that a noteworthy practical way is to keep the probability $\mathcal{P}_i^k \left( C_i^k \right)$ approximately constant. Considering that the norm of the gradient is decreasing as training progresses, $C_k$ should also be decreasing so as to keep $\mathcal{P}_i^k \left( C_k \right)$ roughly constant, which supports the design of our algorithm (c.f., Algorithm~\ref{Adpt_PrivSGP_combine}).
\end{Rem}

Next, we specify the sequences $C_k$ and $\tilde{\sigma}_k$ in Algorithm~\ref{general_Adpt_PrivSGP_Combine}, yielding a special instance of dynamic clipping bound and noise scheduling as used in Algorithm~\ref{Adpt_PrivSGP_combine}. We then provide the corresponding utility bound in the following corollary.

\begin{Cor}
\label{Corollary_after_theorem}
Under the same conditions of Theorem~\ref{main_Theorem}, by setting $C_k=\Theta \left( \left( \rho _c \right) ^{-\frac{k}{K}} \right) $ and $\tilde{\sigma}_k=\Theta \left( \left( \rho _c\cdot \rho _{\mu} \right) ^{-\frac{k}{K}} \right) $ for some $\rho_c >1$ and $\rho_\mu >1$, if we set $\gamma =\frac{1}{\sqrt{n}J\mu _{tot}}$, $\gamma K=\sqrt{n}J\mu _{tot}$ and assume that $J\mu _{tot}>\sqrt{n}$, we have
\begin{equation}
\label{utility_bound}
\frac{1}{K}\sum_{k=0}^{K-1}{\mathbb{E}\left[ \left\| \nabla f\left( \bar{x}^k \right) \right\| ^2 \right]}\leqslant \mathcal{O}\left( \frac{1}{\left( 1-q \right) ^2\sqrt{n}J\mu _{tot}} \right) +T_2,
\end{equation}
where $\mathcal{O}(\cdot)$ hides some constants, e.g., $L$, $\sum_{i=1}^n{\left\| x_{i}^{0} \right\| ^2}$, $\varLambda$.
\end{Cor}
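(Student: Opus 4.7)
The plan is to instantiate the general bound~\eqref{long_eq} of Theorem~\ref{main_Theorem} with the prescribed exponentially decaying sequences $C_k=\Theta((\rho_c)^{-k/K})$ and $\tilde\sigma_k=\Theta((\rho_c\rho_\mu)^{-k/K})$, collapse the two averages that make up the privacy noise factor $T_1$ using geometric-series estimates, and then substitute the specified $\gamma$ and $K$ so that each of the five non-bias error terms can be compared with the target rate $\mathcal{O}(1/((1-q)^2\sqrt{n}J\mu_{tot}))$ under the assumption $J\mu_{tot}>\sqrt{n}$.

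The key quantitative ingredient is the evaluation of $T_1$. Under the chosen schedules, $C_k^2/\tilde\sigma_k^2=\Theta((\rho_\mu)^{2k/K})$ and $\tilde\sigma_k^2=\Theta((\rho_c\rho_\mu)^{-2k/K})$, both of which are geometric in $k$. For any fixed $r>0$, a Riemann-sum argument gives $\frac{1}{K}\sum_{k=0}^{K-1}r^{k/K}\leq\int_0^1 r^t\,dt \cdot (1+o(1))=(r-1)/\ln r \cdot (1+o(1))$, so each average is bounded by a constant depending only on $\rho_c,\rho_\mu$ and \emph{not} on $K$, $n$, $J$, or $\mu_{tot}$. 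Hence $T_1=\Theta(1)$. In addition, because every clipped gradient obeys $\|g_i^k\|\leq\min(\varLambda,C_k)\leq C_0$, the averages $\frac{1}{nK}\sum_{k,i}\mathbb{E}\|g_i^k\|^2$ and $\frac{1}{K}\sum_k\mathbb{E}\|\tfrac{1}{n}\sum_i g_i^k\|^2$ appearing in~\eqref{long_eq} are uniformly bounded by $C_0^2$, so they can be absorbed into the hidden constants in $\mathcal{O}(\cdot)$.

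With these reductions, I would substitute $\gamma=1/(\sqrt{n}J\mu_{tot})$ and $K=nJ^2\mu_{tot}^2$ (the latter forced by $\gamma K=\sqrt{n}J\mu_{tot}$), which yields $\gamma K=\sqrt{n}J\mu_{tot}$ and $\gamma^2K=1$, and then track each of the five non-bias summands in~\eqref{long_eq}: the optimization term is $\Theta(1/(\gamma K))=\Theta(1/(\sqrt{n}J\mu_{tot}))$; the consensus-initialization term is $\Theta(1/((1-q)^2K))=\Theta(1/((1-q)^2 nJ^2\mu_{tot}^2))$; the two gradient-magnitude terms contribute $\Theta(\gamma^2/(1-q)^2)=\Theta(1/((1-q)^2nJ^2\mu_{tot}^2))$ and $\Theta(\gamma)=\Theta(1/(\sqrt{n}J\mu_{tot}))$ respectively; and the two privacy-noise pieces evaluate to $\Theta(\gamma^2 K/((1-q)^2 J^2\mu_{tot}^2))=\Theta(1/((1-q)^2 J^2\mu_{tot}^2))$ and $\Theta(\gamma K/(nJ^2\mu_{tot}^2))=\Theta(1/(\sqrt{n}J\mu_{tot}))$. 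Invoking the assumption $J\mu_{tot}>\sqrt{n}$ gives $1/(J^2\mu_{tot}^2)<1/(\sqrt{n}J\mu_{tot})$ and $1/(nJ^2\mu_{tot}^2)<1/(\sqrt{n}J\mu_{tot})$, so every term is dominated by the claimed rate, and the untouched bias $T_2$ from~\eqref{long_eq} yields~\eqref{utility_bound}.

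The main obstacle will be the careful bookkeeping in the previous paragraph: verifying that the geometric-sum constants bounding $T_1$ are truly $K$-independent (so that the privacy noise does not secretly accumulate with the horizon), and confirming that the hypothesis $J\mu_{tot}>\sqrt{n}$ is precisely what is needed to absorb the $\Theta(1/(J^2\mu_{tot}^2))$ contribution arising from the first half of term~5 into the dominant $\Theta(1/(\sqrt{n}J\mu_{tot}))$ term.
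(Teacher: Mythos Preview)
Your proposal is correct and follows essentially the same route as the paper's proof: the paper also shows $T_1=\mathcal{O}(1)$ by writing the two averages as geometric sums and bounding them via $\frac{1}{K}\cdot\frac{r-1}{r^{1/K}-1}<\frac{r-1}{\ln r}$ (which is exactly your Riemann-sum comparison), then substitutes $\gamma$ and $\gamma K$ and uses $J\mu_{tot}>\sqrt{n}$ to absorb the subdominant terms. Your explicit handling of the clipped-gradient averages via $\|g_i^k\|\leq C_0$ is a point the paper leaves implicit in its $\mathcal{O}(\cdot)$, but otherwise the arguments coincide.
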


\begin{proof}
The complete proof can be found in the Appendix~\ref{proof_of_corollary}.
\end{proof}

\begin{Rem}[Tight utility bound]
\label{remark_tight_utility_bound}
Setting aside the non-vanishing bias term $T_2$, the utility bound derived in Corollary 1 exhibits explicit dependency on the network-related parameter $q$, and reveals a scaling factor of $1/\sqrt{n}$ with respect to the number of nodes $n$, which has never been observed in the previous differentially private decentralized algorithms~\cite{yu2021decentralized,xu2021dp}.
By setting $n=1$ which corresponds to the centralized/single-node case, our utility result~\eqref{utility_bound} can be reduced to $\mathcal{O}\left( 1/J\mu_{tot} \right)$, sharing the same polynomial order as existing analysis~\cite{zhang2017efficient,du2021dynamic}. 
To our best knowledge, we provide the first theoretical utility analysis for decentralized non-convex stochastic optimization with dynamic gradient clipping bounds and noise levels, highlighting the utility-privacy trade-off.
\end{Rem}

\begin{Rem}[Non-trivial analysis] 
The exponential decay scheme of the sequences $C_k$ and $\tilde{\sigma}_k$ aligns with that of the sequences $C_k$ and $\sigma_k$ in Algorithm~\ref{Adpt_PrivSGP_combine}, up to a constant factor. Therefore, the theoretical result (utility bound) in Corollary~\ref{Corollary_after_theorem} corroborates the parameter settings in Algorithm~\ref{Adpt_PrivSGP_combine}. We note that our analysis is non-trivial and can not be directly derived by extending the existing analysis~\cite{du2021dynamic} for the centralized/single-node case. The authors in~\cite{du2021dynamic} assume $C_k$ and $\tilde{\sigma}_k$ to be constant when deriving utility bounds, thereby simplifying the analysis (c.f., Theorem~1 in~\cite{du2021dynamic}).
In contrast, our analysis involves a more precise setting of $C_k$ and $\tilde{\sigma}_k$ as outlined in Corollary~\ref{Corollary_after_theorem}, which requires a more sophisticated analysis (c.f.,~\eqref{involve_1} to~\eqref{involve_2} in the Appendix for the derivation). We will also evaluate the impact of two hyper-parameters $\rho_\mu$ and $\rho_c$ on algorithm performance in the experimental section. The fixed schedule for the clipping bound and noise level in Corollary~\ref{Corollary_after_theorem} is not the only option for our general algorithm form (Algorithm~\ref{general_Adpt_PrivSGP_Combine}). 
Exploring alternative dynamic clipping bound mechanisms--such as using more complex schedules than exponential decaying method or adaptive clipping bound calculated based on the real-time gradient norm--would require additional heuristic efforts and complicate the theoretical utility analysis. We leave these explorations for future work.
\end{Rem}

\section{Experiments}
We conduct extensive experiments to verify the performance of proposed {\alg} (c.f., Algorithm~\ref{Adpt_PrivSGP_combine}), with comparison to our two by-product algorithms {\algS} and {\algmu}, and two baselines: i) {\algbase}, which employs fixed-level noise; ii) non-private decentralized learning algorithm, which does not use gradient clipping or add DP noise, and thus serves as the upper bound of model accuracy. All experiments are deployed in a server with Intel Xeon E5-2680 v4 CPU @ 2.40GHz and 8 Nvidia RTX 3090 GPUs, and are implemented with distributed communication package \textit{torch.distributed} in PyTorch~\cite{paszke2017pytorch}, where a process serves as a node, and inter-process communication is used to mimic communication among nodes.

\begin{table}[!t]
\centering
\begin{tabular}{|l|l|l|l|l|}
\hline
\rule{0pt}{9pt}
\textbf{Algorithm} & $\boldsymbol{\epsilon=0.3}$ & $ \boldsymbol{\epsilon=0.7}$ & $\boldsymbol{\epsilon=1}$ & $\boldsymbol{\epsilon=3}$ \\
\hline
\rule{0pt}{9pt}
Non-Private & \multicolumn{4}{c|}{82.16} \\
\hline
\rule{0pt}{9pt}
{\algbase} & 46.37 & 52.38 & 60.02 & 71.11 \\
\hline
\rule{0pt}{9pt}
{\algmu} & 64.12 & 66.21 & 69.37 & 76.3 \\
\hline
\rule{0pt}{9pt}
{\algS} & 66.93 & 68.37 & 72.02 & 77.65 \\
\hline
\rule{0pt}{9pt}
{\alg} & \textbf{71.87} & \textbf{72.14} & \textbf{75.06} & \textbf{79.89} \\
\hline
\end{tabular}
\caption{Final testing accuracy (\%) for different algorithms when training ResNet-18 on Cifar-10 dataset, under different values of privacy budget $\epsilon$.}
\label{table_cifar}
\end{table}

\begin{figure}[!t]
\centering
\subfigure[Training loss]{
\includegraphics[width=0.473\linewidth]{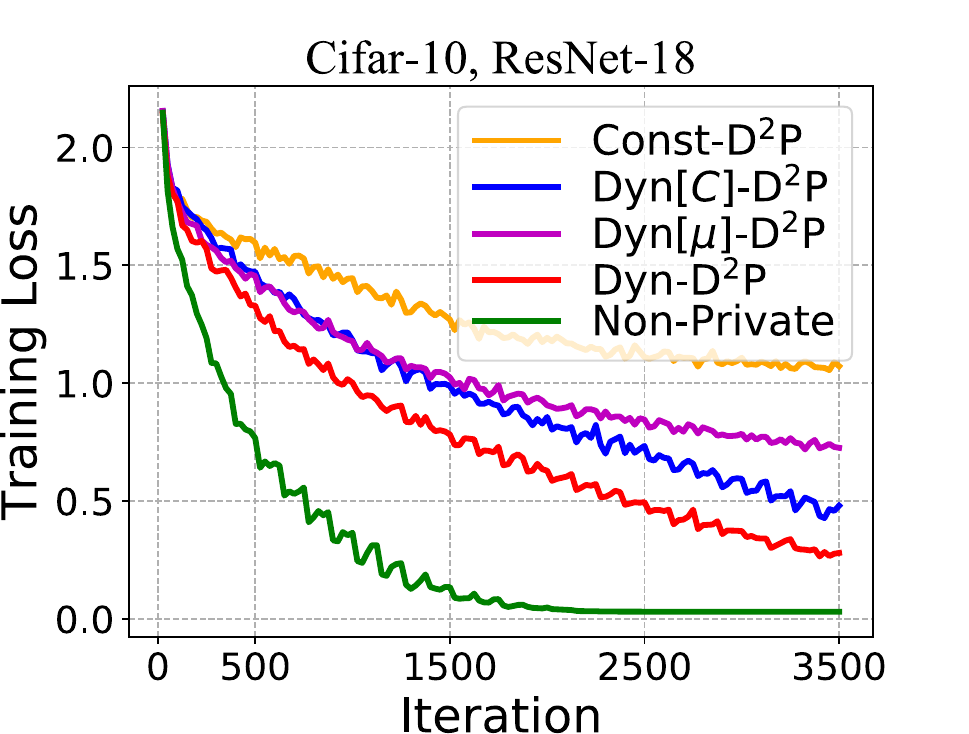}
}
\subfigure[Testing accuracy]{
\includegraphics[width=0.473\linewidth]{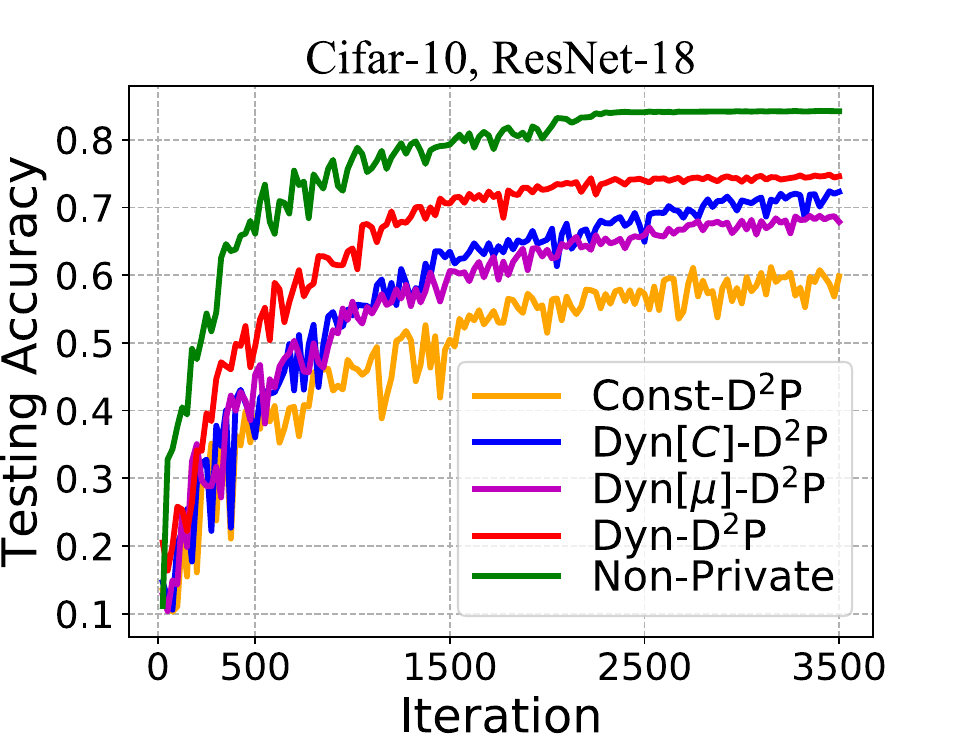}
}
\caption{Comparison of convergence performance for five algorithms under $(1, 10^{-4})$-DP guarantee for each node, when training ResNet-18 on Cifar-10 dataset.}
\label{illustration_cifar_iter}
\end{figure}


\subsection{Experimental Setup}
We compare five algorithms in a fully decentralized setting composed of 20 nodes, on two benchmark non-convex learning tasks: i) training ResNet-18~\cite{he2016deep} on Cifar-10~\cite{krizhevsky2009learning} dataset; ii) training shallow CNN model (composed of two convolution layers and two fully connected layers) on FashionMnist~\cite{xiao2017fashion} dataset.
We split shuffled datasets evenly to 20 nodes. For communication topology, unless otherwise stated, we use a time-varying directed exponential graph (refer to Appendix~\ref{missing_definition_of_graph} for its definition). The learning rate is set to be $0.05$ for ResNet-18 training and $0.03$ for shallow CNN model training. Privacy parameters $\delta$ is set to be $10^{-4}$, and we test different values for $\epsilon$ which implies different levels of privacy guarantee. Other parameters such as $\bar{C}$, $C_0$, $\rho_c$ and $\rho_\mu$ are detailed in the Appendix~\ref{experimental_set_up_parameter}. Note that all experimental results are averaged over five repeated runs.

\begin{table}[!t]
\centering
\begin{tabular}{|l|l|l|l|l|}
\hline
\rule{0pt}{9pt}
\textbf{Algorithm} & $\boldsymbol{\epsilon=0.3}$ & $ \boldsymbol{\epsilon=0.7}$ & $\boldsymbol{\epsilon=1}$ & $\boldsymbol{\epsilon=3}$ \\
\hline
\rule{0pt}{9pt}
Non-Private & \multicolumn{4}{c|}{89.98} \\
\hline
\rule{0pt}{9pt}
{\algbase} & 45.37 & 58.63 & 74.65 & 80.81 \\
\hline
\rule{0pt}{9pt}
{\algmu} & 81.12 & 82.98  & 82.23 & 84.36 \\
\hline
\rule{0pt}{9pt}
{\algS} & 82.93 & 83.65  & 84.06 & 84.98 \\
\hline
\rule{0pt}{9pt}
{\alg} & \textbf{84.88} & \textbf{85.36}  & \textbf{86.21} & \textbf{87.89} \\
\hline
\end{tabular}
\caption{Final testing accuracy (\%) for different algorithms when training shallow CNN on FashionMnist dataset, under different values of privacy budget $\epsilon$.}
\label{table_fashion_mnist}
\end{table}

\begin{figure}[!t]
  \centering
  \subfigure[Training loss]{
    \includegraphics[width=0.473\linewidth]{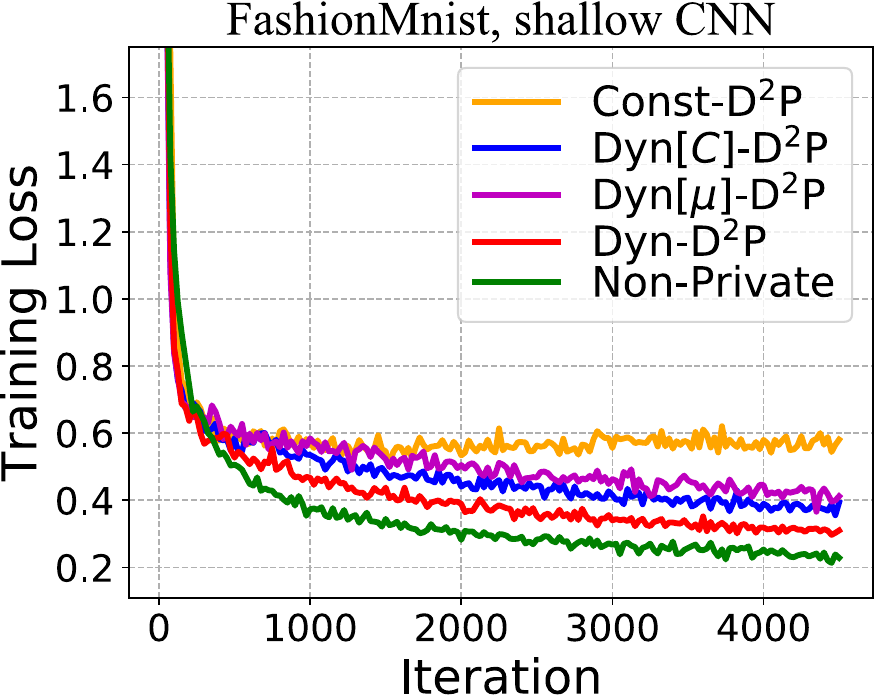}
  }
  \subfigure[Testing accuracy]{
    \includegraphics[width=0.473\linewidth]{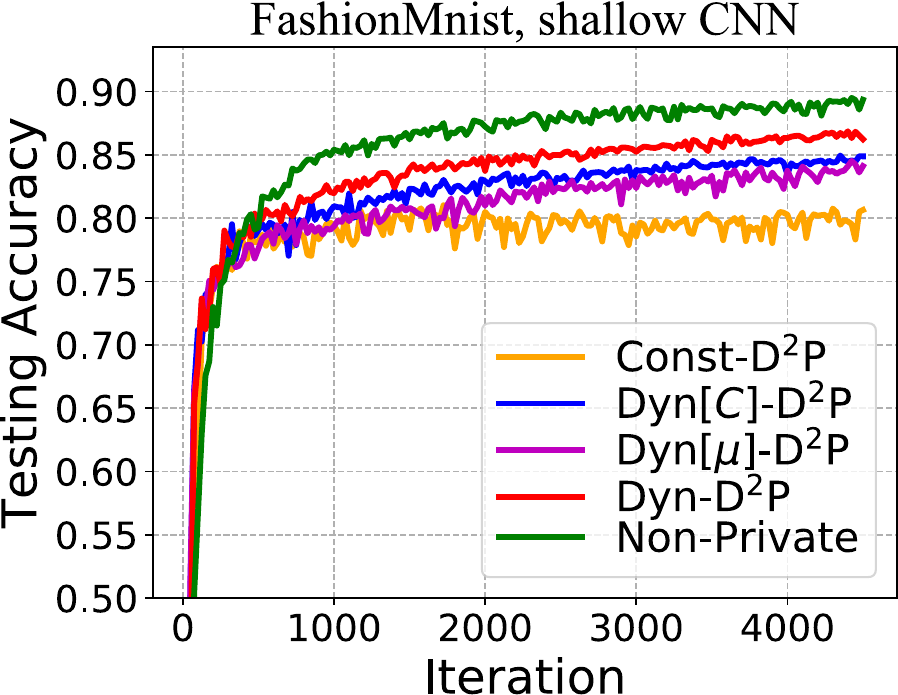}
  }
  \caption{Comparison of convergence performance for five algorithms under $(3, 10^{-4})$-DP guarantee for each node, when training shallow CNN on FashionMnist dataset.}
\label{illustration_fashion_mnist}
\end{figure}

\subsection{Superior Performance against Baseline Methods}

For the ResNet-18 training task, we present the experimental result of final model accuracy in Table~\ref{table_cifar}, and provide the plots of training loss/testing accuracy versus iteration in Figure~\ref{illustration_cifar_iter} with a privacy budget of $\epsilon = 1$ (the plots with other values of $\epsilon$ can be found in the Appendix~\ref{missed_plots}). 
It can be observed that our {\alg} and two by-product algorithms ({\algS} and {\algmu}) consistently outperform the baseline algorithm {\algbase} which employs constant noise. Among these above DP algorithms, {\alg} achieves the highest model accuracy while maintaining the same level of privacy protection. Furthermore, a comparison of experimental results with different privacy budgets (i.e., varying values of $\epsilon$) shows that the stronger the level of required privacy protection (i.e., the smaller the value of budget $\epsilon$), the more pronounced the advantage in model accuracy with our dynamic noise strategy. In particular, when setting a small $\epsilon=0.3$ which implies a strong privacy guarantee, our {\alg} achieves a $25\%$ higher model accuracy than {\algbase} employing constant noise strategies. These results verify the superiority of our dynamic noise approach.

For the shallow CNN training task, we present the experimental result of final model accuracy in Table~\ref{table_fashion_mnist}, and provide the plots of training loss/testing accuracy in terms of iteration in Figure~\ref{illustration_fashion_mnist} with a privacy budget of $\epsilon = 1$ (the plots with other values of $\epsilon$ can be found in the Appendix~\ref{missed_plots}). The takeaways from the experimental results are similar to previous experiments on the ResNet-18 training task, and our proposed {\alg} performs much better than the baseline algorithm {\algbase} while maintaining the same level of privacy protection, which again highlights the superiority of our dynamic noise approach. In particular, under a strong level of required privacy guarantee with budget $\epsilon=0.3$, our {\alg} achieves a $39\%$ higher model accuracy compared to {\algbase}.

\subsection{Sensitivity to Hyper-parameters}
In this part, we test the robustness of our algorithm (c.f., Algorithm~\ref{Adpt_PrivSGP_combine}) to different values of hyper-parameters $\rho_c$ and $\rho_\mu$. We use grid search to demonstrate the impact of these two parameters on the final model accuracy. It follows from the results as shown in Figure~\ref{robust_to_hyperparameters} that, our algorithm can almost maintain the final model accuracy and consistently improve model accuracy compared to {\algbase} across a wide range of $\rho_c$ and $\rho_\mu$, which implies that our proposed algorithm is robust to the value of hyper-parameter $\rho_c$ and $\rho_\mu$.

\begin{figure}[!h]
\centering
\subfigure{
\includegraphics[width=0.474\linewidth]{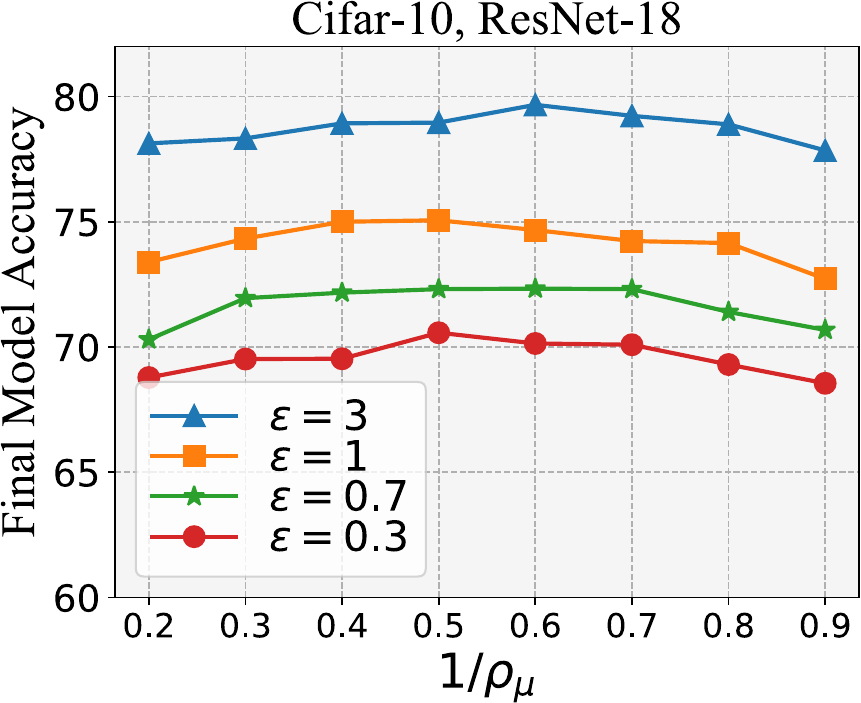}
}
\subfigure{
\includegraphics[width=0.474\linewidth]{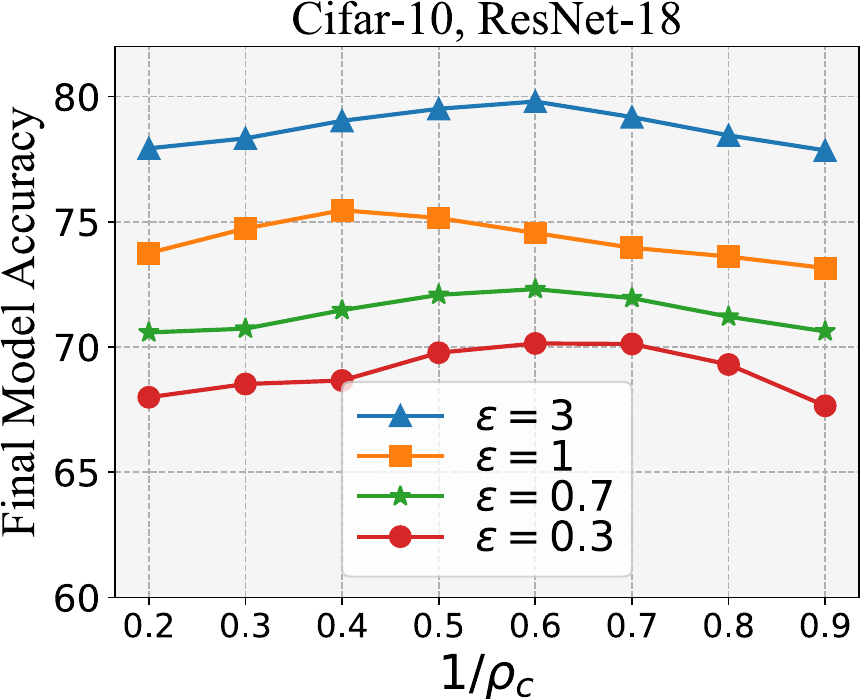}
}
\caption{Illustration of the robustness of {\alg} against the values of $\rho_\mu$ and $\rho_c$ under different privacy budget $\epsilon$.}
\label{robust_to_hyperparameters}
\end{figure}

\begin{figure}[!h]
\centering
\subfigure[$(1, 10^{-4})$-DP guarantee]{
\includegraphics[width=0.474\linewidth]{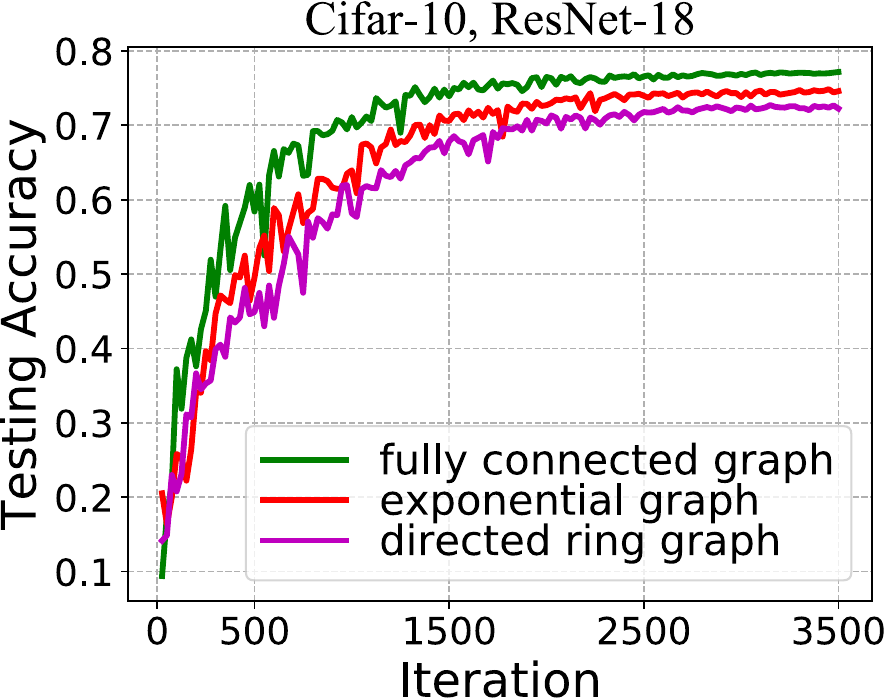}
\label{different_comm_graphs}
}
\subfigure[$(1, 10^{-4})$-DP guarantee ]{
\includegraphics[width=0.4755\linewidth]{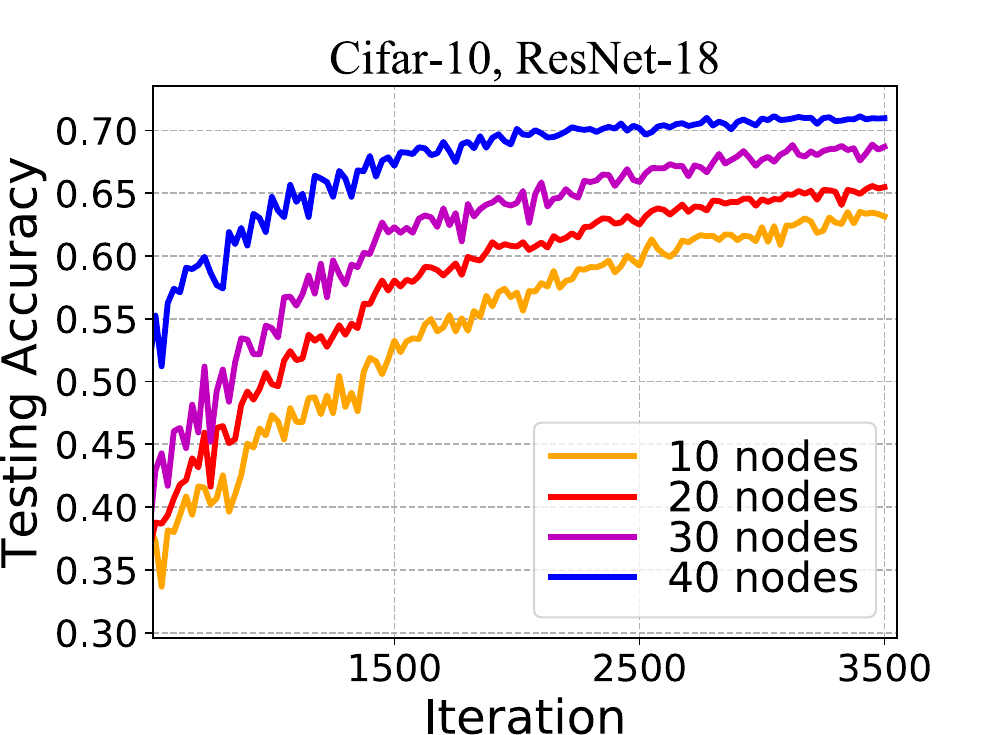}
\label{different_node_numbers}
}
\caption{Comparison of convergence performance (model utility) for {\alg} over (a) different graphs consisting of 20 nodes and (b) an exponential graph with different number of nodes, respectively.}
\end{figure}

\subsection{Performance over Different Graphs and Node Numbers}
First, we implement {\alg} over a static directed ring graph, a time-varying directed exponential graph, and a fully connected graph, where the $q$ value w.r.t. these three graphs decreases in sequence. The experimental result shown in Figure~\ref{different_comm_graphs} demonstrates that the model accuracy (utility) increases in sequence across these graphs. Then, we implement {\alg} on a time-varying directed exponential graph consisting of different numbers of nodes (refer to Appendix~\ref{missing_experimental_set_up_node_number} for more details on the setup). It can be observed from Figure~\ref{different_node_numbers} that, increasing the number of nodes improves the model accuracy (utility). Note that both of these above observations align with the theoretical insights outlined in Corollary~\ref{Corollary_after_theorem} and Remark~\ref{remark_tight_utility_bound}.

\section{Conclusion}
In this work, we proposed a differentially private decentralized learning method {\alg}, for non-convex optimization problems, which dynamically adjusts gradient clipping bounds and noise levels across the update. The proposed dynamic noise strategy allows us to enhance the model accuracy while maintaining the level of privacy guarantee. Extensive experiments show that our {\alg} outperforms the existing counterparts with fixed-level noises, especially under strong privacy levels. Our analysis shows that the utility bound of Dyn-D$^2$P exhibits an explicit dependency on the network-related parameter and enjoys a scaling factor of $1/\sqrt{n}$, up to a bias error term induced by gradient clipping. To our knowledge, we provide the first theoretical utility analysis for fully decentralized non-convex stochastic optimization with dynamic gradient clipping bounds and noise levels, highlighting the utility-privacy trade-off. We will focus on eliminating the bias term induced by gradient clipping in the future.

\section*{Acknowledgments}
This work is supported in parts by the National
Key R\&D Program of China under Grant No. 2022YFB3102100, and in parts by National Natural Science Foundation of China under Grants 62373323, 62088101, 62402256.

\bibliographystyle{named}
\bibliography{ijcai25}

\begin{thebibliography}{}

\bibitem[\protect\citeauthoryear{Abadi \bgroup \em et al.\egroup }{2016}]{abadi2016deep}
Martin Abadi, Andy Chu, Ian Goodfellow, H~Brendan McMahan, Ilya Mironov, Kunal Talwar, and Li~Zhang.
\newblock Deep learning with differential privacy.
\newblock In {\em Proceedings of the 2016 ACM SIGSAC conference on computer and communications security}, pages 308--318, 2016.

\bibitem[\protect\citeauthoryear{Andrew \bgroup \em et al.\egroup }{2021}]{andrew2021differentially}
Galen Andrew, Om~Thakkar, Brendan McMahan, and Swaroop Ramaswamy.
\newblock Differentially private learning with adaptive clipping.
\newblock {\em Advances in Neural Information Processing Systems}, 34:17455--17466, 2021.

\bibitem[\protect\citeauthoryear{Assran \bgroup \em et al.\egroup }{2019}]{assran2019stochastic}
Mahmoud Assran, Nicolas Loizou, Nicolas Ballas, and Mike Rabbat.
\newblock Stochastic gradient push for distributed deep learning.
\newblock In {\em International Conference on Machine Learning}, pages 344--353. PMLR, 2019.

\bibitem[\protect\citeauthoryear{Bu \bgroup \em et al.\egroup }{2020}]{bu2020deep}
Zhiqi Bu, Jinshuo Dong, Qi~Long, and Weijie~J Su.
\newblock Deep learning with gaussian differential privacy.
\newblock {\em arXiv preprint arXiv:1911.11607}, 2020.

\bibitem[\protect\citeauthoryear{Chen \bgroup \em et al.\egroup }{2020}]{chen2020understanding}
Xiangyi Chen, Steven~Z Wu, and Mingyi Hong.
\newblock Understanding gradient clipping in private sgd: A geometric perspective.
\newblock {\em Advances in Neural Information Processing Systems}, 33:13773--13782, 2020.

\bibitem[\protect\citeauthoryear{Cheng \bgroup \em et al.\egroup }{2018}]{cheng2018leasgd}
Hsin-Pai Cheng, Patrick Yu, Haojing Hu, Feng Yan, Shiyu Li, Hai Li, and Yiran Chen.
\newblock {LEASGD}: an efficient and privacy-preserving decentralized algorithm for distributed learning.
\newblock {\em arXiv preprint arXiv:1811.11124}, 2018.

\bibitem[\protect\citeauthoryear{Cheng \bgroup \em et al.\egroup }{2019}]{cheng2019towards}
Hsin-Pai Cheng, Patrick Yu, Haojing Hu, Syed Zawad, Feng Yan, Shiyu Li, Hai Li, and Yiran Chen.
\newblock Towards decentralized deep learning with differential privacy.
\newblock In {\em International Conference on Cloud Computing}, pages 130--145. Springer, 2019.

\bibitem[\protect\citeauthoryear{Dong \bgroup \em et al.\egroup }{2019}]{dong2019gaussian}
Jinshuo Dong, Aaron Roth, and Weijie~J Su.
\newblock Gaussian differential privacy.
\newblock {\em arXiv preprint arXiv:1905.02383}, 2019.

\bibitem[\protect\citeauthoryear{Du \bgroup \em et al.\egroup }{2021}]{du2021dynamic}
Jian Du, Song Li, Xiangyi Chen, Siheng Chen, and Mingyi Hong.
\newblock Dynamic differential-privacy preserving sgd.
\newblock In {\em International Conference on Machine Learning}. PMLR, 2021.

\bibitem[\protect\citeauthoryear{Dwork \bgroup \em et al.\egroup }{2006}]{dwork2006our}
Cynthia Dwork, Krishnaram Kenthapadi, Frank McSherry, Ilya Mironov, and Moni Naor.
\newblock Our data, ourselves: Privacy via distributed noise generation.
\newblock In {\em Annual international conference on the theory and applications of cryptographic techniques}, pages 486--503. Springer, 2006.

\bibitem[\protect\citeauthoryear{Dwork \bgroup \em et al.\egroup }{2014}]{dwork2014algorithmic}
Cynthia Dwork, Aaron Roth, et~al.
\newblock The algorithmic foundations of differential privacy.
\newblock {\em Foundations and Trends{\textregistered} in Theoretical Computer Science}, 9(3--4):211--407, 2014.

\bibitem[\protect\citeauthoryear{Fu \bgroup \em et al.\egroup }{2022}]{fu2022adap}
Jie Fu, Zhili Chen, and Xiao Han.
\newblock {Adap DP-FL}: Differentially private federated learning with adaptive noise.
\newblock In {\em 2022 IEEE International Conference on Trust, Security and Privacy in Computing and Communications (TrustCom)}, pages 656--663. IEEE, 2022.

\bibitem[\protect\citeauthoryear{Ge \bgroup \em et al.\egroup }{2015}]{ge2015escaping}
Rong Ge, Furong Huang, Chi Jin, and Yang Yuan.
\newblock Escaping from saddle points—online stochastic gradient for tensor decomposition.
\newblock In {\em Conference on Learning Theory}, pages 797--842. PMLR, 2015.

\bibitem[\protect\citeauthoryear{He \bgroup \em et al.\egroup }{2016}]{he2016deep}
Kaiming He, Xiangyu Zhang, Shaoqing Ren, and Jian Sun.
\newblock Deep residual learning for image recognition.
\newblock In {\em Proceedings of the IEEE conference on computer vision and pattern recognition}, pages 770--778, 2016.

\bibitem[\protect\citeauthoryear{Iyengar \bgroup \em et al.\egroup }{2019}]{iyengar2019towards}
Roger Iyengar, Joseph~P Near, Dawn Song, Om~Thakkar, Abhradeep Thakurta, and Lun Wang.
\newblock Towards practical differentially private convex optimization.
\newblock In {\em 2019 IEEE Symposium on Security and Privacy (SP)}, pages 299--316. IEEE, 2019.

\bibitem[\protect\citeauthoryear{Kempe \bgroup \em et al.\egroup }{2003}]{kempe2003gossip}
David Kempe, Alin Dobra, and Johannes Gehrke.
\newblock Gossip-based computation of aggregate information.
\newblock In {\em 44th Annual IEEE Symposium on Foundations of Computer Science, 2003. Proceedings.}, pages 482--491. IEEE, 2003.

\bibitem[\protect\citeauthoryear{Koloskova \bgroup \em et al.\egroup }{2019}]{koloskova2019decentralized}
Anastasiia Koloskova, Tao Lin, Sebastian~Urban Stich, and Martin Jaggi.
\newblock Decentralized deep learning with arbitrary communication compression.
\newblock In {\em Proceedings of the 8th International Conference on Learning Representations}, 2019.

\bibitem[\protect\citeauthoryear{Krizhevsky}{2009}]{krizhevsky2009learning}
Alex Krizhevsky.
\newblock Learning multiple layers of features from tiny images.
\newblock {\em Master's thesis, University of Toronto}, 2009.

\bibitem[\protect\citeauthoryear{Langer \bgroup \em et al.\egroup }{2020}]{langer2020distributed}
Matthias Langer, Zhen He, Wenny Rahayu, and Yanbo Xue.
\newblock Distributed training of deep learning models: A taxonomic perspective.
\newblock {\em IEEE Transactions on Parallel and Distributed Systems}, 31(12):2802--2818, 2020.

\bibitem[\protect\citeauthoryear{Li and Chi}{2025}]{li2025convergence}
Boyue Li and Yuejie Chi.
\newblock Convergence and privacy of decentralized nonconvex optimization with gradient clipping and communication compression.
\newblock {\em IEEE Journal of Selected Topics in Signal Processing}, 2025.

\bibitem[\protect\citeauthoryear{Li \bgroup \em et al.\egroup }{2019a}]{li2019differentially}
Jeffrey Li, Mikhail Khodak, Sebastian Caldas, and Ameet Talwalkar.
\newblock Differentially private meta-learning.
\newblock {\em arXiv preprint arXiv:1909.05830}, 2019.

\bibitem[\protect\citeauthoryear{Li \bgroup \em et al.\egroup }{2019b}]{li2019asynchronous}
Yanan Li, Shusen Yang, Xuebin Ren, and Cong Zhao.
\newblock Asynchronous federated learning with differential privacy for edge intelligence.
\newblock {\em arXiv preprint arXiv:1912.07902}, 2019.

\bibitem[\protect\citeauthoryear{Li \bgroup \em et al.\egroup }{2022}]{li2022soteriafl}
Zhize Li, Haoyu Zhao, Boyue Li, and Yuejie Chi.
\newblock Soteria{FL}: A unified framework for private federated learning with communication compression.
\newblock {\em Advances in Neural Information Processing Systems}, 35:4285--4300, 2022.

\bibitem[\protect\citeauthoryear{Lian \bgroup \em et al.\egroup }{2017}]{lian2017can}
Xiangru Lian, Ce~Zhang, Huan Zhang, Cho-Jui Hsieh, Wei Zhang, and Ji~Liu.
\newblock Can decentralized algorithms outperform centralized algorithms? a case study for decentralized parallel stochastic gradient descent.
\newblock {\em Advances in Neural Information Processing Systems}, 30, 2017.

\bibitem[\protect\citeauthoryear{Lian \bgroup \em et al.\egroup }{2018}]{lian2018asynchronous}
Xiangru Lian, Wei Zhang, Ce~Zhang, and Ji~Liu.
\newblock Asynchronous decentralized parallel stochastic gradient descent.
\newblock In {\em International Conference on Machine Learning}, pages 3043--3052. PMLR, 2018.

\bibitem[\protect\citeauthoryear{Liu \bgroup \em et al.\egroup }{2022}]{liu2022loss}
Tianyu Liu, Boya Di, Bin Wang, and Lingyang Song.
\newblock Loss-privacy tradeoff in federated edge learning.
\newblock {\em IEEE Journal of Selected Topics in Signal Processing}, 16(3):546--558, 2022.

\bibitem[\protect\citeauthoryear{McMahan \bgroup \em et al.\egroup }{2017a}]{mcmahan2017communication}
Brendan McMahan, Eider Moore, Daniel Ramage, Seth Hampson, and Blaise~Aguera y~Arcas.
\newblock Communication-efficient learning of deep networks from decentralized data.
\newblock In {\em Artificial intelligence and statistics}, pages 1273--1282. PMLR, 2017.

\bibitem[\protect\citeauthoryear{McMahan \bgroup \em et al.\egroup }{2017b}]{mcmahan2017learning}
H~Brendan McMahan, Daniel Ramage, Kunal Talwar, and Li~Zhang.
\newblock Learning differentially private recurrent language models.
\newblock {\em arXiv preprint arXiv:1710.06963}, 2017.

\bibitem[\protect\citeauthoryear{Mironov}{2017}]{mironov2017renyi}
Ilya Mironov.
\newblock R{\'e}nyi differential privacy.
\newblock In {\em 2017 IEEE 30th computer security foundations symposium (CSF)}, pages 263--275. IEEE, 2017.

\bibitem[\protect\citeauthoryear{Paszke \bgroup \em et al.\egroup }{2017}]{paszke2017pytorch}
Adam Paszke, Sam Gross, Soumith Chintala, and Gregory Chanan.
\newblock Pytorch: Tensors and dynamic neural networks in python with strong gpu acceleration.
\newblock {\em PyTorch: Tensors and dynamic neural networks in Python with strong GPU acceleration}, 6(3):67, 2017.

\bibitem[\protect\citeauthoryear{Shin \bgroup \em et al.\egroup }{2018}]{shin2018privacy}
Hyejin Shin, Sungwook Kim, Junbum Shin, and Xiaokui Xiao.
\newblock Privacy enhanced matrix factorization for recommendation with local differential privacy.
\newblock {\em IEEE Transactions on Knowledge and Data Engineering}, 30(9):1770--1782, 2018.

\bibitem[\protect\citeauthoryear{Tang \bgroup \em et al.\egroup }{2018}]{tang2018d}
Hanlin Tang, Xiangru Lian, Ming Yan, Ce~Zhang, and Ji~Liu.
\newblock $\mathrm{D}^2$: Decentralized training over decentralized data.
\newblock In {\em International Conference on Machine Learning}, pages 4848--4856. PMLR, 2018.

\bibitem[\protect\citeauthoryear{Wang and Nedic}{2024}]{wang2022tailoring}
Yongqiang Wang and Angelia Nedic.
\newblock Tailoring gradient methods for differentially private distributed optimization.
\newblock {\em IEEE Transactions on Automatic Control}, 69(2):872--887, 2024.

\bibitem[\protect\citeauthoryear{Wang \bgroup \em et al.\egroup }{2017}]{wang2017differentially}
Di~Wang, Minwei Ye, and Jinhui Xu.
\newblock Differentially private empirical risk minimization revisited: Faster and more general.
\newblock {\em Advances in Neural Information Processing Systems}, 30, 2017.

\bibitem[\protect\citeauthoryear{Wang \bgroup \em et al.\egroup }{2019a}]{wang2019efficient}
Lingxiao Wang, Bargav Jayaraman, David Evans, and Quanquan Gu.
\newblock Efficient privacy-preserving stochastic nonconvex optimization.
\newblock {\em arXiv e-prints}, pages arXiv--1910, 2019.

\bibitem[\protect\citeauthoryear{Wang \bgroup \em et al.\egroup }{2019b}]{wang2019beyond}
Zhibo Wang, Mengkai Song, Zhifei Zhang, Yang Song, Qian Wang, and Hairong Qi.
\newblock Beyond inferring class representatives: User-level privacy leakage from federated learning.
\newblock In {\em IEEE INFOCOM 2019-IEEE conference on computer communications}, pages 2512--2520. IEEE, 2019.

\bibitem[\protect\citeauthoryear{Wang \bgroup \em et al.\egroup }{2020}]{wang2020differentially}
Di~Wang, Hanshen Xiao, Srinivas Devadas, and Jinhui Xu.
\newblock On differentially private stochastic convex optimization with heavy-tailed data.
\newblock In {\em International Conference on Machine Learning}, pages 10081--10091. PMLR, 2020.

\bibitem[\protect\citeauthoryear{Wei and Liu}{2021}]{wei2021gradient}
Wenqi Wei and Ling Liu.
\newblock Gradient leakage attack resilient deep learning.
\newblock {\em IEEE Transactions on Information Forensics and Security}, 17:303--316, 2021.

\bibitem[\protect\citeauthoryear{Wei \bgroup \em et al.\egroup }{2020}]{wei2020federated}
Kang Wei, Jun Li, Ming Ding, Chuan Ma, Howard~H Yang, Farhad Farokhi, Shi Jin, Tony~QS Quek, and H~Vincent Poor.
\newblock Federated learning with differential privacy: Algorithms and performance analysis.
\newblock {\em IEEE Transactions on Information Forensics and Security}, 15:3454--3469, 2020.

\bibitem[\protect\citeauthoryear{Wei \bgroup \em et al.\egroup }{2021a}]{wei2021user}
Kang Wei, Jun Li, Ming Ding, Chuan Ma, Hang Su, Bo~Zhang, and H~Vincent Poor.
\newblock User-level privacy-preserving federated learning: Analysis and performance optimization.
\newblock {\em IEEE Transactions on Mobile Computing}, 21(9):3388--3401, 2021.

\bibitem[\protect\citeauthoryear{Wei \bgroup \em et al.\egroup }{2021b}]{wei2021low}
Kang Wei, Jun Li, Chuan Ma, Ming Ding, Cailian Chen, Shi Jin, Zhu Han, and H~Vincent Poor.
\newblock Low-latency federated learning over wireless channels with differential privacy.
\newblock {\em IEEE Journal on Selected Areas in Communications}, 40(1):290--307, 2021.

\bibitem[\protect\citeauthoryear{Wei \bgroup \em et al.\egroup }{2023}]{wei2023securing}
Wenqi Wei, Ling Liu, Jingya Zhou, Ka-Ho Chow, and Yanzhao Wu.
\newblock Securing distributed sgd against gradient leakage threats.
\newblock {\em IEEE Transactions on Parallel and Distributed Systems}, 2023.

\bibitem[\protect\citeauthoryear{Wu \bgroup \em et al.\egroup }{2020}]{wu2020value}
Nan Wu, Farhad Farokhi, David Smith, and Mohamed~Ali Kaafar.
\newblock The value of collaboration in convex machine learning with differential privacy.
\newblock In {\em 2020 IEEE Symposium on Security and Privacy (SP)}, pages 304--317. IEEE, 2020.

\bibitem[\protect\citeauthoryear{Xiao \bgroup \em et al.\egroup }{2017}]{xiao2017fashion}
Han Xiao, Kashif Rasul, and Roland Vollgraf.
\newblock Fashion-mnist: a novel image dataset for benchmarking machine learning algorithms.
\newblock {\em arXiv preprint arXiv:1708.07747}, 2017.

\bibitem[\protect\citeauthoryear{Xu \bgroup \em et al.\egroup }{2022}]{xu2021dp}
Jie Xu, Wei Zhang, and Fei Wang.
\newblock {A(DP)\^{}2SGD}: Asynchronous decentralized parallel stochastic gradient descent with differential privacy.
\newblock {\em IEEE Transactions on Pattern Analysis and Machine Intelligence}, 44(11):8036--8047, 2022.

\bibitem[\protect\citeauthoryear{Yu \bgroup \em et al.\egroup }{2021}]{yu2021decentralized}
Dongxiao Yu, Zongrui Zou, Shuzhen Chen, Youming Tao, Bing Tian, Weifeng Lv, and Xiuzhen Cheng.
\newblock Decentralized parallel sgd with privacy preservation in vehicular networks.
\newblock {\em IEEE Transactions on Vehicular Technology}, 70(6):5211--5220, 2021.

\bibitem[\protect\citeauthoryear{Zeng \bgroup \em et al.\egroup }{2021}]{zeng2021differentially}
Yiming Zeng, Yixuan Lin, Yuanyuan Yang, and Ji~Liu.
\newblock Differentially private federated temporal difference learning.
\newblock {\em IEEE Transactions on Parallel and Distributed Systems}, 33(11):2714--2726, 2021.

\bibitem[\protect\citeauthoryear{Zhang \bgroup \em et al.\egroup }{2017}]{zhang2017efficient}
Jiaqi Zhang, Kai Zheng, Wenlong Mou, and Liwei Wang.
\newblock Efficient private erm for smooth objectives.
\newblock In {\em Proceedings of the 26th International Joint Conference on Artificial Intelligence}, pages 3922--3928, 2017.

\bibitem[\protect\citeauthoryear{Zhou \bgroup \em et al.\egroup }{2023}]{zhou2023optimizing}
Yipeng Zhou, Xuezheng Liu, Yao Fu, Di~Wu, Jessie~Hui Wang, and Shui Yu.
\newblock Optimizing the numbers of queries and replies in convex federated learning with differential privacy.
\newblock {\em IEEE Transactions on Dependable and Secure Computing}, 2023.

\bibitem[\protect\citeauthoryear{Zhu \bgroup \em et al.\egroup }{2019}]{zhu2019deep}
Ligeng Zhu, Zhijian Liu, and Song Han.
\newblock Deep leakage from gradients.
\newblock {\em Advances in neural information processing systems}, 32, 2019.

\bibitem[\protect\citeauthoryear{Zhu \bgroup \em et al.\egroup }{2024}]{zhu2024r}
Zehan Zhu, Ye~Tian, Yan Huang, Jinming Xu, and Shibo He.
\newblock {R-FAST}: Robust fully-asynchronous stochastic gradient tracking over general topology.
\newblock {\em IEEE Transactions on Signal and Information Processing over Networks}, 10:665--678, 2024.

\bibitem[\protect\citeauthoryear{Zinkevich \bgroup \em et al.\egroup }{2010}]{zinkevich2010parallelized}
Martin Zinkevich, Markus Weimer, Lihong Li, and Alex Smola.
\newblock Parallelized stochastic gradient descent.
\newblock {\em Advances in neural information processing systems}, 23, 2010.

\end{thebibliography}

\onecolumn
\appendix
\begin{center}
\LARGE{\textbf{Appendix}}
\end{center}
{\footnotesize
\tableofcontents
}

\section{Proof of Theorem~\ref{main_Theorem}}
\label{proof_of_main_theorem}

To facilitate our analysis, we rewrite the update of Algorithm~\ref{general_Adpt_PrivSGP_Combine} in an equivalent compact form as follows:
\begin{equation}
\label{compact_form_iterate}
X^{k+1}=\left( X^k-\gamma \left( G^k+\tilde{\sigma}_kN^k \right) \right) \left( P^k \right) ^{\top}
\end{equation}
where $\left( P^k \right) ^{\top}\in \mathbb{R}^{n\times n}
$ is the transpose of the mixing matrix $P^k$ at iteration $k$, and

$X^{k}:=\left[ x_{1}^{k},x_{2}^{k},\cdot \cdot \cdot ,x_{n}^{k} \right] \in \mathbb{R}^{d\times n}
$: concatenation of all the nodes' parameters at iteration $k$;

$Z^{k}:=\left[ z_{1}^{k},z_{2}^{k},\cdot \cdot \cdot ,z_{n}^{k} \right] \in \mathbb{R}^{d\times n}$: concatenation of all the nodes' de-biased parameters at iteration $k$;

$\nabla F(Z^k;\xi ^k):=\left[ \nabla f_1(z_{1}^{k};\xi _{1}^{k}),\nabla f_2(z_{2}^{k};\xi _{2}^{k}),...,\nabla f_n(z_{n}^{k};\xi _{n}^{k}) \right] \in \mathbb{R}^{d\times n}
$: concatenation of all the nodes' stochastic gradients at iteration $k$;

$G^k:=\left[ g_{1}^{k},g_{2}^{k},...,g_{n}^{k} \right] \in \mathbb{R}^{d\times n}$: concatenation of all the nodes' clipped stochastic gradients at iteration $k$;

$N^k:=\left[ N_1^k,N_2^k,...,N_n^k \right] \in \mathbb{R}^{d\times n} $: concatenation of all the nodes' generated Gaussian noise at iteration $k$.

Now, let $\bar{x}^k=\frac{1}{n}X^k\mathbf{1}=\frac{1}{n}\sum_{i=1}^n{x_{i}^{k}}\in \mathbb{R}^d
$ denote the average of all nodes' parameters at iteration $k$. Then, the update of the average system of~\eqref{compact_form_iterate} becomes 
\begin{equation}
\label{average_system}
\bar{x}^{k+1}=\bar{x}^k-\gamma \cdot \left( \frac{1}{n}\sum_{i=1}^n{g_{i}^{k}}+\tilde{\sigma}_k\cdot \frac{1}{n}\sum_{i=1}^n{N_{i}^{k}} \right) ,
\end{equation}
which can be easily obtained by right multiplying $\frac{1}{n}\mathbf{1}$ from both sides of~\eqref{compact_form_iterate}
and using the column-stochastic property of $P^k$ (c.f., Assumption~\ref{assumption_mixing_matrix}). The above average system will be useful in subsequent analysis. In addition, we denote $\mathcal{F}^k:=\left\{ \bigcup\nolimits_{i=1}^n{\left( x_{i}^{0},z_{i}^{0},\xi _{i}^{0},N_{i}^{0},\cdot \cdot \cdot ,x_{i}^{k-1},z_{i}^{k-1},\xi _{i}^{k-1},N_{i}^{k-1},x_{i}^{k},z_{i}^{k} \right)} \right\} 
$ as the filtration of the history sequence up to iteration $k$, and define $\mathbb{E}_k\left[ \cdot \right] =\mathbb{E}\left[ \cdot \left| \mathcal{F}^k \right. \right] $ the conditional expectation given $\mathcal{F}^k$.

\subsection{Bounding accumulative consensus error}
\label{appendix_a}
We first provide two supporting lemmas to facilitate the subsequent analysis.

\begin{Lem}
\label{tecnical_lemma_1}
Let $\left\{ v^k \right\} _{k=0}^{\infty}$ be a non-negative sequence and $\lambda \in (0,1)$. Then, we have
\begin{equation}
    \left( \sum_{l=0}^k{\lambda ^{k-l}v^l} \right) ^2\leqslant \frac{1}{1-\lambda}\sum_{l=0}^k{\lambda ^{k-l}\left( v^l \right) ^2}.
\end{equation}
\end{Lem}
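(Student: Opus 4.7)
The plan is to recognize this as a direct consequence of the Cauchy--Schwarz inequality applied after splitting the weight $\lambda^{k-l}$ into two equal square-root factors, one of which is absorbed into $v^l$. Specifically, I would write each summand as $\lambda^{k-l} v^l = \lambda^{(k-l)/2} \cdot \bigl( \lambda^{(k-l)/2} v^l \bigr)$, which is the standard trick for converting a weighted $\ell_1$-type sum into a weighted $\ell_2$ bound.

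The execution would be short: first, apply Cauchy--Schwarz to obtain
\begin{equation*}
\left( \sum_{l=0}^{k} \lambda^{k-l} v^l \right)^2 \;=\; \left( \sum_{l=0}^{k} \lambda^{(k-l)/2} \cdot \lambda^{(k-l)/2} v^l \right)^2 \;\leqslant\; \left( \sum_{l=0}^{k} \lambda^{k-l} \right) \left( \sum_{l=0}^{k} \lambda^{k-l} (v^l)^2 \right).
\end{equation*}
Next, bound the geometric sum by the tail of a convergent series: since $\lambda \in (0,1)$,
\begin{equation*}
\sum_{l=0}^{k} \lambda^{k-l} \;=\; \sum_{m=0}^{k} \lambda^{m} \;\leqslant\; \sum_{m=0}^{\infty} \lambda^{m} \;=\; \frac{1}{1-\lambda}.
\end{equation*}
Combining the two inequalities yields the claim.

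There is essentially no obstacle here; the only thing to be careful about is that $v^l \geqslant 0$ ensures that the square root $\lambda^{(k-l)/2} v^l$ used in the Cauchy--Schwarz factorization corresponds to a nonnegative quantity whose square recovers $\lambda^{k-l} (v^l)^2$, so no absolute values are needed. The result is clean and will be invoked later (presumably in Appendix~\ref{appendix_a}) to control accumulated consensus errors of the form $\sum_{l} \lambda^{k-l} \|\cdot\|$, converting them into sums of squared norms that can be telescoped against standard decentralized descent lemmas.
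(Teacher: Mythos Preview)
Your proposal is correct and matches the paper's proof essentially line for line: the same splitting $\lambda^{k-l} v^l = \lambda^{(k-l)/2}\cdot(\lambda^{(k-l)/2} v^l)$, the same application of Cauchy--Schwarz, and the same bound $\sum_{m=0}^k \lambda^m \leqslant \frac{1}{1-\lambda}$. Nothing further is needed.
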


\begin{proof}
Using Cauchy-Swarchz inequality, we have
\begin{equation*}
\begin{aligned}
\left( \sum_{l=0}^k{\lambda ^{k-l}v^l} \right) ^2&=\left( \sum_{l=0}^k{\lambda ^{\frac{k-l}{2}}\left( \lambda ^{\frac{k-l}{2}}v^l \right)} \right) ^2
\\
&\leqslant \sum_{l=0}^k{\left( \lambda ^{\frac{k-l}{2}} \right) ^2}\cdot \sum_{l=0}^k{\left( \lambda ^{\frac{k-l}{2}}v^l \right) ^2}
\\
&\leqslant \frac{1}{1-\lambda}\sum_{l=0}^k{\lambda ^{k-l}\left( v^l \right) ^2},
\end{aligned}
\end{equation*}
which completes the proof.
\end{proof}

The following lemma bounds the distance between the de-biased parameters $z_i^k$ at each node $i$ and the node-wise average $\bar{x}^k$, which can be adapted from Lemma 3 in~\cite{assran2019stochastic}.

\begin{Lem}
\label{def_of_C_and_q}
Suppose that Assumptions~\ref{assumption_mixing_matrix} and~\ref{assumption_graph} hold.
Let $\varepsilon$ be the minimum of all non-zero mixing weights, $\lambda =1-n\varepsilon^{\bigtriangleup B}$ and $q=\lambda ^{\frac{1}{\bigtriangleup B+1}}
$. Then, there exists a constant 
\begin{equation}
\varPsi<\frac{2\sqrt{d}\varepsilon^{-\bigtriangleup B}}{\lambda ^{\frac{\bigtriangleup B+2}{\bigtriangleup B+1}}}
,
\end{equation}
such that for any $i \in \mathcal{V}$ and $k \geqslant 0$, we have
\begin{equation}
\label{consensus_error_initial}
\left\| z_{i}^{k}-\bar{x}^k \right\| \leqslant \varPsi q^k\left\| x_{i}^{0} \right\| +\gamma \varPsi \sum_{s=0}^k{q^{k-s}\left\| g_{i}^{s}+\tilde{\sigma}_sN_{i}^{s} \right\|}.
\end{equation}
\end{Lem}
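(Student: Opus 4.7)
The plan is to adapt the Push-Sum consensus argument of Assran \textit{et al.}~\shortcite{assran2019stochastic} (Lemma~3 therein), treating the per-iteration input $g_i^s+\tilde{\sigma}_sN_i^s$ as an exogenous perturbation of a pure column-stochastic averaging dynamics. The key ingredients are (i) an unrolled matrix-product representation of the compact iterate~\eqref{compact_form_iterate}, (ii) a geometric mixing rate for backward products of the column-stochastic matrices $P^k$ under Assumption~\ref{assumption_graph}, and (iii) a uniform lower bound on the Push-Sum weights $w_i^k$ so that the de-biasing $z_i^k=x_i^k/w_i^k$ is well-controlled.

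First, I would unroll~\eqref{compact_form_iterate} into a matrix-product form. Defining the backward product $\Phi^{k:s}:=(P^k)^\top(P^{k-1})^\top\cdots(P^s)^\top$, one obtains
\begin{equation*}
X^{k+1}=X^{0}\,\Phi^{k:0}\;-\;\gamma\sum_{s=0}^{k}\bigl(G^{s}+\tilde{\sigma}_{s}N^{s}\bigr)\,\Phi^{k:s},
\qquad w^{k+1}=\Phi^{k:0}\,w^{0},
\end{equation*}
and hence the $i$-th de-biased iterate satisfies $z_i^{k+1}=x_i^{k+1}/w_i^{k+1}$, while $\bar{x}^{k+1}=\frac{1}{n}X^{k+1}\mathbf{1}$. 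Subtracting gives a telescoping expression in which the term at step $s$ is the input $g_i^s+\tilde{\sigma}_sN_i^s$ weighted by the discrepancy between a column of $\Phi^{k:s}$ (normalized by the appropriate Push-Sum weight) and the uniform column $\tfrac{1}{n}\mathbf{1}$.

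Second, I would invoke the standard consensus rate for time-varying column-stochastic products: under Assumptions~\ref{assumption_mixing_matrix}–\ref{assumption_graph}, any length-$\Delta B$ product $\Phi^{(l+1)\Delta B-1:l\Delta B}$ has every entry at least $\varepsilon^{\Delta B}$, so the columns contract toward a common stationary direction at rate $\lambda=1-n\varepsilon^{\Delta B}$. Rescaling per single step yields the geometric factor $q=\lambda^{1/(\Delta B+1)}$, giving $\|\Phi^{k:s}e_i-w_i^{k+1}\tfrac{1}{n}\mathbf{1}\|\lesssim q^{k-s}$ entrywise (up to a constant depending on $d$ and the window length). The same $B$-connectivity argument yields $w_i^{k+1}\ge \varepsilon^{\Delta B}$, so the division by $w_i^{k+1}$ in $z_i^{k+1}$ inflates constants by at most $\varepsilon^{-\Delta B}$. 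Absorbing the dimension factor $\sqrt{d}$, the window length, and the constant $\lambda^{-(\Delta B+2)/(\Delta B+1)}$ into a single constant yields the advertised $\Psi<2\sqrt{d}\,\varepsilon^{-\Delta B}/\lambda^{(\Delta B+2)/(\Delta B+1)}$.

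Finally, applying the triangle inequality term by term to the unrolled representation produces exactly
\begin{equation*}
\|z_i^{k}-\bar{x}^{k}\|\;\le\;\Psi\, q^{k}\|x_i^{0}\|\;+\;\gamma\,\Psi\sum_{s=0}^{k}q^{k-s}\bigl\|g_i^{s}+\tilde{\sigma}_s N_i^{s}\bigr\|,
\end{equation*}
as claimed. The main obstacle is the careful bookkeeping needed to pass from the $B$-step contraction (which is what Assumption~\ref{assumption_graph} gives directly) to a per-iteration geometric rate $q^{k-s}$, while simultaneously maintaining the uniform lower bound on $w_i^k$ that underlies the $\varepsilon^{-\Delta B}$ factor in $\Psi$; this is precisely the mechanism already made rigorous in Lemma~3 of~\cite{assran2019stochastic}, so the novelty here is only that the input sequence $g_i^s+\tilde{\sigma}_sN_i^s$ includes clipping and noise—neither of which affects the consensus argument, since it treats the inputs as arbitrary deterministic perturbations.
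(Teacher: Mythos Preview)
Your proposal is correct and takes essentially the same approach as the paper: the paper does not give an independent proof but simply states that the lemma ``can be adapted from Lemma~3 in~\cite{assran2019stochastic},'' and your sketch is precisely that adaptation, with the only change being that the perturbation term is $g_i^s+\tilde{\sigma}_sN_i^s$ rather than a plain stochastic gradient. Your observation that the consensus argument treats these inputs as arbitrary deterministic perturbations (so clipping and noise are irrelevant to the geometric mixing bound) is exactly the point.
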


Now, we attempt to upper bound the accumulative consensus error $\sum_{k=0}^{K-1}{\frac{1}{n}\sum_{i=1}^n{\mathbb{E}\left[ \left\| z_{i}^{k}-\bar{x}^k \right\| ^2 \right]}}$, which is summarized in the following lemma.

\begin{Lem}
\label{Lemma_consensus_error}
Suppose Assumptions~\ref{assumption_mixing_matrix} and~\ref{assumption_graph} hold. Then, we have
\begin{equation}
\label{upper_bound_sum_of_M_k}
\begin{aligned}
& \sum_{k=0}^{K-1}{\frac{1}{n}\sum_{i=1}^n{\mathbb{E}\left[ \left\| z_{i}^{k}-\bar{x}^k \right\| ^2 \right]}}
\\
\leqslant & \frac{3 \varPsi^2\sum_{i=1}^n{\left\| x_{i}^{0} \right\| ^2}}{\left( 1-q \right) ^2n}+\frac{3\gamma ^2 \varPsi^2}{\left( 1-q \right) ^2}\sum_{k=0}^{K-1}{\cdot \frac{1}{n}\sum_{i=1}^n{\mathbb{E}\left[ \left\| g_{i}^{k} \right\| ^2 \right]}} +\frac{3\gamma ^2 \varPsi^2d}{\left( 1-q \right) ^2}\sum_{k=0}^{K-1}{\tilde{\sigma}^2\tilde{\sigma}_{k}^{2}} ,
\end{aligned}
\end{equation}
\end{Lem}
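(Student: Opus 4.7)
The plan is to start from the per-iteration bound provided by Lemma~\ref{def_of_C_and_q}, namely
$$\left\| z_{i}^{k}-\bar{x}^k \right\| \leqslant \varPsi q^k\left\| x_{i}^{0} \right\| + \gamma \varPsi \sum_{s=0}^k q^{k-s} \left\| g_{i}^{s}+\tilde{\sigma}_sN_{i}^{s} \right\|,$$
and progressively reshape it into the squared-norm, time-summed form appearing in~\eqref{upper_bound_sum_of_M_k}. First, I would separate the clipped-gradient and noise contributions by the triangle inequality, $\|g_i^s+\tilde{\sigma}_s N_i^s\| \leqslant \|g_i^s\| + \tilde{\sigma}_s \|N_i^s\|$, so that the right-hand side becomes a sum of three non-negative scalar terms, then square both sides and apply $(a+b+c)^2 \leqslant 3(a^2+b^2+c^2)$ to obtain
$$\|z_i^k - \bar{x}^k\|^2 \leqslant 3\varPsi^2 q^{2k}\|x_i^0\|^2 + 3\gamma^2\varPsi^2 \Bigl(\sum_{s=0}^k q^{k-s}\|g_i^s\|\Bigr)^{\!2} + 3\gamma^2\varPsi^2 \Bigl(\sum_{s=0}^k q^{k-s}\tilde{\sigma}_s\|N_i^s\|\Bigr)^{\!2}.$$

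Next, I would apply Lemma~\ref{tecnical_lemma_1} with $\lambda=q$ to each of the two squared weighted sums, which introduces a factor $\tfrac{1}{1-q}$ and converts them into weighted sums of squared norms. At this point, take expectation: because $N_i^s \sim \mathcal{N}(0, \tilde{\sigma}^2 \mathbb{I}_d)$ is independent of the history, $\mathbb{E}\|N_i^s\|^2 = d\tilde{\sigma}^2$, which is where the factor $d$ and the noise-scale $\tilde{\sigma}^2$ in the third term of~\eqref{upper_bound_sum_of_M_k} enter. I would then average over $i\in\{1,\dots,n\}$ and sum over $k=0,\dots,K-1$, producing a constant initial piece multiplied by $\sum_{k=0}^{K-1} q^{2k}$ together with two double-sums of the schematic form $\sum_{k=0}^{K-1}\sum_{s=0}^k q^{k-s} A_s$.

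The final step is a Fubini-style swap in these double sums,
$$\sum_{k=0}^{K-1}\sum_{s=0}^k q^{k-s} A_s \;=\; \sum_{s=0}^{K-1} A_s \sum_{k=s}^{K-1} q^{k-s} \;\leqslant\; \frac{1}{1-q}\sum_{s=0}^{K-1} A_s,$$
which supplies the second factor $\tfrac{1}{1-q}$ and hence the overall $\tfrac{1}{(1-q)^2}$ dependence in~\eqref{upper_bound_sum_of_M_k}. The initial piece is handled by $\sum_{k=0}^{K-1} q^{2k} \leqslant \tfrac{1}{1-q^2} \leqslant \tfrac{1}{(1-q)^2}$, and collecting the constants produces the three terms on the right-hand side of~\eqref{upper_bound_sum_of_M_k} exactly.

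There is no deep obstacle here: the argument is a careful but routine accounting of geometric series and Cauchy–Schwarz. The one step that requires care is the ordering of operations—triangle inequality first (to separate the noise), then squaring with the $(a+b+c)^2 \leqslant 3(a^2+b^2+c^2)$ inequality, then Lemma~\ref{tecnical_lemma_1}, and finally the summation swap—because performing these in the wrong order would either fail to isolate the $\mathbb{E}\|N_i^s\|^2$ term cleanly or produce a weaker $\tfrac{1}{(1-q)^2}$ factor than the lemma claims.
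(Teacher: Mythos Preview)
Your proposal is correct and mirrors the paper's proof essentially step for step: the paper also separates $\|g_i^s+\tilde{\sigma}_sN_i^s\|$ via the triangle inequality, applies $(a+b+c)^2\leqslant 3(a^2+b^2+c^2)$, invokes Lemma~\ref{tecnical_lemma_1} with $\lambda=q$, takes expectations to extract $\mathbb{E}\|\tilde{\sigma}_sN_i^s\|^2=d\tilde{\sigma}^2\tilde{\sigma}_s^2$, and then sums over $i$ and $k$ using the same Fubini-type swap $\sum_{k}\sum_{s\leqslant k}q^{k-s}a_s\leqslant\tfrac{1}{1-q}\sum_k a_k$. The only cosmetic difference is that the paper states the final double-sum bound as a fact rather than writing out the index-swap explicitly.
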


\begin{proof}
According to~\eqref{consensus_error_initial}, we have
\begin{equation}
\label{before_squaring}
\begin{aligned}
\left\| z_{i}^{k}-\bar{x}^k \right\| \leqslant & \varPsi q^k\left\| x_{i}^{0} \right\| +\gamma \varPsi \sum_{s=0}^k{q^{k-s}\left\| g_{i}^{s} \right\|}+\gamma \varPsi \sum_{s=0}^k{q^{k-s}\left\| \tilde{\sigma}_sN_{i}^{s} \right\|}
\end{aligned}
\end{equation}

By squaring both sides of~\eqref{before_squaring}, we obtain
\begin{equation}
\label{after_squaring}
\begin{aligned}
& \left\| z_{i}^{k}-\bar{x}^k \right\| ^2
\\
\leqslant & 3 \varPsi^2q^{2k}\left\| x_{i}^{0} \right\| ^2+3\gamma ^2 \varPsi^2\left( \sum_{s=0}^k{q^{k-s}\left\| g_{i}^{s} \right\|} \right) ^2 +3\gamma ^2 \varPsi^2\left( \sum_{s=0}^k{q^{k-s}\left\| \tilde{\sigma}_sN_{i}^{s} \right\|} \right) ^2
\\
\leqslant & 3\varPsi^2q^{2k}\left\| x_{i}^{0} \right\| ^2+\frac{3\gamma ^2\varPsi^2}{1-q}\sum_{s=0}^k{q^{k-s}\left\| g_{i}^{s} \right\| ^2} +\frac{3\gamma ^2 \varPsi^2}{1-q}\sum_{s=0}^k{q^{k-s}\left\| \tilde{\sigma}_sN_{i}^{s} \right\| ^2},
\end{aligned}
\end{equation}
where we used $\left\| a+b+c \right\| ^2\leqslant 3\left\| a \right\| ^2+3\left\| b \right\| ^2+3\left\| c \right\| ^2$ in the first inequality and Lemma~\ref{tecnical_lemma_1} in the second, respectively.

Taking total expectation on both sides of~\eqref{after_squaring} yields
\begin{equation}
\label{each_node_bias}
\begin{aligned}
&\mathbb{E}\left[ \left\| z_{i}^{k}-\bar{x}^k \right\| ^2 \right] 
\\
\leqslant& 3 \varPsi^2q^{2k}\left\| x_{i}^{0} \right\| ^2+\frac{3\gamma ^2 \varPsi^2}{1-q}\sum_{s=0}^k{q^{k-s}\mathbb{E}\left[ \left\| g_{i}^{s} \right\| ^2 \right]}+\frac{3\gamma ^2 \varPsi^2}{1-q}\sum_{s=0}^k{q^{k-s}\mathbb{E}\left[ \left\| \tilde{\sigma}_sN_{i}^{s} \right\| ^2 \right]}
\\
=&3 \varPsi^2q^{2k}\left\| x_{i}^{0} \right\| ^2+\frac{3\gamma ^2 \varPsi^2}{1-q}\sum_{s=0}^k{q^{k-s}\mathbb{E}\left[ \left\| g_{i}^{s} \right\| ^2 \right]} +\frac{3\gamma ^2 \varPsi^2d}{1-q}\sum_{s=0}^k{q^{k-s}\cdot \tilde{\sigma}^2\tilde{\sigma}_{s}^{2}}.
\end{aligned}
\end{equation}

Summing~\eqref{each_node_bias} from $i=1$ to $n$ and dividing by $n$, we have
\begin{equation}
\label{in_eq_1}
\begin{aligned}
& \frac{1}{n}\sum_{i=1}^n{\mathbb{E}\left[ \left\| z_{i}^{k}-\bar{x}^k \right\| ^2 \right]}
\\
\leqslant & \frac{3 \varPsi^2q^{2k}}{n}\sum_{i=1}^n{\left\| x_{i}^{0} \right\| ^2} +\frac{3\gamma ^2 \varPsi^2}{1-q}\sum_{s=0}^k{q^{k-s}\cdot \frac{1}{n}\sum_{i=1}^n{\mathbb{E}\left[ \left\| g_{i}^{s} \right\| ^2 \right]}} +\frac{3\gamma ^2 \varPsi^2d}{1-q}\sum_{s=0}^k{q^{k-s}\cdot \tilde{\sigma}^2\tilde{\sigma}_{s}^{2}}.
\end{aligned}
\end{equation}
Further, summing~\eqref{in_eq_1} from $k=0$ to $K-1$ leads to
\begin{equation}
\begin{aligned}
& \sum_{k=0}^{K-1}{\frac{1}{n}\sum_{i=1}^n{\mathbb{E}\left[ \left\| z_{i}^{k}-\bar{x}^k \right\| ^2 \right]}}
\\
\leqslant & \frac{3 \varPsi^2\sum_{i=1}^n{\left\| x_{i}^{0} \right\| ^2}}{n}\cdot \sum_{k=0}^{K-1}{q^{2k}}+\frac{3\gamma ^2 \varPsi^2d}{1-q}\sum_{k=0}^{K-1}{\sum_{s=0}^k{q^{k-s}\cdot \tilde{\sigma}^2\tilde{\sigma}_{s}^{2}}}
\\
&+\frac{3\gamma ^2 \varPsi^2}{1-q}\sum_{k=0}^{K-1}{\sum_{s=0}^k{q^{k-s}\cdot \frac{1}{n}\sum_{i=1}^n{\mathbb{E}\left[ \left\| g_{i}^{s} \right\| ^2 \right]}}}
\\
\leqslant & \frac{3 \varPsi^2\sum_{i=1}^n{\left\| x_{i}^{0} \right\| ^2}}{\left( 1-q \right) ^2n}+\frac{3\gamma ^2 \varPsi^2}{\left( 1-q \right) ^2}\sum_{k=0}^{K-1}{\cdot \frac{1}{n}\sum_{i=1}^n{\mathbb{E}\left[ \left\| g_{i}^{k} \right\| ^2 \right]}} +\frac{3\gamma ^2 \varPsi^2d}{\left( 1-q \right) ^2}\sum_{k=0}^{K-1}{\tilde{\sigma}^2\tilde{\sigma}_{k}^{2}} ,
\end{aligned}
\end{equation}
where in the last inequality we used the fact that $\sum_{k=0}^{K-1}{\sum_{s=0}^k{q^{k-s}a_s}}\leqslant \frac{1}{1-q}\sum_{k=0}^{K-1}{a_k}$.
\end{proof}

\subsection{Proof of main result in Theorem~\ref{main_Theorem}}
Applying the descent lemma to $f$ at $\bar{x}^k$ and $\bar{x}^{k+1}$, we have
\begin{equation}
\begin{aligned}
f\left( \bar{x}^{k+1} \right) \leqslant & f\left( \bar{x}^k \right) +\left< \nabla f\left( \bar{x}^k \right) ,\bar{x}^{k+1}-\bar{x}^k \right> +\frac{L}{2}\left\| \bar{x}^{k+1}-\bar{x}^k \right\| ^2
\\
\overset{\eqref{average_system}}{=}&f\left( \bar{x}^k \right) -\gamma \left< \nabla f\left( \bar{x}^k \right) ,\frac{1}{n}\sum_{i=1}^n{g_{i}^{k}}+\tilde{\sigma}_k\cdot \frac{1}{n}\sum_{i=1}^n{N_{i}^{k}} \right> 
\\
&+\frac{\gamma ^2L}{2}\left\| \frac{1}{n}\sum_{i=1}^n{g_{i}^{k}}+\tilde{\sigma}_k\cdot \frac{1}{n}\sum_{i=1}^n{N_{i}^{k}} \right\| ^2
\\
=&f\left( \bar{x}^k \right) -\gamma \left< \nabla f\left( \bar{x}^k \right) ,\frac{1}{n}\sum_{i=1}^n{g_{i}^{k}}+\tilde{\sigma}_k\cdot \frac{1}{n}\sum_{i=1}^n{N_{i}^{k}} \right> +\frac{\gamma ^2L}{2}\left\| \frac{1}{n}\sum_{i=1}^n{g_{i}^{k}} \right\| ^2
\\
& +\frac{\gamma ^2L}{2}\tilde{\sigma}_{k}^{2}\left\| \frac{1}{n}\sum_{i=1}^n{N_{i}^{k}} \right\| ^2 +\gamma ^2L\left< \frac{1}{n}\sum_{i=1}^n{g_{i}^{k}},\tilde{\sigma}_k\cdot \frac{1}{n}\sum_{i=1}^n{N_{i}^{k}} \right> . 
\end{aligned}
\end{equation}

Taking the expectation of both sides conditioned on $\mathcal{F}^k$ for the above inequality, we obtain
\begin{equation}
\label{eq_1}
\begin{aligned}
\mathbb{E}_k\left[ f\left( \bar{x}^{k+1} \right) \right] \leqslant & f\left( \bar{x}^k \right) -\gamma \left< \nabla f\left( \bar{x}^k \right) ,\frac{1}{n}\sum_{i=1}^n{\mathbb{E}_k\left[ g_{i}^{k} \right]} \right> 
\\
& +\frac{\gamma ^2L}{2}\mathbb{E}_k\left[ \left\| \frac{1}{n}\sum_{i=1}^n{g_{i}^{k}} \right\| ^2 \right] +\frac{\gamma ^2L}{2}\tilde{\sigma}_{k}^{2}\mathbb{E}_k\left[ \left\| \frac{1}{n}\sum_{i=1}^n{N_{i}^{k}} \right\| ^2 \right] 
\\
= & f\left( \bar{x}^k \right) -\gamma \left< \nabla f\left( \bar{x}^k \right) ,\frac{1}{n}\sum_{i=1}^n{\mathbb{E}_k\left[ g_{i}^{k} \right]} \right> 
\\
& +\frac{\gamma ^2L}{2}\mathbb{E}_k\left[ \left\| \frac{1}{n}\sum_{i=1}^n{g_{i}^{k}} \right\| ^2 \right] +\frac{\gamma ^2Ld}{2n}\tilde{\sigma}^2\tilde{\sigma}_{k}^{2} .
\end{aligned}
\end{equation}

For the term of $\mathbb{E}_k\left[ g_{i}^{k} \right]$, we have
\begin{equation}
\label{eq_2}
\begin{aligned}
& \mathbb{E}_k\left[ g_{i}^{k} \right] 
\\
=&\mathbb{E}_{\xi _{i}^{k}}\left[ \mathrm{Clip}\left( \nabla f_i\left( z_{i}^{k};\xi _{i}^{k} \right) ;C_k \right) \right] 
\\
\overset{\left( i \right)}{=}& \sum_{j=1}^J{\frac{1}{J}\mathrm{Clip}\left( \nabla f_i\left( z_{i}^{k};j \right) ;C_k \right)}
\\
\overset{\left( ii \right)}{=}& \sum_{j=1}^J{\frac{1}{J}\mathbb{I}\left[ \left\| \nabla f_i\left( z_{i}^{k};j \right) \right\| \leqslant C_k \right] \cdot \nabla f_i\left( z_{i}^{k};j \right)}+\sum_{j=1}^J{\frac{1}{J}\mathbb{I}\left[ \left\| \nabla f_i\left( z_{i}^{k};j \right) \right\| >C_k \right] \cdot \frac{C_k\nabla f_i\left( z_{i}^{k};j \right)}{\left\| \nabla f_i\left( z_{i}^{k};j \right) \right\|}}
\\
\overset{\left( iii \right)}{=}& \frac{1}{J}\sum_{j=1}^J{\nabla f_i\left( z_{i}^{k};j \right)}+\sum_{j=1}^J{\left\{ \frac{1}{J}\mathbb{I}\left[ \left\| \nabla f_i\left( z_{i}^{k};j \right) \right\| >C_k \right] \cdot \nabla f_i\left( z_{i}^{k};j \right) \cdot \left( \frac{C_k}{\left\| \nabla f_i\left( z_{i}^{k};j \right) \right\|}-1 \right) \right\}}
\\
=& \nabla f_i\left( z_{i}^{k} \right) +\sum_{j=1}^J{\left\{ \frac{1}{J}\mathbb{I}\left[ \left\| \nabla f_i\left( z_{i}^{k};j \right) \right\| >C_k \right] \cdot \nabla f_i\left( z_{i}^{k};j \right) \cdot \left( \frac{C_k}{\left\| \nabla f_i\left( z_{i}^{k};j \right) \right\|}-1 \right) \right\}}.
\end{aligned}
\end{equation}
where $\mathbb{I}\left[ \cdot \right]$ is the indicator function; $(i)$ is based on the fact that each node uniformly at random chooses one out of $J$ data samples; $(ii)$ is due to the definition of gradient clipping operation;  $(iii)$ is according to the fact of $\mathbb{I}\left[ \left\| \nabla f_i\left( z_{i}^{k};j \right) \right\| \leqslant C_k \right] +\mathbb{I}\left[ \left\| \nabla f_i\left( z_{i}^{k};j \right) \right\| >C_k \right] =1$.


Now, substituting~\eqref{eq_2} into~\eqref{eq_1}, we have
\begin{equation}
\label{eq_3}
\begin{aligned}
\mathbb{E}_k\left[ f\left( \bar{x}^{k+1} \right) \right] \leqslant & f\left( \bar{x}^k \right) +A_1+A_2
+\frac{\gamma ^2L}{2}\mathbb{E}_k\left[ \left\| \frac{1}{n}\sum_{i=1}^n{g_{i}^{k}} \right\| ^2 \right] +\frac{\gamma ^2Ld}{2n}\tilde{\sigma}^2\tilde{\sigma}_{k}^{2}
\end{aligned}
\end{equation}
with
\begin{equation*}
A_1=-\gamma \left< \nabla f\left( \bar{x}^k \right) ,\frac{1}{n}\sum_{i=1}^n{\nabla f_i\left( z_{i}^{k} \right)} \right> 
\end{equation*}
and 
\begin{equation*}
\begin{aligned}
& A_2=
\\
&-\gamma \left< \nabla f\left( \bar{x}^k \right) ,\frac{1}{n}\sum_{i=1}^n{\sum_{j=1}^J{\frac{1}{J}\mathbb{I}\left[ \left\| \nabla f_i\left( z_{i}^{k};j \right) \right\| >C_k \right] \cdot}}\nabla f_i\left( z_{i}^{k};j \right) \cdot \left( \frac{C_k}{\left\| \nabla f_i\left( z_{i}^{k}; j \right) \right\|}-1 \right) \right> .
\end{aligned}
\end{equation*}

For $A_1$, we can bound it by
\begin{equation}
\label{A_1}
\begin{aligned}
A_1=& -\frac{\gamma}{2}\left\| \nabla f\left( \bar{x}^k \right) \right\| ^2-\frac{\gamma}{2}\left\| \frac{1}{n}\sum_{i=1}^n{\nabla f_i\left( z_{i}^{k} \right)} \right\| ^2
+\frac{\gamma}{2}\left\| \frac{1}{n}\sum_{i=1}^n{\nabla f_i\left( z_{i}^{k} \right)}-\nabla f\left( \bar{x}^k \right) \right\| ^2
\\
\leqslant & -\frac{\gamma}{2}\left\| \nabla f\left( \bar{x}^k \right) \right\| ^2+\frac{\gamma}{2}\left\| \frac{1}{n}\sum_{i=1}^n{\left( \nabla f_i\left( z_{i}^{k} \right) -\nabla f_i\left( \bar{x}^k \right) \right)} \right\| ^2
\\
\leqslant & -\frac{\gamma}{2}\left\| \nabla f\left( \bar{x}^k \right) \right\| ^2+\frac{\gamma}{2n}\sum_{i=1}^n{\left\| \nabla f_i\left( z_{i}^{k} \right) -\nabla f_i\left( \bar{x}^k \right) \right\| ^2}
\\
\leqslant & -\frac{\gamma}{2}\left\| \nabla f\left( \bar{x}^k \right) \right\| ^2+\frac{\gamma L^2}{2n}\sum_{i=1}^n{\left\| z_{i}^{k}-\bar{x}^k \right\| ^2},
\end{aligned}
\end{equation}
where we have used Assumption~\ref{assumption_smooth} in the last inequality.

Using Cauchy-Schwartz inequality, we can bound $A_2$ by
\begin{equation}
\label{A_2}
\begin{aligned}
& A_2
\\
\leqslant & \gamma \left\| \nabla f\left( \bar{x}^k \right) \right\| \left\| \frac{1}{n}\sum_{i=1}^n{\sum_{j=1}^J{\frac{1}{J}\mathbb{I}\left[ \left\| \nabla f_i\left( z_{i}^{k};j \right) \right\| >C_k \right] \cdot \nabla f_i\left( z_{i}^{k};j \right) \cdot \left( \frac{C_k}{\left\| \nabla f_i\left( z_{i}^{k}; j \right) \right\|}-1 \right)}} \right\| 
\\
\leqslant & \gamma \left\| \nabla f\left( \bar{x}^k \right) \right\| \cdot \frac{1}{n}\sum_{i=1}^n{\sum_{j=1}^J{\frac{1}{J}\mathbb{I}\left[ \left\| \nabla f_i\left( z_{i}^{k};j \right) \right\| >C_k \right] \cdot \varLambda}}
\\
= & \gamma \varLambda \left\| \nabla f\left( \bar{x}^k \right) \right\| \cdot \frac{1}{n}\sum_{i=1}^n{\mathcal{P}_{i}^{k}\left( C_k \right)},
\end{aligned}
\end{equation}
where $\mathcal{P}_{i}^{k}\left( C_k \right) $ denotes the probability of a per-sample gradient being clipped with threshold $C_k$ for node $i$ at iteration $k$.

As a result, substituting~\eqref{A_1} and~\eqref{A_2} into~\eqref{eq_3}, we further have
\begin{equation}
\label{eq_4}
\begin{aligned}
\mathbb{E}_k\left[ f\left( \bar{x}^{k+1} \right) \right] 
\leqslant & f\left( \bar{x}^k \right) -\frac{\gamma}{2}\left\| \nabla f\left( \bar{x}^k \right) \right\| ^2+\frac{\gamma L^2}{2n}\sum_{i=1}^n{\left\| z_{i}^{k}-\bar{x}^k \right\| ^2}
\\
&+\gamma \varLambda \left\| \nabla f\left( \bar{x}^k \right) \right\| \cdot \frac{1}{n}\sum_{i=1}^n{\mathcal{P}_{i}^{k}\left( C_k \right)}
\\
&+\frac{\gamma ^2L}{2}\mathbb{E}_k\left[ \left\| \frac{1}{n}\sum_{i=1}^n{g_{i}^{k}} \right\| ^2 \right] +\frac{\gamma ^2Ld}{2n}\tilde{\sigma}^2\tilde{\sigma}_{k}^{2} .
\end{aligned}
\end{equation}

Taking the total expectation on both sides of~\eqref{eq_4} yields
\begin{equation}
\begin{aligned}
\mathbb{E}\left[ f\left( \bar{x}^{k+1} \right) \right] 
\leqslant & \mathbb{E}\left[ f\left( \bar{x}^k \right) \right] -\frac{\gamma}{2}\mathbb{E}\left[ \left\| \nabla f\left( \bar{x}^k \right) \right\| ^2 \right] 
\\
&+\frac{\gamma L^2}{2n}\sum_{i=1}^n{\mathbb{E}\left[ \left\| z_{i}^{k}-\bar{x}^k \right\| ^2 \right]}+\frac{\gamma ^2L}{2}\mathbb{E}\left[ \left\| \frac{1}{n}\sum_{i=1}^n{g_{i}^{k}} \right\| ^2 \right] 
\\
&+\frac{\gamma ^2Ld}{2n}\tilde{\sigma}^2\tilde{\sigma}_{k}^{2}+\gamma \mathbb{E}\left[ \varLambda \left\| \nabla f\left( \bar{x}^k \right) \right\| \cdot \frac{1}{n}\sum_{i=1}^n{\mathcal{P}_{i}^{k}\left( C_k \right)} \right] .
\end{aligned}
\end{equation}

Summing the above inequality from $k=0$ to $K-1$, we obtain 
\begin{equation}
\label{ineq_2}
\begin{aligned}
\frac{\gamma}{2}\sum_{k=0}^{K-1}{\mathbb{E}\left[ \left\| \nabla f\left( \bar{x}^k \right) \right\| ^2 \right]}
\leqslant & f\left( \bar{x}^0 \right) -f^*+\frac{\gamma L^2}{2}\sum_{k=0}^{K-1}{\frac{1}{n}\sum_{i=1}^n{\mathbb{E}\left[ \left\| z_{i}^{k}-\bar{x}^k \right\| ^2 \right]}}
\\
& +\frac{\gamma ^2L}{2}\sum_{k=0}^{K-1}{\mathbb{E}\left[ \left\| \frac{1}{n}\sum_{i=1}^n{g_{i}^{k}} \right\| ^2 \right]}+\frac{\gamma ^2Ld}{2n}\tilde{\sigma}^2\sum_{k=0}^{K-1}{\tilde{\sigma}_{k}^{2}}
\\
& +\gamma \mathbb{E}\left[ \sum_{k=0}^{K-1}{\varLambda \left\| \nabla f\left( \bar{x}^k \right) \right\| \cdot \frac{1}{n}\sum_{i=1}^n{\mathcal{P}_{i}^{k}\left( C_k \right)}} \right] .
\end{aligned}
\end{equation}

Substituting the upper bound of $\sum_{k=0}^{K-1}{\frac{1}{n}\sum_{i=1}^n{\mathbb{E}\left[ \left\| z_{i}^{k}-\bar{x}^k \right\| ^2 \right]}}$ (c.f.,~\eqref{upper_bound_sum_of_M_k} in Lemma~\ref{Lemma_consensus_error}) into~\eqref{ineq_2}, and multiplying $\frac{2}{\gamma K}$ on both sides, we obtain
\begin{equation}
\label{final_ineq}
\begin{aligned}
& \frac{1}{K}\sum_{k=0}^{K-1}{\mathbb{E}\left[ \left\| \nabla f\left( \bar{x}^k \right) \right\| ^2 \right]}
\\
\leqslant & \frac{2\left( f\left( \bar{x}^0 \right) -f^* \right)}{\gamma K}+\frac{3L^2 \varPsi^2\sum_{i=1}^n{\left\| x_{i}^{0} \right\| ^2}}{\left( 1-q \right) ^2nK}
+\frac{3\gamma ^2L^2 \varPsi^2}{\left( 1-q \right) ^2}\cdot \frac{1}{K}\sum_{k=0}^{K-1}{\cdot \frac{1}{n}\sum_{i=1}^n{\mathbb{E}\left[ \left\| g_{i}^{k} \right\| ^2 \right]}}
\\
& +\gamma L\cdot \frac{1}{K}\sum_{k=0}^{K-1}{\mathbb{E}\left[ \left\| \frac{1}{n}\sum_{i=1}^n{g_{i}^{k}} \right\| ^2 \right]}
+\underset{\text{$\triangleq T_1$: privacy noise term}}{\underbrace{\frac{3\gamma ^2L^2 \varPsi ^2d}{\left( 1-q \right) ^2}\tilde{\sigma}^2\cdot \frac{1}{K}\sum_{k=0}^{K-1}{\tilde{\sigma}_{k}^{2}}+\frac{\gamma Ld}{n}\tilde{\sigma}^2\cdot \frac{1}{K}\sum_{k=0}^{K-1}{\tilde{\sigma}_{k}^{2}}}}
\\
& +\underset{\text{$\triangleq T_2$: bias term}}{\underbrace{2\mathbb{E}\left[ \frac{1}{K}\sum_{k=0}^{K-1}{\varLambda \left\| \nabla f\left( \bar{x}^k \right) \right\| \cdot \frac{1}{n}\sum_{i=1}^n{P_{i}^{k}\left( C_k \right)}} \right] }}.
\end{aligned}
\end{equation}

Substituting $\tilde{\sigma} = \frac{1}{J\mu _{tot}}\sqrt{2\sum_{k=0}^{K-1}{\frac{C_{k}^{2}}{\tilde{\sigma}_{k}^{2}}}}$ from Proposition~\ref{lemma_of_noise_scale} into the above~\eqref{final_ineq}, we can further obtain
\begin{equation}
\label{some_ineq}
\begin{aligned}
& \frac{1}{K}\sum_{k=0}^{K-1}{\mathbb{E}\left[ \left\| \nabla f\left( \bar{x}^k \right) \right\| ^2 \right]}
\\
\leqslant & \frac{2\left( f\left( \bar{x}^0 \right) -f^* \right)}{\gamma K}+\frac{3L^2 \varPsi^2\sum_{i=1}^n{\left\| x_{i}^{0} \right\| ^2}}{\left( 1-q \right) ^2nK} +\frac{3\gamma ^2L^2 \varPsi^2}{\left( 1-q \right) ^2}\cdot \frac{1}{K}\sum_{k=0}^{K-1}{\cdot \frac{1}{n}\sum_{i=1}^n{\mathbb{E}\left[ \left\| g_{i}^{k} \right\| ^2 \right]}}
\\
& +\gamma L\cdot \frac{1}{K}\sum_{k=0}^{K-1}{\mathbb{E}\left[ \left\| \frac{1}{n}\sum_{i=1}^n{g_{i}^{k}} \right\| ^2 \right]} +\left( \frac{6\gamma ^2KL^2\varPsi ^2d}{\left( 1-q \right) ^2J^2\mu _{tot}^{2}}+\frac{2\gamma KLd}{nJ^2\mu _{tot}^{2}} \right) \cdot \underset{\text{$\triangleq T_1$: privacy noise term}}{\underbrace{\frac{1}{K}\sum_{k=0}^{K-1}{\frac{C_{k}^{2}}{\tilde{\sigma}_{k}^{2}}}\cdot \frac{1}{K}\sum_{k=0}^{K-1}{\tilde{\sigma}_{k}^{2}}}}
\\
&+\underset{\text{$\triangleq T_2$: bias term}}{\underbrace{2\mathbb{E}\left[ \frac{1}{K}\sum_{k=0}^{K-1}{\varLambda \left\| \nabla f\left( \bar{x}^k \right) \right\| \cdot \frac{1}{n}\sum_{i=1}^n{\mathcal{P}_{i}^{k}\left( C_k \right)}} \right] }}
,
\end{aligned}
\end{equation}
which completes the proof of Theorem~\ref{main_Theorem}.

\section{Proof of Corollary~\ref{Corollary_after_theorem}}
\label{proof_of_corollary}

\begin{proof}
Substituting $C_k=\Theta \left( \left( \rho _c \right) ^{-\frac{k}{K}} \right) 
$ and $\tilde{\sigma}_k=\Theta \left( \left( \rho _c\cdot \rho _{\mu} \right) ^{-\frac{k}{K}} \right)$ into $T_1$ in~\eqref{some_ineq}, we have
\begin{equation}
\label{involve_1}
\begin{aligned}
T_1&=\Theta \left( \frac{1}{K}\sum_{k=0}^{K-1}{\frac{\left( \rho _{c}^{2} \right) ^{-\frac{k}{K}}}{\left( \rho _{c}^{2}\cdot \rho _{\mu}^{2} \right) ^{-\frac{k}{K}}}} \right) \cdot \Theta \left( \frac{1}{K}\sum_{k=0}^{K-1}{\left( \rho _{c}^{2}\cdot \rho _{\mu}^{2} \right) ^{-\frac{k}{K}}} \right) 
\\
&=\Theta \left( \frac{1}{K}\sum_{k=0}^{K-1}{\left( \rho _{\mu}^{2} \right) ^{\frac{k}{K}}} \right) \cdot \Theta \left( \frac{1}{K}\sum_{k=0}^{K-1}{\left( \rho _{c}^{2}\cdot \rho _{\mu}^{2} \right) ^{-\frac{k}{K}}} \right) 
\\
&=\Theta \left( \frac{1}{K}\cdot \frac{\rho _{\mu}^{2}-1}{\left( \rho _{\mu}^{2} \right) ^{\frac{1}{K}}-1} \right) \cdot \Theta \left( \frac{1}{K}\cdot \frac{1-\rho _{c}^{-2}\rho _{\mu}^{-2}}{1-\left( \rho _{c}^{-2}\rho _{\mu}^{-2} \right) ^{\frac{1}{K}}} \right) .
\end{aligned}
\end{equation}

According to the fact that
$\frac{1}{K}\cdot \frac{\rho _{\mu}^{2}-1}{\left( \rho _{\mu}^{2} \right) ^{\frac{1}{K}}-1}<\frac{\rho _{\mu}^{2}-1}{\ln \left( \rho _{\mu}^{2} \right)}$ 
and $\frac{1}{K}\cdot \frac{1-\rho _{c}^{-2}\rho _{\mu}^{-2}}{1-\left( \rho _{c}^{-2}\rho _{\mu}^{-2} \right) ^{\frac{1}{K}}}\leqslant 1$ for all $K\geqslant 1
$, we have
\begin{equation}
\label{involve_2}
T_1=\mathcal{O}\left( 1 \right) .
\end{equation}

Therefore,~\eqref{some_ineq} becomes
\begin{equation}
\label{gyb}
\begin{aligned}
& \frac{1}{K}\sum_{k=0}^{K-1}{\mathbb{E}\left[ \left\| \nabla f\left( \bar{x}^k \right) \right\| ^2 \right]}
\\
\leqslant & \mathcal{O}\left( \frac{1}{\gamma K} \right) +\frac{1}{\left( 1-q \right) ^2}\cdot \mathcal{O}\left( \frac{1}{K} \right) +\frac{1}{\left( 1-q \right) ^2}\cdot \mathcal{O}\left( \gamma ^2 \right) 
\\
& +\mathcal{O}\left( \gamma \right) +\frac{1}{\left( 1-q \right) ^2}\cdot \mathcal{O}\left( \frac{\gamma ^2K}{J^2\mu _{tot}^{2}} \right) +\mathcal{O}\left( \frac{\gamma K}{nJ^2\mu _{tot}^{2}} \right) +T_2.
\end{aligned}
\end{equation}

Substituting $\gamma =\frac{1}{\sqrt{n}J\mu _{tot}}$ and $\gamma K=\sqrt{n}J\mu _{tot}$ into~\eqref{gyb}, and under our mild assumption of $J\mu _{tot}>\sqrt{n}$, we obtain
\begin{equation}
\frac{1}{K}\sum_{k=0}^{K-1}{\mathbb{E}\left[ \left\| \nabla f\left( \bar{x}^k \right) \right\| ^2 \right]}\leqslant \mathcal{O}\left( \frac{1}{\left( 1-q \right) ^2\sqrt{n}J\mu _{tot}} \right) +T_2,
\end{equation}
which completes the proof of Corollary~\ref{Corollary_after_theorem}.
\end{proof}

\section{Proof of Proposition~\ref{lemma_of_noise_scale}}
\label{proof_of_pro}

\begin{proof}
According to the definition of GDP in Proposition~\ref{formal_def_of_mu}, we can easily know that each node satisfies $\mu_k$-GDP for each iteration in Algorithm~\ref{general_Adpt_PrivSGP_Combine}, where
\begin{equation*}
\mu _k=\frac{C_k}{\tilde{\sigma}\cdot \tilde{\sigma}_k}.
\end{equation*}
Further, based on~\eqref{mu_total}, we can derive that each node satisfies $\hat{\mu}_{tot}$-GDP for overall training process, with
\begin{equation*}
\hat{\mu}_{tot}=\frac{1}{J}\sqrt{\sum_{k=0}^{K-1}{\left( e^{\mu _{k}^{2}}-1 \right)}} \overset{\left( a \right)}{<} \frac{1}{J}\sqrt{2\sum_{k=0}^{K-1}{\mu _{k}^{2}}}
\end{equation*}
where in $(a)$ we used the fact that $e^x-1<2x$ for $0\leqslant x\leqslant 1$.
Given a privacy parameter $\mu_{tot}$ calculated according to $\left( \epsilon ,\delta \right) $ based on~\eqref{privacy_tranfer}, we use it to bound the above $\hat{\mu}_{tot}$, i.e.,
\begin{equation*}
\hat{\mu}_{tot}<\frac{1}{J}\sqrt{2\sum_{k=0}^{K-1}{\mu _{k}^{2}}}=\frac{1}{J}\sqrt{2\sum_{k=0}^{K-1}{\frac{C_{k}^{2}}{\tilde{\sigma}^2\cdot \tilde{\sigma}_{k}^{2}}}}\leqslant \mu _{tot},
\end{equation*}
and we obtain
\begin{equation*}
\tilde{\sigma}\geqslant \frac{1}{J\mu _{tot}}\sqrt{2\sum_{k=0}^{K-1}{\frac{C_{k}^{2}}{\tilde{\sigma}_{k}^{2}}}},
\end{equation*}
which is sufficient to guarantee each node in Algorithm~\ref{general_Adpt_PrivSGP_Combine} to satisfy $\mu _{tot}$-GDP as well as $\left( \epsilon,\delta \right)$-DP.
\end{proof}

\section{Missing Pseudocodes of Some Algorithms}
\label{missed_pseudocode}

In this section, we supplement the pseudocodes of {\algS}, {\algmu} and {\algbase} we missed in the main text, which are given in Algorithm~\ref{Adpt_PrivSGP}, Algorithm~\ref{Adpt_PrivSGP_mu} and Algorithm~\ref{Const_PrivSGP}, respectively.

\begin{algorithm}[H]
      \caption{{\algS}} 
      \label{Adpt_PrivSGP} 
        \begin{algorithmic}[1]
        \STATE \textbf{Initialization:}  DP budget $(\epsilon,\delta)$, $x_{i}^{0}=z_{i}^{0}\in \mathbb{R}^d$, $w_i^0=1$, step size $\gamma > 0$, total number of iterations $K$, initial clipping threshold $C_0$ and hyper-parameter $\rho_c$.
        \FOR{$k=0,1,...,K-1$, at node $i$,}
        \STATE Randomly samples a local training data $\xi_i^k$ with the sampling probability $\frac{1}{J}$;
        \STATE Computes stochastic gradient at $z_i^k$: $\nabla f_i(z_i^k;\xi_i^k)$;
        \STATE Calculates the clipping bound by: $C_k=C_0\cdot \left( \rho _c \right) ^{-\frac{k}{K}}$; 
        \STATE Clips the stochastic gradient: 
        \begin{equation*}
        g_{i}^{k}=\mathrm{Clip}\left( \nabla f_i\left( z_{i}^{k};\xi _{i}^{k} \right) ;C_k \right) =\nabla f_i\left( z_{i}^{k};\xi _{i}^{k} \right) \cdot \min \left( 1,\frac{C_k}{\left\| \nabla f_i\left( z_{i}^{k};\xi _{i}^{k} \right) \right\|} \right) ;
        \end{equation*}
        \STATE Calculates the DP noise variance: $\sigma _k=\frac{C_0}{\bar{\mu}}\cdot \left( \rho _c \right) ^{-\frac{k}{K}}$ ;
        \STATE Draws randomized noise $N_i^k$ from the Gaussian distribution: $N_{i}^{k}\sim \mathcal{N}\left( 0,\sigma_k^2\mathbb{I}_d \right)$;
        \STATE Differentially private local SGD: 
        \begin{equation*}
        x_{i}^{k+\frac{1}{2}}=x_{i}^{k}-\gamma (g_{i}^{k}+N_{i}^{k});
        \end{equation*}
        \STATE Sends $\left( x_i^{k+\frac{1}{2}}, w_i^k \right)$ to all out-neighbors and receives $\left( x_j^{k+\frac{1}{2}}, w_j^k \right)$ from all in-neighbors ;
        \STATE Updates $x_i^{k+1}$ by: \quad $x_{i}^{k+1}=\sum_{j=1}^n{P_{i,j}^{k}x_{j}^{k+\frac{1}{2}}}$ ;
        \STATE Updates $w_i^{k+1}$ by: \quad $w_{i}^{k+1}=\sum_{j=1}^n{P_{i,j}^{k}w_{j}^{k}}$ ;
        \STATE Updates $z_i^{k+1}$ by: \quad     
        $z_{i}^{k+1}=x_{i}^{k+1}/w_i^{k+1}$ .
        \ENDFOR
    \end{algorithmic}
\end{algorithm}

\begin{algorithm}[H]
      \caption{{\algmu}} 
      \label{Adpt_PrivSGP_mu} 
        \begin{algorithmic}[1]
        \STATE \textbf{Initialization:}  DP budget $(\epsilon,\delta)$, $x_{i}^{0}=z_{i}^{0}\in \mathbb{R}^d$, $w_i^0=1$, step size $\gamma > 0$, total number of iterations $K$, fixed clipping threshold $\bar{C}$ and hyper-parameter $\rho_\mu$.
        \FOR{$k=0,1,...,K-1$, at node $i$,}
        \STATE Randomly samples a local training data $\xi_i^k$ with the sampling probability $\frac{1}{J}$;
        \STATE Computes stochastic gradient at $z_i^k$: $\nabla f_i(z_i^k;\xi_i^k)$;
        \STATE Clips the stochastic gradient: 
        \begin{equation*}
        g_{i}^{k}=\mathrm{Clip}\left( \nabla f_i\left( z_{i}^{k};\xi _{i}^{k} \right) ;\bar{C} \right) =\nabla f_i\left( z_{i}^{k};\xi _{i}^{k} \right) \cdot \min \left\{ 1,\frac{\bar{C}}{\left\| \nabla f_i(z_{i}^{k};\xi _{i}^{k}) \right\|} \right\} ;
        \end{equation*}
        \STATE Calculates the DP noise variance: $\sigma _k=\frac{\bar{C}}{\mu _0}\cdot \left( \rho _{\mu} \right) ^{-\frac{k}{K}}$ ;
        \STATE Draws randomized noise $N_i^k$ from the Gaussian distribution: $N_{i}^{k}\sim \mathcal{N}\left( 0,\sigma_k^2\mathbb{I}_d \right)$ ;
        \STATE Differentially private local SGD:
        \begin{equation*}
        x_{i}^{k+\frac{1}{2}}=x_{i}^{k}-\gamma (g_{i}^{k}+N_{i}^{k}) ;
        \end{equation*}
        \STATE Sends $\left( x_i^{k+\frac{1}{2}}, w_i^k \right)$ to all out-neighbors and receives $\left( x_j^{k+\frac{1}{2}}, w_j^k \right)$ from all in-neighbors ;
        \STATE Updates $x_i^{k+1}$ by: \quad $x_{i}^{k+1}=\sum_{j=1}^n{P_{i,j}^{k}x_{j}^{k+\frac{1}{2}}}$ ;
        \STATE Updates $w_i^{k+1}$ by: \quad $w_{i}^{k+1}=\sum_{j=1}^n{P_{i,j}^{k}w_{j}^{k}}$ ;
        \STATE Updates $z_i^{k+1}$ by: \quad     
        $z_{i}^{k+1}=x_{i}^{k+1}/w_i^{k+1}$ .
        \ENDFOR
        \end{algorithmic}
\end{algorithm}

\begin{algorithm}[H]
      \caption{{\algbase}} 
      \label{Const_PrivSGP} 
        \begin{algorithmic}[1]
        \STATE \textbf{Initialization:}  DP budget $(\epsilon,\delta)$, $x_{i}^{0}=z_{i}^{0}\in \mathbb{R}^d$, $w_i^0=1$, step size $\gamma > 0$, total number of iterations $K$, and fixed clipping bound $\bar{C}$.
        \FOR{$k=0,1,...,K-1$, at node $i$,}
        \STATE Randomly samples a local training data $\xi_i^k$ with the sampling probability $\frac{1}{J}$;
        \STATE Computes stochastic gradient at $z_i^k$: $\nabla f_i(z_i^k;\xi_i^k)$;
        \STATE Clips the stochastic gradient:
        \begin{equation}
        g_{i}^{k}=\mathrm{Clip}\left( \nabla f_i\left( z_{i}^{k};\xi _{i}^{k} \right) ;\bar{C} \right) =\nabla f_i\left( z_{i}^{k};\xi _{i}^{k} \right) \cdot \min \left\{ 1,\frac{\bar{C}}{\left\| \nabla f_i(z_{i}^{k};\xi _{i}^{k}) \right\|} \right\} ;
        \end{equation}
        \STATE Draws randomized noise $N_i^k$ from the Gaussian distribution: 
        \begin{equation*}
        N_{i}^{k}\sim \mathcal{N}\left( 0, \bar{\sigma}^2\mathbb{I}_d \right),
        \end{equation*}
        where $\bar{\sigma}=\frac{\bar{C}}{\bar{\mu}}$;
        \STATE Differentially private local SGD:
        \begin{equation*}
        x_{i}^{k+\frac{1}{2}}=x_{i}^{k}-\gamma (g_{i}^{k}+N_{i}^{k}) ;
        \end{equation*}
        \STATE Sends $\left( x_i^{k+\frac{1}{2}}, w_i^k \right)$ to all out-neighbors and receives $\left( x_j^{k+\frac{1}{2}}, w_j^k \right)$ from all in-neighbors ;
        \STATE Updates $x_i^{k+1}$ by: \quad $x_{i}^{k+1}=\sum_{j=1}^n{P_{i,j}^{k}x_{j}^{k+\frac{1}{2}}}$ ;
        \STATE Updates $w_i^{k+1}$ by: \quad $w_{i}^{k+1}=\sum_{j=1}^n{P_{i,j}^{k}w_{j}^{k}}$ ;
        \STATE Updates $z_i^{k+1}$ by: \quad     
        $z_{i}^{k+1}=x_{i}^{k+1}/w_i^{k+1}$ .
        \ENDFOR
    \end{algorithmic}
\end{algorithm}

\section{Missing Definition of Time-varying Directed Exponential Graph}
\label{missing_definition_of_graph}

We supplement the definition of time-varying directed exponential graph~\cite{assran2019stochastic} we missed in the main text. Specifically, $n$ nodes are ordered sequentially with their rank $0$,$1$...,$n-1$, and each node has out-neighbours that are $2^0$,$2^1$,...,$2^{\lfloor \log _2\left( n-1 \right) \rfloor}$ hops away. Each node cycles through these out-neighbours, and only transmits messages to one of its out-neighbours at each iteration. For example, each node sends message to its $2^0$-hop out-neighbour at iteration $k$, and to its $2^1$-hop out-neighbour at iteration $k+1$, and so on. The above procedure will be repeated within the list of out-neighbours. 
Note that each node only sends and receives a single message at each iteration.

\section{Missing Experimental Setup}

\subsection{Hyper-parameter}
\label{experimental_set_up_parameter}
We supplement the setup about the hyper-parameters we missed in the main text.
Constant clipping bound $\bar{C}$, initial clipping bound $C_0$, $\rho_c$ and $\rho_\mu$ are given as follows:
\begin{itemize}
    \item For {\algbase} (c.f., Algorithm~\ref{Const_PrivSGP}), we test different value of constant clipping bound $\bar{C}$ from the set $\{5, 4, 3, 2, 1\}$ (resp. $\{2.5, 2, 1.5, 1, 0.5\}$) for ResNet-18 training (resp. shallow CNN model training), and choose the one with the best model performance for comparison with other algorithms;
    \item For {\algS} (c.f., Algorithm~\ref{Adpt_PrivSGP}), we set the initial clipping bound $C_0$ to $7$ (resp. $4$) for ResNet-18 training (resp. shallow CNN model training), test different value of $\rho_c$ in the set $1/\rho_c \in \{0.1, 0.2, 0.3, 0.4, 0.5, 0.6, 0.7, 0.8\}$ and choose the one with the best model performance for comparison with other algorithms;
    \item For {\algmu} (c.f., Algorithm~\ref{Adpt_PrivSGP_mu}), we search the constant clipping bound $\bar{C}$ from the set $\{5, 4, 3, 2, 1\}$ (resp. $\{2.5, 2, 1.5, 1, 0.5\}$) for ResNet-18 training (resp. shallow CNN model training), test different value of $\rho_\mu$ in the set $1/\rho_\mu \in \{0.1, 0.2, 0.3, 0.4, 0.5, 0.6, 0.7, 0.8\}$, and choose such a combination ($\bar{C}$ and $\rho_\mu$ ) that achieves the best model performance for comparison with other algorithms;
    \item  For {\alg} (c.f., Algorithm~\ref{Adpt_PrivSGP_combine}), we set the initial clipping bound $C_0$ to $7$ (resp. $4$) for ResNet-18 training (resp. shallow CNN model training), search $\rho_c$ and $\rho_\mu$ in the sets $1/\rho_c, 1/\rho_\mu \in \{0.1, 0.2, 0.3, 0.4, 0.5, 0.6, 0.7, 0.8\}$ and choose the combination with the best model performance for comparison with other algorithms.
\end{itemize}

\subsection{The number of training samples for each node}
\label{missing_experimental_set_up_node_number}
In this part, we supplement the experimental setup of experiments over different node numbers we missed in the main text.
The 50,000 training images of the Cifar-10 dataset are evenly divided into 40 nodes, so each node has 1250 training images, i.e., the number of training samples $J=1250$. In the four experiments with different numbers of nodes, the number of training samples for each node was kept consistent.

\section{Missing Experimental Results}
\label{missed_plots}
In this section, we supplement the experimental results we missed in the main text.

Figure~\ref{refer_cifar_1}-\ref{refer_cifar_2} show the plots of training loss/testing accuracy versus iteration for different algorithms when training ResNet-18 on Cifar-10 dataset, under the different values of privacy budget $\epsilon$.

\begin{figure}[!htpb]
  \centering
  \subfigure[$\epsilon=3, \delta=10^{-4}$]{
    \includegraphics[width=0.473\linewidth]{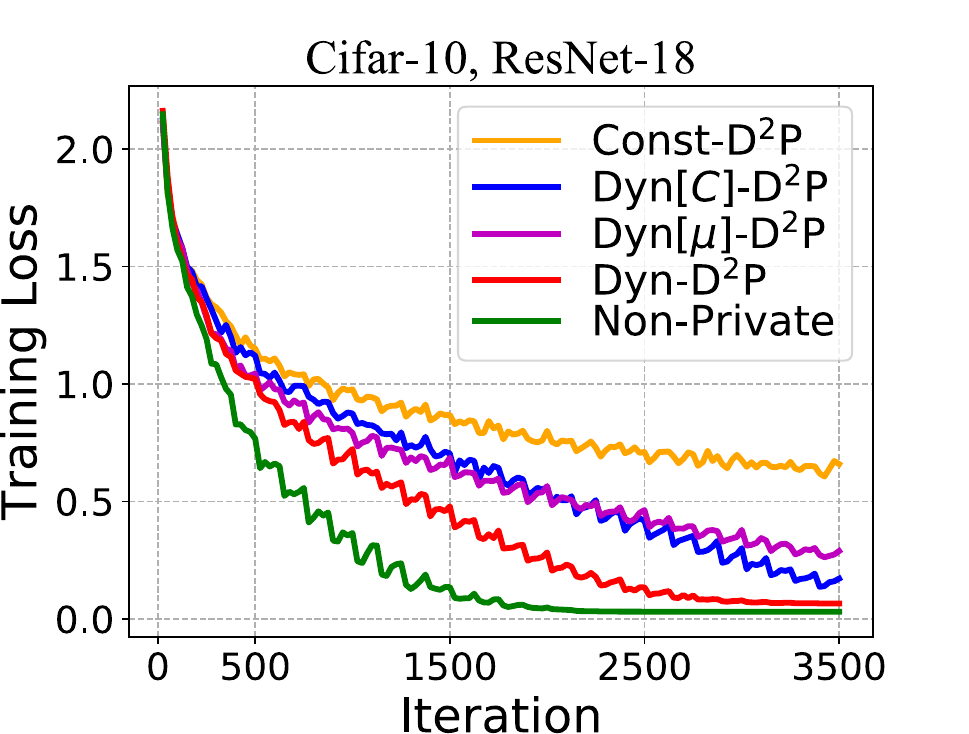}
  }
  \subfigure[$\epsilon=3, \delta=10^{-4}$]{
    \includegraphics[width=0.473\linewidth]{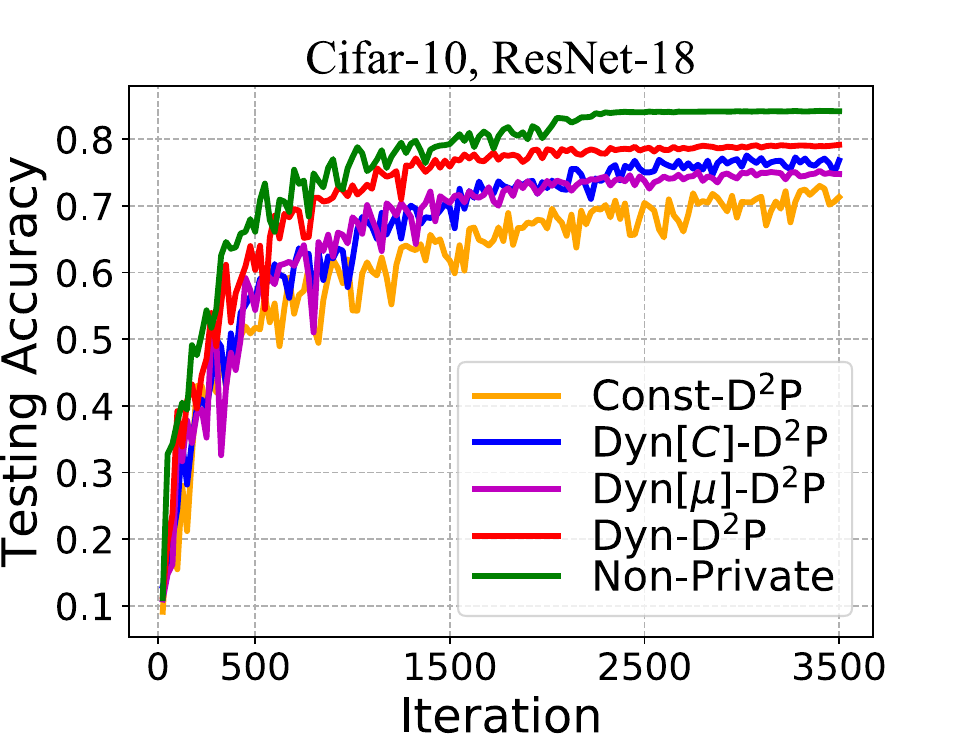}
  }
  \caption{Comparison of convergence performance for five algorithms under $(3, 10^{-4})$-DP guarantee for each node, when training ResNet-18 on Cifar-10 dataset.}
\label{refer_cifar_1}
\end{figure}

\begin{figure}[!htpb]
  \centering
  \subfigure[$\epsilon=1, \delta=10^{-4}$]{
    \includegraphics[width=0.473\linewidth]{Figures/Cifar_loss_epsilon_1.pdf}
  }
  \subfigure[$\epsilon=1, \delta=10^{-4}$]{
    \includegraphics[width=0.473\linewidth]{Figures/Cifar_acc_epsilon_1.pdf}
  }
  \caption{Comparison of convergence performance for five algorithms under $(1, 10^{-4})$-DP guarantee for each node, when training ResNet-18 on Cifar-10 dataset.}
\end{figure}

\begin{figure}[!htpb]
  \centering
  \subfigure[$\epsilon=0.3, \delta=10^{-4}$]{
    \includegraphics[width=0.473\linewidth]{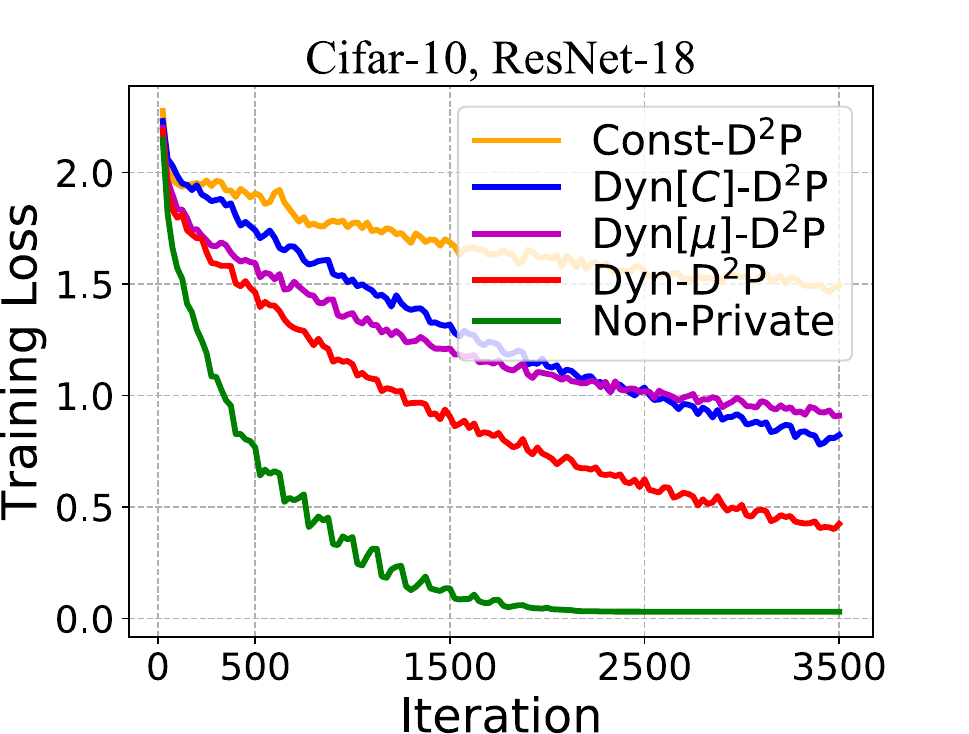}
  }
  \subfigure[$\epsilon=0.3, \delta=10^{-4}$]{
    \includegraphics[width=0.473\linewidth]{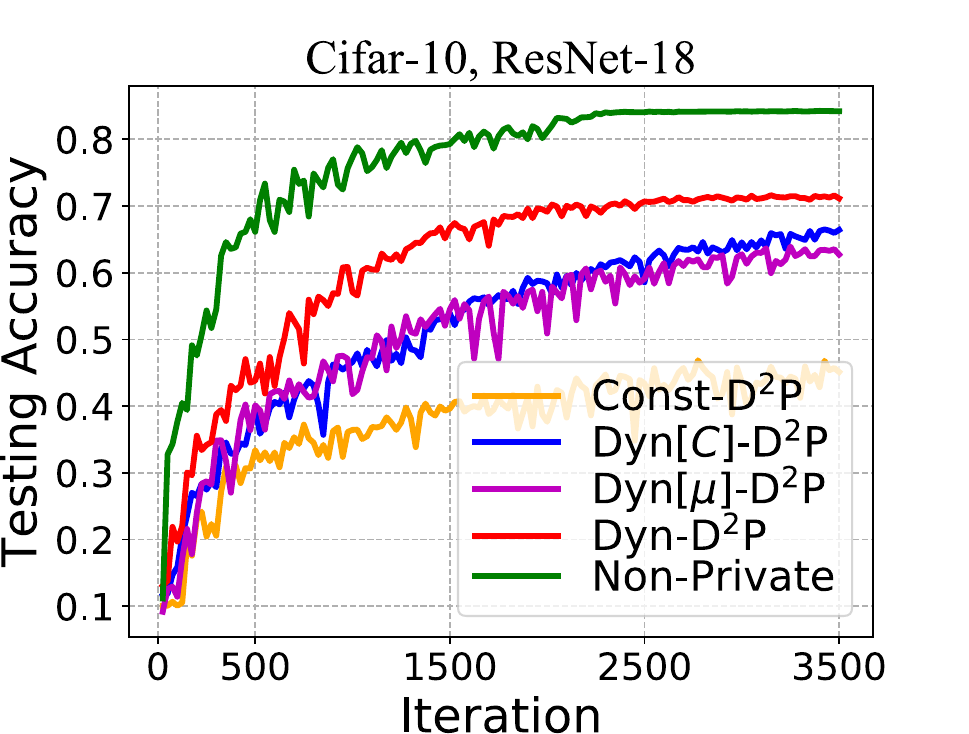}
  }
  \caption{Comparison of convergence performance for five algorithms under $(0.3, 10^{-4})$-DP guarantee for each node, when training ResNet-18 on Cifar-10 dataset.}
\label{refer_cifar_2}
\end{figure}


For the experiments of training shallow CNN on FashionMnist dataset, we present the plots of training loss/testing accuracy versus iteration in Figure~\ref{refer_mnist_1},~\ref{refer_mnist_2} and~\ref{refer_mnist_3}, with the different values of privacy budget $\epsilon$. It can be observed that our {\alg} and two by-product algorithms ({\algS} and {\algmu}) consistently outperform the baseline algorithm {\algbase} which employs constant noise. Among these above DP algorithms, {\alg} achieves the highest model accuracy while maintaining the same level of privacy protection. Furthermore, a comparison of experimental results with different privacy budgets (i.e., varying values of $\epsilon$) shows that the stronger the level of required privacy protection (i.e., the smaller the value of budget $\epsilon$), the more pronounced the advantage in model accuracy with our dynamic noise strategy. In particular, when setting a small $\epsilon=0.3$ which implies a strong privacy guarantee, our {\alg} achieves a $39\%$ higher model accuracy than {\algbase} employing constant noise strategies. These results validate the superiority of our dynamic noise approach.

\begin{figure}[!htpb]
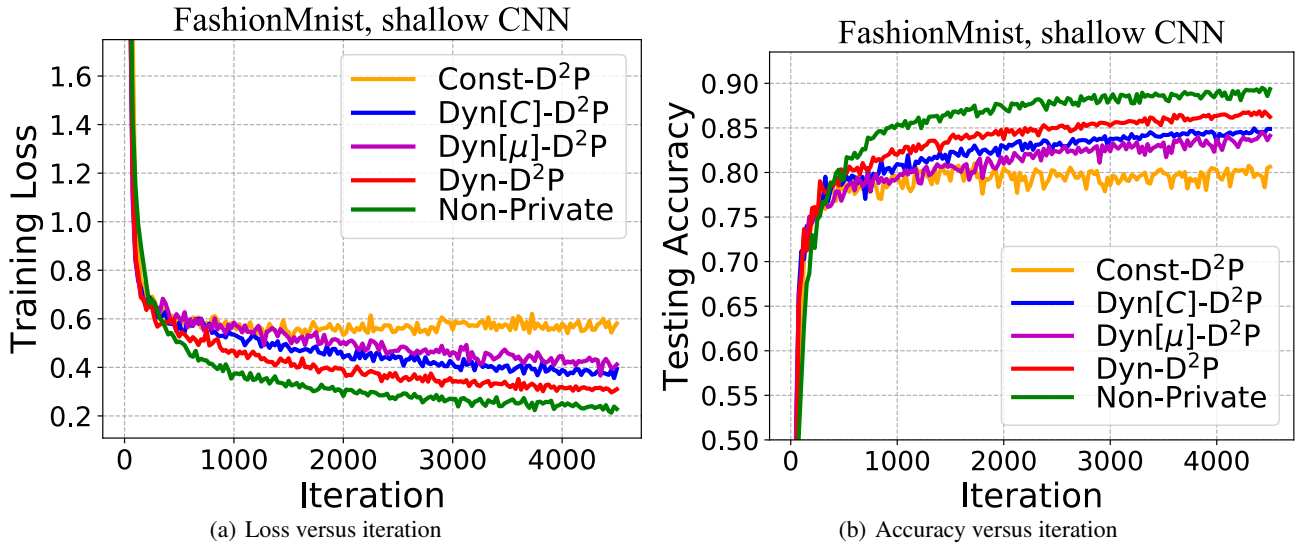

  \centering
  \subfigure[Loss versus iteration]{
    \includegraphics[width=0.473\linewidth]{Figures/Mnist_loss_epsilon_3.pdf}
  }
  \subfigure[Accuracy versus iteration]{
    \includegraphics[width=0.473\linewidth]{Figures/Mnist_acc_epsilon_3.pdf}
  }
  \caption{Comparison of convergence performance for five algorithms under $(3, 10^{-4})$-DP guarantee for each node, when training shallow CNN on FashionMnist dataset.}
\label{refer_mnist_1}
\end{figure}

\begin{figure}[!htpb]
  \centering
  \subfigure[$\epsilon=1, \delta=10^{-4}$]{
    \includegraphics[width=0.473\linewidth]{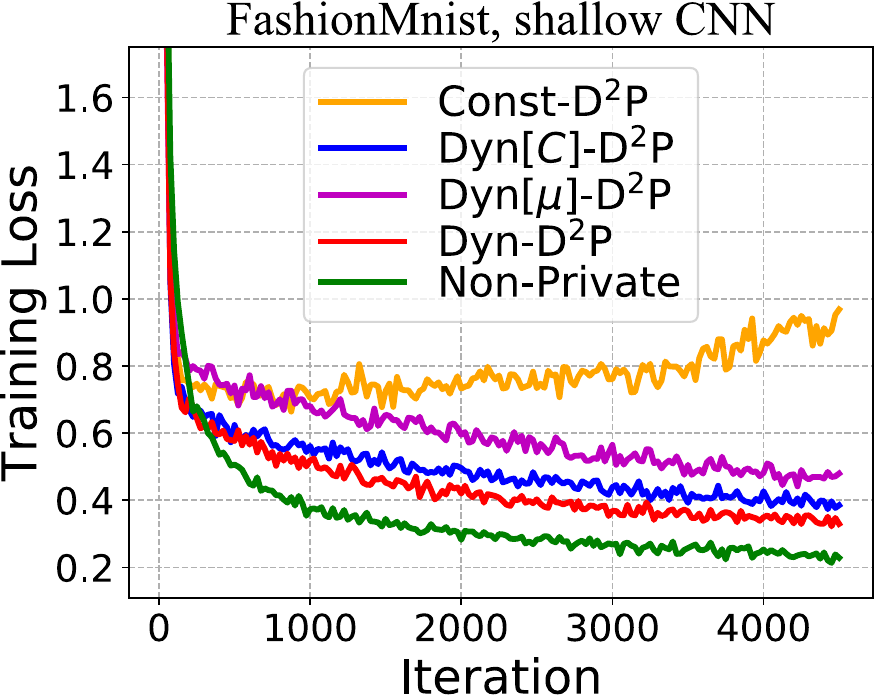}
  }
  \subfigure[$\epsilon=1, \delta=10^{-4}$]{
    \includegraphics[width=0.473\linewidth]{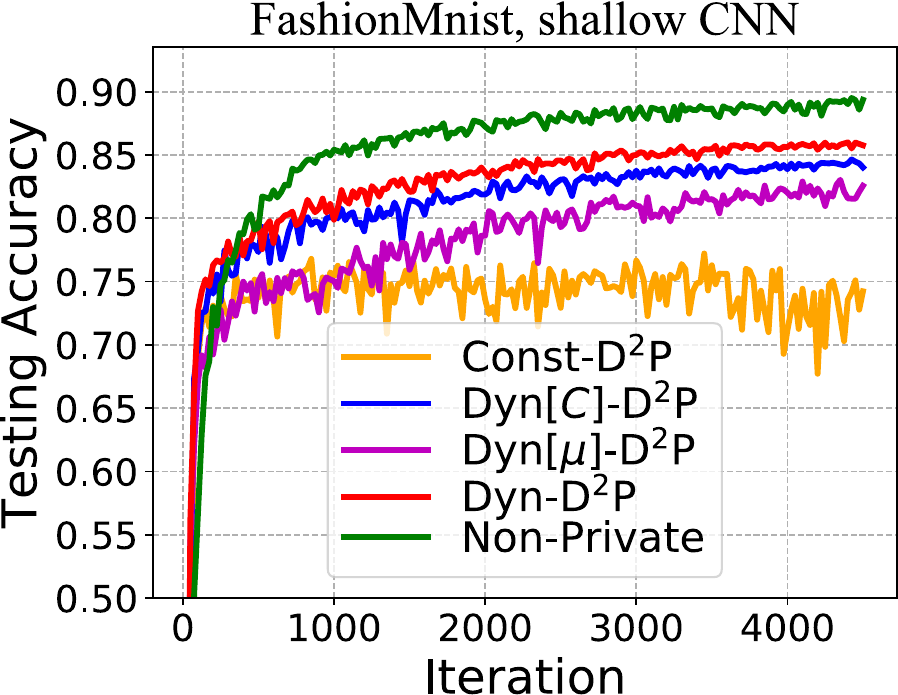}
  }
  \caption{Comparison of convergence performance for five algorithms under $(1, 10^{-4})$-DP guarantee for each node, when training shallow CNN on FashionMnist dataset.}
  \label{refer_mnist_2}
\end{figure}

\begin{figure}[!htpb]
  \centering
  \subfigure[$\epsilon=0.3, \delta=10^{-4}$]{
    \includegraphics[width=0.473\linewidth]{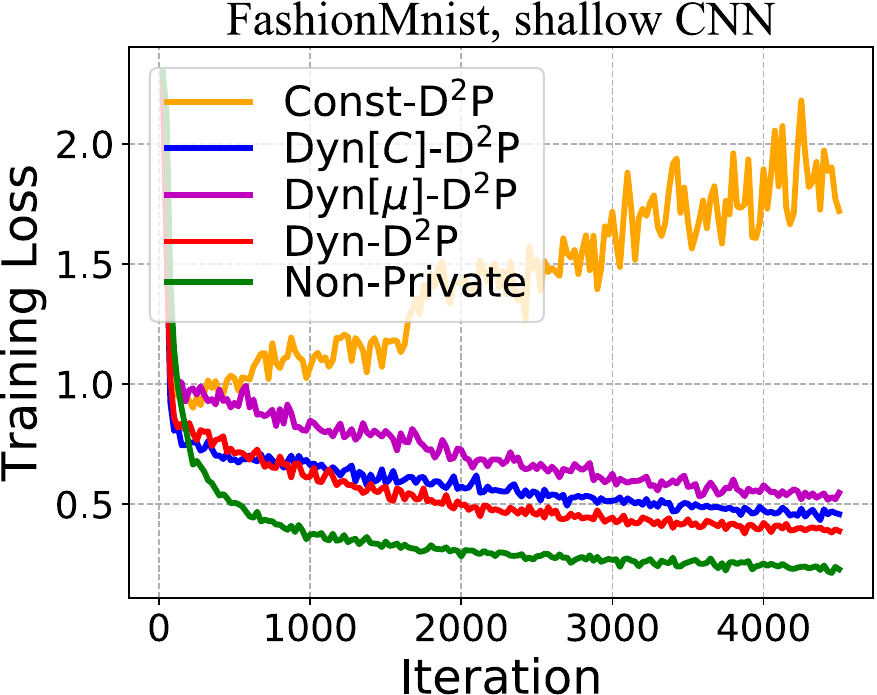}
  }
  \subfigure[$\epsilon=0.3, \delta=10^{-4}$]{
    \includegraphics[width=0.473\linewidth]{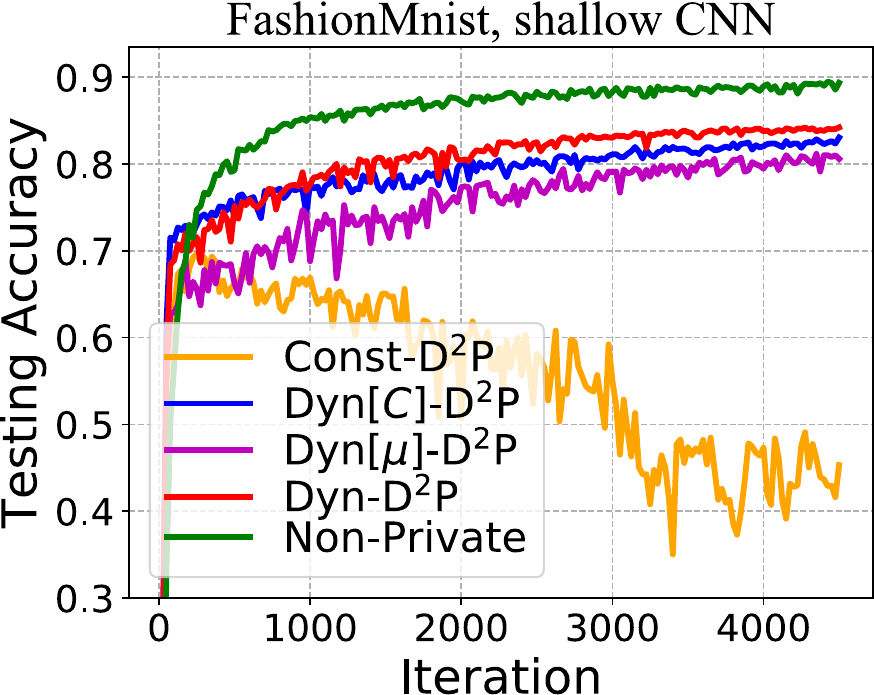}
  }
  \caption{Comparison of convergence performance for five algorithms under $(0.3, 10^{-4})$-DP guarantee for each node, when training shallow CNN on FashionMnist dataset.}
\label{refer_mnist_3}
\end{figure}

\end{document}